\renewcommand{\le}{\leqslant}
\renewcommand{\ge}{\geqslant}
\renewcommand{\hat}{\widehat}
\newtheorem{theorem}{Theorem}[section]
\newtheorem{corollary}{Corollary}[theorem]
\newtheorem{lemma}[theorem]{Lemma}
\newtheorem{assumption}[theorem]{Assumption}
\newtheorem{definition}[theorem]{Definition}
\newtheorem{proposition}[theorem]{Proposition}
\pgfplotsset{
	table/search path={plot_figures},
}
\pgfplotsset{compat=1.14}
\begin{document}

\title{Non-Bayesian Social Learning \\ with Uncertain Models }

\author{ James~Z.~Hare\textsuperscript{*},~C\'{e}sar A.~Uribe\textsuperscript{*},~Lance~Kaplan~\IEEEmembership{Fellow,~IEEE,}~Ali~Jadbabaie~\IEEEmembership{Fellow,~IEEE}
	\thanks{J. Hare and L. Kaplan are with the Signal and Image processing branch of the US Army Research Laboratory, Adelphi, MD 20783 USA (e-mail: james.z.hare31@gmail.com, lance.m.kaplan@us.army.mil). C.A. Uribe and A. Jadbabaie are with the Laboratory for Information and Decision Systems, and the Institute for Data, Systems, and Society, Massachusetts Institute of Technology, Cambridge, MA 02139 USA (email: cauribe@mit.edu, jadbabai@mit.edu). This research was sponsored by a Vannevar Bush Fellowship and OSD LUCI programs. }
	\thanks{\textsuperscript{*}JZH and CAU contributed equally.}
	}

\maketitle

\begin{abstract}
 Non-Bayesian social learning theory provides a framework that models distributed inference for a group of agents interacting over a social network. In this framework, each agent iteratively forms and communicates beliefs about an unknown state of the world with their neighbors using a learning rule. Existing approaches assume agents have access to precise statistical models (in the form of likelihoods) for the state of the world. However in many situations, such models must be learned from finite data. We propose a social learning rule that takes into account uncertainty in the statistical models using second-order probabilities. Therefore, beliefs derived from uncertain models are sensitive to the amount of past evidence collected for each hypothesis. We characterize how well the hypotheses can be tested on a social network, as consistent or not with the state of the world. We explicitly show the dependency of the generated beliefs with respect to the amount of prior evidence. Moreover, as the amount of prior evidence goes to infinity, learning occurs and is consistent with traditional social learning theory.

\end{abstract}

\begin{IEEEkeywords}
	non-Bayesian Social Learning, Uncertainty, Distributed Inference, Social Networks
\end{IEEEkeywords}

\IEEEpeerreviewmaketitle

\section{Introduction}
The theory of \emph{Non-Bayesian Social Learning} \cite{JMST2012} has gained increasing attention over the past few years as a scalable approach that models distributed inference of a group of agents interacting over a social network. Individually, each agent in the network may not be able to infer the true state of the world. Also, agents may only observe a small fraction of the total information, leading to conflicting beliefs. Additionally, the agent's measurement process or sensing modalities may lead to ambiguous decisions, to further hinder the inference problem. Thus, non-Bayesian social learning theory provides a framework that allows for heterogeneous data aggregation, enabling every agent in the network to form a consensus belief on the true state of the world. 

In this framework, each agent repeatedly forms and communicates their beliefs about an unknown state of the world with their neighbors using a social learning rule and the likelihood of a new observation conditioned on predefined statistical models. 
The social learning rule assumes \textit{bounded rationality}, i.e., the beliefs of the agent's neighbors are sufficient statistics, also known as \textit{imperfect recall}~\cite{MTJ18}, which considerably simplifies computing the joint beliefs. Calculating the joint beliefs does not require knowledge of the network structure, inter-dependencies, or historical beliefs of every agent in the network as in \textit{Bayesian social learning} theory~\cite{GK2003, ADLO2011, KT2013, RJM2017}. Furthermore, imperfect recall has been shown to guarantee the agents' beliefs  converge to the global Bayesian result almost surely~\cite{JMST2012}. 

One of the major assumptions in the current literature, is that the statistical models of each hypothesis are known precisely. This assumption requires that the agents collect a sufficiently large set of labeled training data to accurately model the parameters of the statistical models. However, in some situations, (e.g. data is too expensive/impossible to collect or the measurement process is imprecise) the agents may only receive labeled data for a subset of states, or an insufficient amount of training data, which leads to \textit{uncertain} model parameters.

In this work, we present a new non-Bayesian social learning method that takes into account uncertainties in the statistical models (i.e., hypotheses or likelihood functions). Classically, inferences are made by normalizing the statistical models over the set of hypotheses. In the uncertain case, the amount of prior evidence for each hypothesis may vary, causing the uncertain models to change significantly, making them incommensurable. We propose a generalized model that reflects the amount of prior evidence collected. We build up our results from the concept of uncertain likelihood ratios for decision making under uncertainty~\cite{J2016,R1984}. This allows us to evaluate the consistency of the prior evidence with the observation sequence to judge each hypothesis on its own merit. We study the convergence properties of the proposed method for two standard social aggregating rules, \emph{log-linear}~\cite{MTJ18} and \emph{DeGroot}~\cite{JMST2012}. We show that when the agents have a finite amount of prior evidence, the agents' beliefs asymptotically converge to a finite value between zero and infinity, which represents the consistency of the hypothesis with respect to the ground truth. Furthermore, we show that we can exactly quantify the point of convergence for update rules based on log-linear aggregation. Finally, we show that as the amount of prior evidence grows unboundedly, the beliefs of every hypothesis inconsistent with the ground truth converge to zero. This indicates that learning is possible with uncertain models and is consistent with classical non-Bayesian social learning theory. 

The remainder of this paper is organized as follows. First, in Section \ref{sec:LR} we present a review of the current literature in non-Bayesian social learning theory and uncertainty modeling. Then, we describe the problem and main results in Section \ref{sec:PF}. Next, we derive the uncertain statistical models in Section \ref{sec:NULF}. In Section \ref{sec:DNBLWUL}, we implement the uncertain models into the log-linear update rule and formally prove the main result. Then in Section \ref{sec:DG_Update}, we study the properties of the DeGroot-style update rule with the uncertain likelihood ratio. Finally, we provide a numerical analysis in Section \ref{sec:SIM} to empirically validate our results and conclude the paper in Section \ref{sec:Conclusion}.

\textbf{Notation:} Bold symbols represent a vector/matrix, while a non-bold symbol represents its element. We use the indexes $i$ and $j$ to represent agents, $t$ to constitute the time step, and $k$ to index the category. We use $[\mathbf{A}]_{ij}$ to represent the entry of matrix $\mathbf{A}$'s $i$th row and $j$th column. We denote $X\overset{P}{\to}Y$ to represent that the sequence $X$ converges in probability to $Y$. Furthermore, we abbreviate the terminology almost surely by a.s. and independent identically distributed as i.i.d..

\section{Literature Review} \label{sec:LR}

\subsection{Non-Bayesian Social Learning}

Much of the learning algorithms developed in the literature have been derived using distributed optimization strategies for a group of agents, which typically utilize gradient-decent methods \cite{YYZS2018}. These approaches construct their decentralized algorithm using a consensus strategy \cite{NO2009, KM2011, KMR2012} or a diffusion strategy \cite{LS2007, CS2012, CS2015p1, CS2015p2} to ensure that the agents learn the true state. At the same time, non-Bayesian social learning methods \cite{JMST2012} were developed to perform distributed inference of a true state using a DeGroot-style \cite{D1974} (arithmetic average) learning rule, where it has been shown in \cite{SJ2013} that the Bayesian learning approach is linked to the distributed optimization framework. Since then, this learning rule has been studied in strongly-connected and weakly-connected graphs which characterized the beliefs rate of convergence \cite{MJRT2013} and the effects of influential agents on the resulting beliefs \cite{SYS2017}, respectively. Furthermore, this rule has been identified as a boundary condition that ensures learning \cite{MTJ18}. 

The DeGroot-style learning rule was then extended by a stream of papers that studied a geometric average learning rule known as the log-linear rule \cite{RT2010, RMJ2014, SRJ2015, NOU2015, LJS2018}. These works found that the agents will converge to the ``Bayesian Peer Influence'' heuristic \cite{LR2018} in finite time for fixed graphs \cite{MTJ18, SRJ2015}, time-varying undirected graphs \cite{NOU2017}, and time-varying directed graphs \cite{NOU2015, NOU2016}. Much of the focus has been on developing learning rules that improve the convergence rate of the beliefs \cite{NOU2015, NOU2016}. This has lead to the development of the log-linear learning rule with one-step memory \cite{RMJ2014, NOU2017}, observation reuse \cite{BT2018}, and the accumulation of all observations \cite{SV2018}. However, the common assumption in the literature is that the likelihood functions are known precisely. Thus, this paper studies the Log-linear and DeGroot-style learning rules with uncertain models. 

\subsection{Uncertainty Models}
Modeling the uncertainty in statistical models has been approached from many different philosophies, including possibility theory \cite{DP2001,K2005,DP2012}, probability intervals \cite{W1996, W1997, B2005}, and belief theory \cite{S1976, SK1994}. These approaches extend traditional probability calculus to encompass uncertainty into the model parameters. This was then extended to the theory of subjective logic~\cite{J2016}, which constructs a subjective belief of the model that can be mapped into a second-order probability distribution. 

Second-order probability distributions \cite{GS1982, C1996} are typically modeled as the conjugate prior of the first-order distribution, which does not complicate the overall analysis and allows for a reduction in uncertainty as more information becomes available. In particular, an example of a second-order distribution is the Dirichlet distribution who's shape is governed by the amount of prior evidence collected. This has led to the development of the imprecise Dirichlet process \cite{W1996, DP2001, B2005}, which allows the likelihood parameters to be modeled within upper and lower probability bounds. 

From a Bayesian point of view, this approach was also studied by constructing the likelihood based on the posterior predictive distribution \cite{R1984, M1994}. This lead to many approaches on how to correctly construct the prior distribution to provide non-informative information and allow the posterior distributions to be data-dominated \cite{TGM2009} (see \cite{KW1996} for a detailed review). However, these studies did not consider the problem of developing a prior based on the amount of prior information available. In this work we utilize the Bayesian point of view which computes the likelihood based on the posterior predictive distribution, while borrowing concepts from subjective logic to model the prior Dirichlet distribution. 

\section{Problem Formulation, Algorithms and Results} \label{sec:PF}

\subsection{Signals, Hypotheses, and Uncertain Models} \label{sec:pf_ahu}

Consider a network of $m$ agents interacting over a social network, who are trying to collectively infer and agree on the \textit{unknown} state of the world $\theta^* \in \boldsymbol{\Theta}$, where $\boldsymbol{\Theta}=\{\theta_1,...,\theta_S\}$ is the set of possible states of the world. The agents gain information about the state $\theta^*$ via a sequence of realizations of an i.i.d. random variable conditioned on the state of the world being~$\theta^*$. Thus, given such observations, the agents seek to identify a hypothesis (i.e., a distribution for the random variable generating the observations), that best explains the observations and therefore the state of the world.

Each agent $i$ seeks to infer the underlying state of the world $\theta^*$ by sequentially collecting independent private signals $\{\omega_{it}\}_{t \geq { 1}}$, with $\omega_{it} \in \boldsymbol{\Omega} = \{1,\dots,K\}$ and $K\ge 2$ possible mutually exclusive outcomes, where the probability of observing an outcome $k\in\boldsymbol{\Omega}$ is $\pi_{k{i \theta^*}}$. Moreover, an agent keeps track of these realizations via a histogram $\mathbf{n}_{it}=\{n_{i1t},...,n_{iKt}\}$, s.t. $\sum_{k=1}^K n_{ikt} = t$ and $n_{ikt}$ is the number of occurrences of category $k$ up to time $t$. 

The vector $\mathbf{n}_{it}$ is a realization of $t$ draws from a multinomial distribution with parameters $\boldsymbol{\pi}_{i\theta^*}$. We call this distribution $P_{i\theta^*}$. However, our main assumption is that agents do not have a precise statistical model for the possible states of the world, i.e., the values of $ \{\boldsymbol{\pi}_{i\theta}\}_{\forall \theta \in \Theta}$ are partially unknown by the agents. Only limited information is available for each possible state of the world and decisions are made over \textit{uncertain likelihood models}. We will assume that agents construct these uncertain likelihood models from available prior partial information acquired via private signals for each possible state of the world. For a hypothesis $\theta$, an agent $i$ has available $R_{i\theta}$ independent trials. This provides the agent with a set of counts $\mathbf{r}_{i\theta}=\{r_{i1\theta},...,r_{iK\theta}\}$, denoted as the \emph{prior evidence} of hypothesis $\theta$, where $r_{ik\theta}\in [0, \infty)$ is the number of occurrences of outcome $k\in\boldsymbol{\Omega}$ and $\sum_{k=1}^K r_{ik\theta } = R_{i\theta}$. Thus, the vector of counts $\mathbf{r}_{i\theta}$ is a realization of a multinomial random variable with parameters $R_{i\theta}$ and $\boldsymbol{\pi}_{i\theta}$ for $i \in \{1,\dots,m\}$ and $\theta \in \boldsymbol{\Theta}$. Furthermore, when $R_{i\theta}$ is finite (not sufficiently large), the vector $\boldsymbol{\pi}_{i\theta}$ is \emph{uncertain}, and an agent cannot compute the probability distribution precisely.

To clarify the model above consider that an agent $i$ is handed a set of $K$ sided dice labeled $1,...,S$. Each die $s$ represents a hypothesis $\theta_s\in\boldsymbol{\Theta}$ and the parameters $\boldsymbol{\pi}_{i\theta_s}$ represents the set of probabilities of the die landing on each face. The agent only has access to each die for a small amount of time, where they roll the die and collect the counts of each face during each roll to construct the sets $\mathbf{r}_{i\theta}$ $\forall \theta \in \boldsymbol{\Theta}$. Then, all of the dice are collected and a new unlabeled die is presented to the agent. The goal of the agent is to identify which of the $S$ hypotheses best matches the distribution observed by rolling the new die. This is the main object of study of this paper: the design of a distributed algorithm that allows a group of agents to construct consistent beliefs about a set of hypotheses based on uncertain likelihood models.

\subsection{Social Learning with Uncertain Models}

Given the prior evidence for the set of hypotheses, the sequence of private observations and the interactions with the other agents in the network, an agent iteratively constructs beliefs over the hypotheses set $\boldsymbol{\Theta}$. We will denote the belief of an agent $i$ about a hypothesis $\theta$ at a time $t$ as $\mu_{it}(\theta)$. Moreover, the belief of agent $i$ about hypothesis $\theta$ at time $t+1$ will be a function of the tuple $\{\mathbf{r}_{i\theta},\mathbf{n}_{it}, \{\mu_{jt}(\theta)\}_{j \in \mathbf{M}^i} \}$, where $\mathbf{M}^i$ is the set of agents (or neighbors) that can send information to agent $i$.

We propose the following belief update rule, based on uncertain likelihood models,
\begin{align}\label{eq:main_algo}
\mu_{it+1}(\theta) = \ell_{i\theta}(\mathbf{n}_{it}, \omega_{it+1}|\mathbf{r}_{i\theta})\prod_{j\in \mathbf{M}^i}\mu_{jt}(\theta) ^{[\mathbf{A}]_{ij}},
\end{align}
where
\begin{align}\label{eq:el;_def}
\ell_{i\theta}(\mathbf{n}_{it},k|\mathbf{r}_{i\theta}) &= \frac{(r_{ik\theta} + n_{ikt}+1)(t+K-1)}{(R_{i\theta}+t+K-1)(n_{ikt} +1)},
\end{align}
$\mu_{i0}(\theta)=1$ $\forall i\in\{1,...,m\}$, and $[\mathbf{A}]_{ij}$ is the weight agent $i$ assigns to the belief shared by agent $j$.

Equation~(\ref{eq:main_algo}) is an aggregation step (a weighted geometric mean), and a normalized uncertain likelihood non-Bayesian update, where Equation (\ref{eq:el;_def}) is the uncertain likelihood ratio update based on the observed signal at time $t$. This proposed belief update rule will be motivated in Section \ref{sec:DNBLWUL}.

Note that the generated beliefs are not probability distributions since they are not normalized over the set of hypotheses $\boldsymbol{\Theta}$ as in traditional non-Bayesian social learning. Rather, the generated beliefs with uncertain likelihoods represents the consistency of the hypothesis with the ground truth given the accumulated prior evidence. A detailed description of the proposed inference rule will be presented in Section \ref{sec:NULF}.

\subsection{Assumptions and Definitions}

The agents social interactions are modeled as an exchange of beliefs over a weighted undirected graph $\mathcal{G}=(\mathbf{M},\mathbf{E})$, which consists of the set of agents $\mathbf{M}=\{1,...,m\}$ and a set of edges $\mathbf{E}$. An edge is defined as the connection between agent $i$ and $j$ and is denoted by the ordered pair $(i,j)\in \mathbf{E}$. The weights along each edge form an adjacency matrix, $\mathbf{A}$, which represents the amount of influence that agent $i$ has on agent $j$ (and vise versa) such that $[\mathbf{A}]_{ij}>0$ if $(i,j)\in \mathbf{E}$ and $[\mathbf{A}]_{ij}=0$ if $(i,j)\notin \mathbf{E}$. Furthermore, the set of neighbors of agent $i$ is defined as $\mathbf{M}^{i}=\{j\in \mathbf{M}|(i,j)\in \mathbf{E}\}$ and we assume that the agents within $\mathbf{M}^i$ report their beliefs truthfully.

\begin{assumption}\label{assum:graph}
	The graph $\mathcal{G}$ is undirected and connected. Moreover, the corresponding adjacency matrix $\mathbf{A}$ is doubly stochastic and aperiodic. Note that $\mathbf{A}$ is irreducible due to connectivity. 
\end{assumption}
	
Assumption \ref{assum:graph} is common among the consensus literature and allows the agents interactions to be represented by a Markov Chain. This guarantees convergence of the graph to a fully connected network and defines bounds on the second largest eigenvalue based on the number of agents \cite{NOU2017}. Note that it is not always possible to derive a directed graph with a doubly stochastic adjacency matrix (as provided in \cite{GC2010}) especially in a distributed manner. However, if the graph is undirected, then the distributed agents can construct a Lazy Metropolis matrix to form a doubly stochastic matrix. Furthermore, time-varying directed graphs can form doubly stochastic matrices using the push-sum algorithm \cite{NO2016}. 

\begin{assumption} \label{assum:uncertain_distributioin}
    Each agent $i$ at time $t=0$ has their counter for the observations of their private signals set to $n_{ik0}=0$ for all $i\in \mathbf{M}$ and $k\in\boldsymbol{\Omega}$. This enables the definition of the prior uncertain probability distribution $\widetilde{\mathcal{P}}_i(\mathbf{0}) = \{\widetilde{P}_{i\theta}(\mathbf{n}_{i0}=\mathbf{0}|\mathbf{r}_{i\theta})\}_{\forall \theta \in \boldsymbol{\Theta}}$ at time $t=0$, which are derived from the marginal of a second-order distribution of the probabilities $\boldsymbol{\pi}_{i\theta}$ given the prior evidence $\mathbf{r}_{i\theta}$ (to be derived in Section~\ref{sec:NULF}).
\end{assumption}

\begin{definition} \label{assum:dog_distribution}
    When agent $i$ collects an infinite (or a sufficiently large) amount of prior evidence for hypothesis $\theta$, the probabilities $\boldsymbol{\pi}_{i\theta}$ are known precisely and we say that the agent has a epistemically \emph{certain} statistical model for the hypothesis $\theta$, i.e., $\widetilde{\mathcal{P}}_i(\mathbf{0}) = \{P_{i\theta}(\mathbf{n}_{i0}=\mathbf{0}|\boldsymbol{\pi}_{i\theta})\}_{\forall \theta \in \boldsymbol{\Theta}}$.
\end{definition}

The precise definitions of the uncertain and certain likelihood models for a multinomial distribution will be formally introduced in Section \ref{sec:NULF}. Note that the usage of \emph{certain} statistical models is the same as \emph{dogmatic} opinions in subjective logic \cite{J2016}. 

We assume that the agents have calibrated their measurement models to allow them to distinctly identify the categories observed. However, it may be too expensive for the agents to conduct a sufficient number of trials to identify the probabilities $\boldsymbol{\pi}_{i\theta}$ of each hypothesis $\theta$ precisely.

Additionally, we allow the amount of prior evidence collected for each hypothesis can vary between hypotheses and agents, i.e. $R_{i\theta}\ne R_{i\hat{\theta}}$ for any $\hat{\theta}\ne \theta$ and $R_{i\theta}\ne R_{j\theta}$ for any $i\ne j$. This means that the distributions within $\widetilde{\mathcal{P}}_{i}$ are incommensurable, causing the traditional approach of normalizing $\widetilde{P}_{i\theta}$ over the set of $\widetilde{\mathcal{P}}_i$ to produce errors as an unintended consequence. Thus, we propose to normalize each distribution by a common \emph{vacuous} probability model that statistically models the agents ignorance of hypothesis $\theta$, i.e., $\widetilde{P}_{i\theta}(\mathbf{0}|\mathbf{r}_{i\theta}=\mathbf{0})$. A thorough discussion of this concept is presented in Section \ref{sec:NULF}.

Furthermore, we assume that the agent may face an identification problem due to (i) a varying amount of prior evidence and (ii) non-informative observations. The first condition is an effect of the proposed uncertain models, while the second condition is caused when multiple hypotheses $\hat{\boldsymbol{\Theta}}$ have the same probability distribution as the ground truth state of the world, s.t. $\hat{\boldsymbol{\Theta}}=\{\theta\in\boldsymbol{\Theta}|\boldsymbol{\pi}_{i\theta} = \boldsymbol{\pi}_{i\theta^*}\}$. However, for every hypothesis $\hat{\theta}\in \hat{\boldsymbol{\Theta}}$, we assume that there exists another agent $j$ that has informative observations for $\hat{\theta}$, s.t. $\boldsymbol{\pi}_{j\theta} \ne \boldsymbol{\pi}_{j\theta^*}$. Thus, the agents must collaborate to unequivocally identify the true state of the world.

Finally, we make the following assumption on the agents initial beliefs for each hypothesis.
\begin{assumption} \label{assum:inital_beliefs}
    The agents initial beliefs $\mu_{i0}(\theta)=1$ $\forall i\in\{1,...,m\}$ and $\forall \theta \in \boldsymbol{\Theta}$.
\end{assumption}
Assumption \ref{assum:inital_beliefs} allows the agents to express vacuous initial beliefs for each hypothesis based on the model of complete ignorance achieved by normalizing the uncertain probability distribution by the vacuous condition. This is also required to ensure that the beliefs evolve with time.

Next, we provide a definition of the posterior probability distribution of hypothesis $\theta$ for a centralized network.
\begin{definition}
    The \emph{centralized uncertain likelihood}  
    is the determination of the probability of the observations from all agents conditioned on the historical evidence for each hypothesis:
    \begin{equation}
    \widetilde{P}_\theta(\mathbf{n}_{1t},\mathbf{n}_{2t},...,\mathbf{n}_{mt}|\mathbf{r}_{1\theta},\mathbf{r}_{2\theta},...,\mathbf{r}_{m\theta}) = \prod_{i=1}^m \widetilde{P}_\theta(\mathbf{n}_{it}|\mathbf{r}_{i\theta}).
    \end{equation}
\end{definition}

Note that the decomposition of the centralized uncertain likelihood as the product of uncertain probabilities is only possible because the private signals as observations or evidence from training are statistically independent of each other and agents do not share their evidences $\mathbf{r}_{i\theta}$. As shown latter, the centralized uncertain likelihood and uncertain probabilities are sensitive to the amount of evidence, and it is more meaningful to normalize this value by the probability of the observations conditioned on no (or vacuous) historical evidence to form the centralized uncertain likelihood ratio:
    \begin{eqnarray}
    \prod_{i=1}^m \Lambda_{i\theta}(t) = \prod_{i=1}^m \frac{\widetilde{P}_{i\theta}(\mathbf{n}_{it}|\mathbf{r}_{i\theta})}{\widetilde{P}_{i\theta}(\mathbf{n}_{it}|\mathbf{0})}.
    \end{eqnarray}

The centralized uncertain likelihood ratio is achieved in a centralized network where a central node observes all of information. This distribution acts as the benchmark that the distributed agents should strive to achieve.

\subsection{Main Result}

We now present the main result of the paper. This result shows that the beliefs updated using the dynamics in Equation (\ref{eq:main_algo}) converge to a value with a one-to-one correspondence to the centralized uncertain likelihood ratio. The theorem is proven in Section \ref{sec:DNBLWUL}.

\begin{theorem} \label{thm:ULR_Con}
	Let Assumptions~\ref{assum:graph},~\ref{assum:uncertain_distributioin}, and ~\ref{assum:inital_beliefs} hold. Then, the beliefs generated by the update rule (\ref{eq:main_algo}) have the following property
	\begin{eqnarray} \label{eq:LL_Limit}
	 \lim_{t\to\infty}\mu_{it}(\theta) = \left( \prod_{j=1}^m \widetilde{\Lambda}_{j\theta} \right)^\frac{1}{m} , \ \text{a.s.}
	\end{eqnarray}
	where
	\begin{eqnarray} 
	\widetilde{\Lambda}_{j\theta} = \lim_{t \to \infty} \Lambda_{j\theta} (t) = \frac{B(\mathbf{1})}{B(\mathbf{r}_{j\theta}+\mathbf{1})}\prod_{k=1}^K (\pi_{jk\theta^*})^{r_{jk\theta}}, \ \text{a.s.}
	\end{eqnarray}
\end{theorem}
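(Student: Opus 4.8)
The plan is to split the argument into two independent pieces: first, computing the almost-sure limit $\widetilde{\Lambda}_{j\theta}$ of the single-agent uncertain likelihood ratio $\Lambda_{j\theta}(t)$; second, showing that the network dynamics in~(\ref{eq:main_algo}) drive each agent's belief to the geometric mean of these per-agent limits. The two pieces are essentially decoupled, and only the second uses Assumption~\ref{assum:graph}.

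For the first piece, I would take logarithms in~(\ref{eq:main_algo}) and recognize~(\ref{eq:el;_def}) as a telescoping product: writing $\log\ell_{i\theta}(\mathbf{n}_{it},\omega_{it+1}\,|\,\mathbf{r}_{i\theta})$ and summing from $t=0$ to $T-1$, the factors involving $n_{ikt}$ and $t$ cancel across consecutive steps, leaving a closed form for $\log\Lambda_{i\theta}(T)$ in terms of only the terminal histogram $\mathbf{n}_{iT}$, the prior evidence $\mathbf{r}_{i\theta}$, and $T$. Concretely I expect the closed form to be the ratio of Beta functions $B(\mathbf{r}_{i\theta}+\mathbf{n}_{iT}+\mathbf{1})/B(\mathbf{n}_{iT}+\mathbf{1})$ normalized by $B(\mathbf{r}_{i\theta}+\mathbf{1})/B(\mathbf{1})$ — i.e., exactly the posterior-predictive-over-vacuous structure advertised in Section~\ref{sec:NULF}. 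Then I would apply the strong law of large numbers: $n_{ikT}/T \to \pi_{ik\theta^*}$ a.s., and feed this into a Stirling/Gamma-function asymptotic expansion of the Beta functions. The $T$-dependent pieces of $B(\mathbf{r}_{i\theta}+\mathbf{n}_{iT}+\mathbf{1})$ and $B(\mathbf{n}_{iT}+\mathbf{1})$ must cancel against $(t+K-1)/(R_{i\theta}+t+K-1)\to 1$, and what survives is $\prod_{k}(\pi_{ik\theta^*})^{r_{ik\theta}}\cdot B(\mathbf{1})/B(\mathbf{r}_{i\theta}+\mathbf{1})$, which is the claimed $\widetilde{\Lambda}_{i\theta}$.

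For the second piece, set $\varphi_{it}(\theta)=\log\mu_{it}(\theta)$. The recursion becomes linear: $\varphi_{i,t+1}(\theta)=\sum_{j\in\mathbf{M}^i}[\mathbf{A}]_{ij}\varphi_{jt}(\theta) + \log\ell_{i\theta}(\mathbf{n}_{it},\omega_{it+1}\,|\,\mathbf{r}_{i\theta})$, a standard distributed-averaging iteration with (vanishing) inputs $g_{it}(\theta):=\log\ell_{i\theta}(\cdots)$. By the telescoping observation, the partial sums $\sum_{s=0}^{t-1} g_{is}(\theta)=\log\Lambda_{i\theta}(t)$ converge a.s. to $\log\widetilde{\Lambda}_{i\theta}$, hence $g_{it}(\theta)\to 0$ a.s. Unrolling the linear recursion gives $\boldsymbol{\varphi}_t = \sum_{s=0}^{t-1}\mathbf{A}^{t-1-s}\mathbf{g}_s$ (using $\boldsymbol{\varphi}_0=\mathbf{0}$ by Assumption~\ref{assum:inital_beliefs}). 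Because $\mathbf{A}$ is doubly stochastic, irreducible and aperiodic (Assumption~\ref{assum:graph}), $\mathbf{A}^n \to \frac{1}{m}\mathbf{1}\mathbf{1}^\top$ geometrically; splitting $\mathbf{A}^{t-1-s}=\frac{1}{m}\mathbf{1}\mathbf{1}^\top + (\mathbf{A}^{t-1-s}-\frac{1}{m}\mathbf{1}\mathbf{1}^\top)$, the first term contributes $\frac{1}{m}\sum_j \sum_{s<t} g_{js}(\theta)=\frac{1}{m}\sum_j\log\Lambda_{j\theta}(t)\to\frac{1}{m}\sum_j\log\widetilde{\Lambda}_{j\theta}$, while the second term is a geometrically-weighted average of a null sequence and vanishes — this is a Toeplitz/Kronecker-lemma argument. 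Exponentiating yields~(\ref{eq:LL_Limit}).

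The main obstacle I anticipate is the asymptotic bookkeeping in the first piece: showing rigorously that the Gamma-function ratios behave like $\prod_k(\pi_{ik\theta^*})^{r_{ik\theta}}$ times a constant requires care because $n_{ikT}$ grows while $r_{ik\theta}$ is fixed, and one needs the error terms in Stirling to be uniform enough to survive the $T\to\infty$ limit along almost every sample path (in particular one must handle categories with $\pi_{ik\theta^*}=0$, where $n_{ikT}$ may stay bounded, separately). A minor secondary subtlety is justifying the interchange of the a.s. limit with the linear recursion in the second piece, but the Kronecker/Toeplitz lemma handles this cleanly once $g_{it}(\theta)\to 0$ a.s. is established.
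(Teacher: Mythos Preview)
Your proposal is correct and follows essentially the same route as the paper: the paper likewise splits into (i) the single-agent limit $\widetilde{\Lambda}_{j\theta}$ via the telescoping/Beta-function closed form, Stirling's expansion of the Gamma ratios, and the SLLN (its Lemma~\ref{lem:ULR_lim}), and (ii) the network piece by unrolling the log-recursion, splitting $\mathbf{A}^{t-\tau}=\frac{1}{m}\mathbf{11}'+(\mathbf{A}^{t-\tau}-\frac{1}{m}\mathbf{11}')$, using geometric decay of the remainder (Lemma~\ref{lem:doubly}) together with $\log\ell_{i\theta}\to 0$ a.s.\ (Lemma~\ref{lem:update_lim}), and concluding via exactly the Toeplitz-type lemma you name (stated there as Lemma~\ref{lemm:ram}). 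Your flagged edge case $\pi_{ik\theta^*}=0$ is a point the paper does not treat explicitly in its Stirling step, so your caution there is well placed.
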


Theorem~\ref{thm:ULR_Con} states that the beliefs generated by the update rule \eqref{eq:main_algo} converges almost surely to the $m$th root of the product of the asymptotic uncertain likelihood ratio $\widetilde{\Lambda}_{i\theta}$ derived in Section \ref{sec:NULF}. Thus, with an abuse of notation, we will refer to the point of convergence of the beliefs $\mu_{it}(\theta)$ in the remainder of the paper as the centralized uncertain likelihood ratio. 

Note that the centralized uncertain likelihood ratio ranges between $[0,\infty)$ depending on the amount of prior evidence collected by the agents. When the agents have collected a finite amount of prior evidence, the probabilities $\boldsymbol{\pi}_{i\theta}$ $\forall i=1,..,m$ are uncertain, which results in the beliefs, $\mu_{it}(\theta)$ $\forall i=1,...,m$, converging to a finite value within $(0,\infty)$. Whereas, if the agents have collected an infinite amount of evidence, the probabilities $\boldsymbol{\pi}_{i\theta}$ are certain (known precisely) and the beliefs will converge to $0$ or diverge to $\infty$. This result will be presented in Section \ref{sec:DNBLWUL}.

The current literature identifies the hypothesis that minimizes the Kullback-Liebler (KL) divergence between the certain likelihood and the ground truth distribution. This allows only the beliefs of one hypothesis to converge to $1$, while the remaining beliefs converge to $0$, which allows for learning. Our result differs from the current literature, in that our uncertain beliefs converge to a finite value and multiple hypotheses may be accepted. However, when the agents are certain, only the hypothesis with a distribution that exactly matches the ground truth will be accepted, while any divergence between the distributions will cause the hypothesis to be rejected. This result follows the current literature under the closed world assumption that one of the predefined hypotheses is the ground truth. 

Next, we will present the derivation of the uncertain likelihood ratio and its properties, as well as define a test to evaluate the consistency of each hypothesis with the private signals.

\section{Uncertain Models Derivation} \label{sec:NULF}

In this section, we derive the uncertain models as a solution to incorporate the uncertainty about the statistical models for a set of hypotheses. For simplicity of exposition, throughout this section we will ignore the network, and assume the centralized scenario, i.e., there is only one agent. Thus, we will drop the $i$ in our notation. Later in Section \ref{sec:DNBLWUL} we will extend our results to the distributed network setup.

\subsection{Uncertain Likelihood Function via the Posterior Predictive Distribution}

\begin{figure*}[t] 
\centering 
	\subfigure[]{
		\includegraphics[width=.31\textwidth]{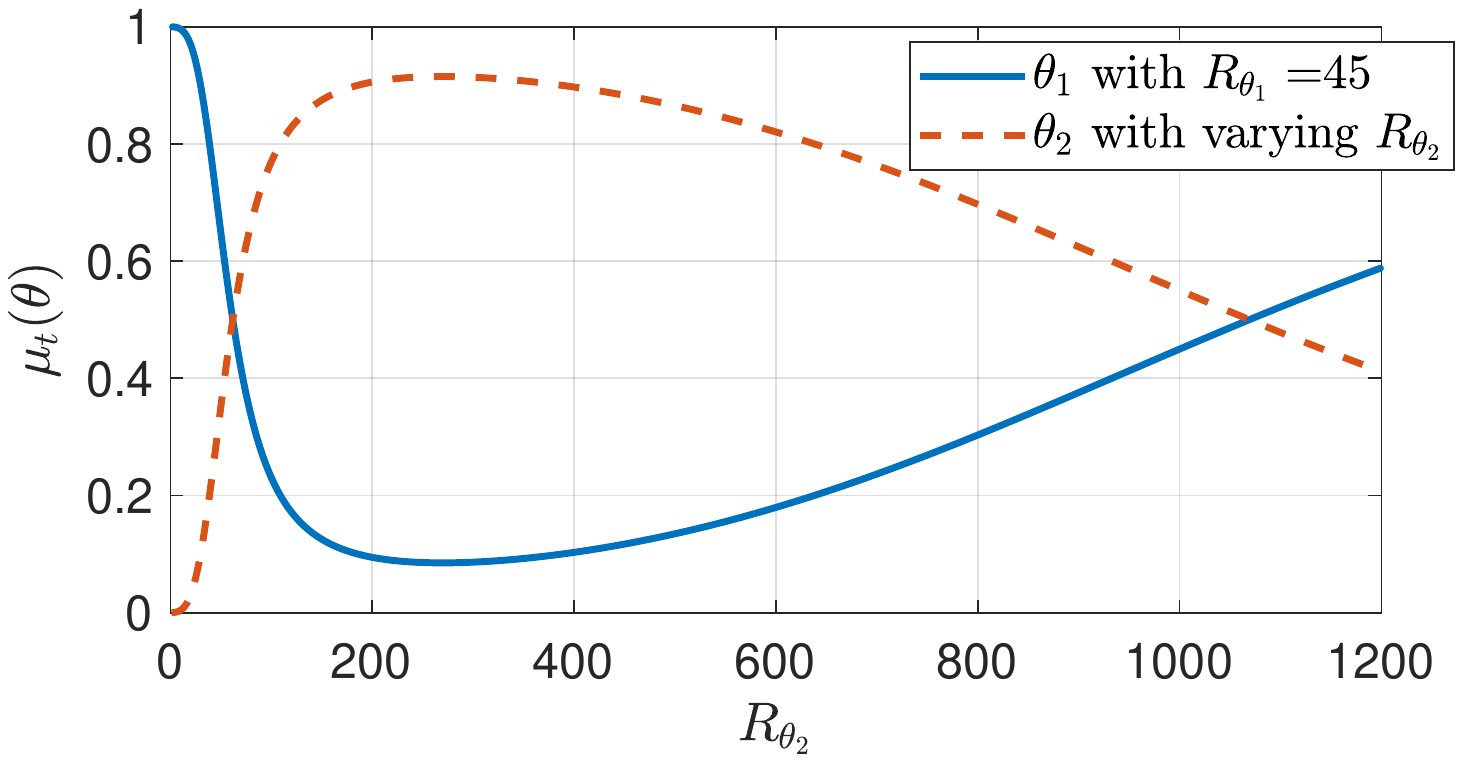}\label{fig:norm_Lambda1} 
	} \vspace{-3pt}
	\subfigure[]{
		\includegraphics[width=.31\textwidth]{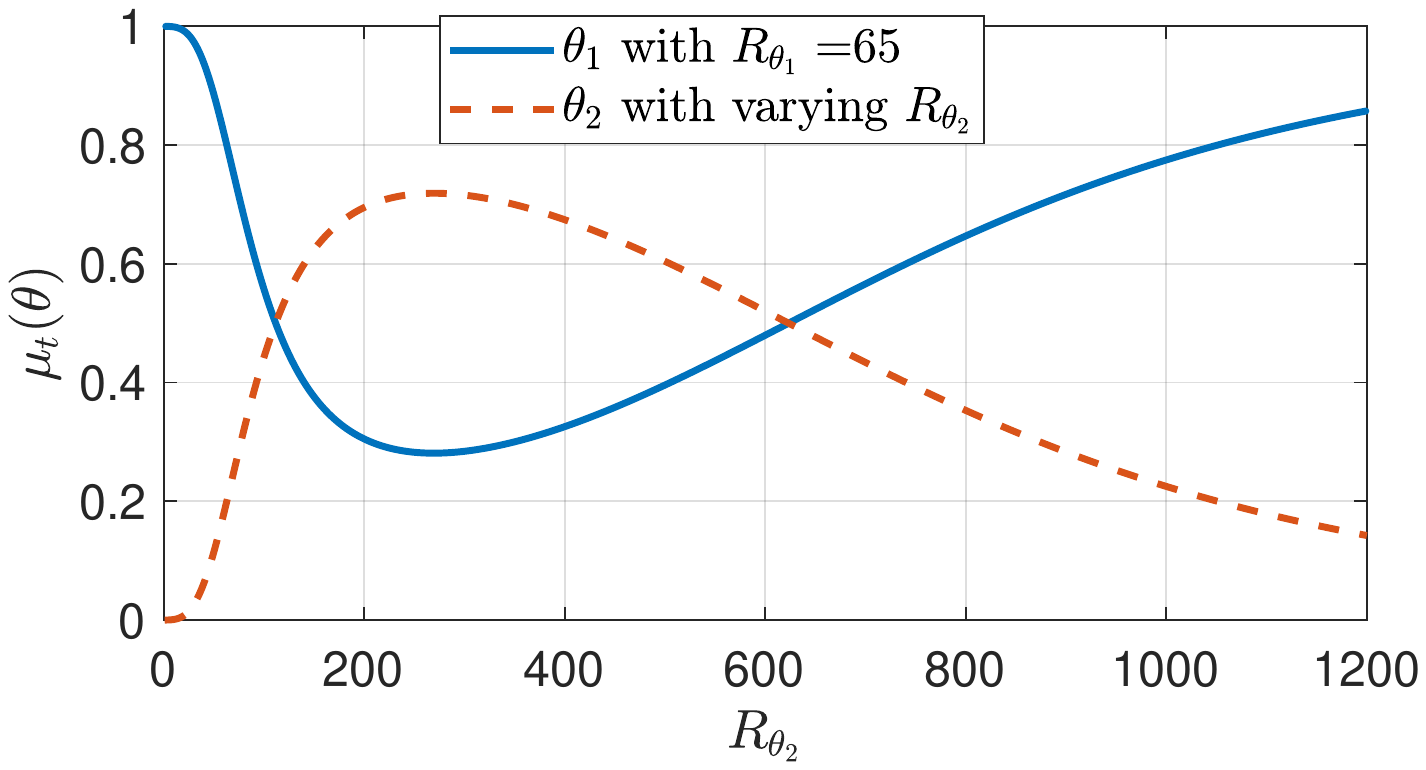}\label{fig:norm_Lambda2} 
	} \vspace{-3pt}
	\subfigure[]{
		\includegraphics[width=.31\textwidth]{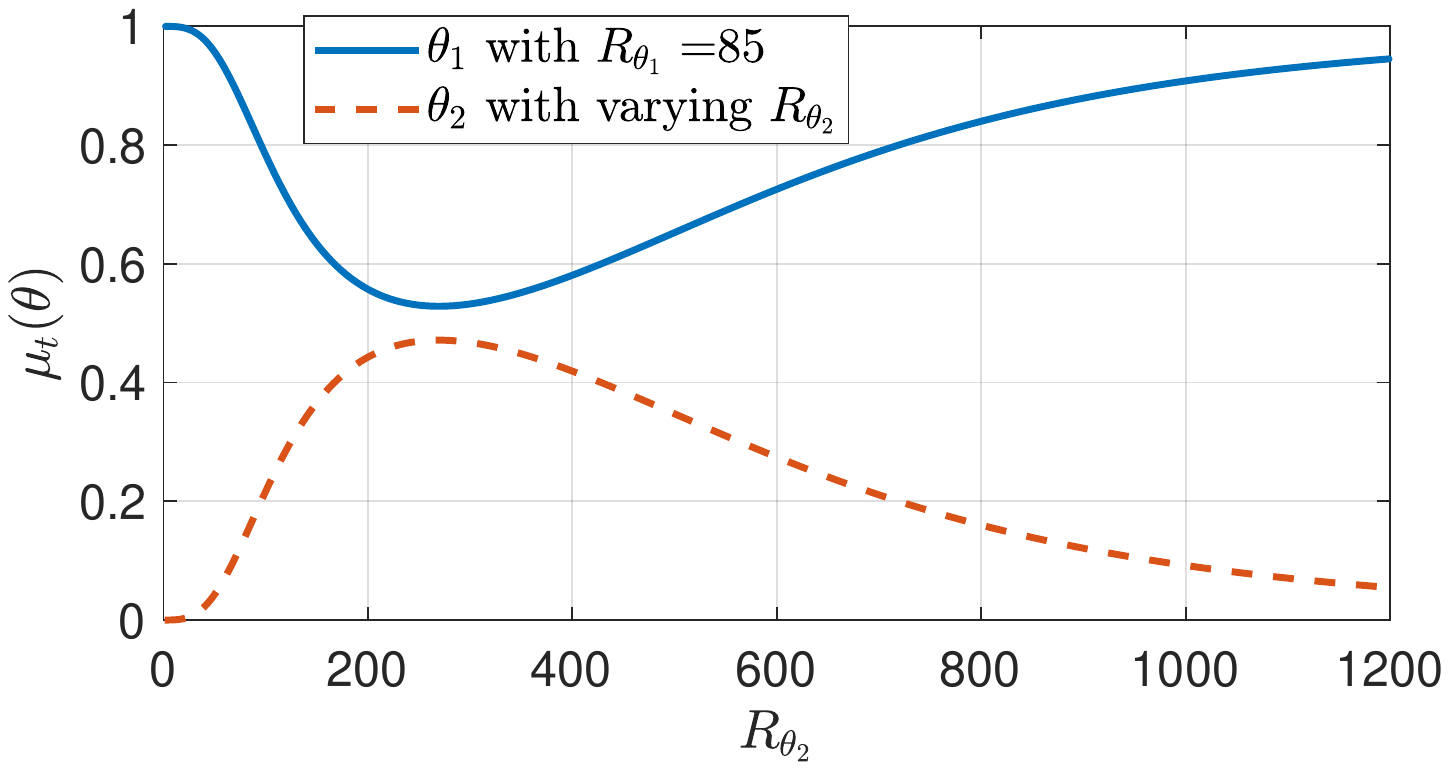}\label{fig:norm_Lambda3} 
} \vspace{-3pt}
	\caption{The normalized posterior predictive distribution (\ref{eq:exp_l}) using the update rule (\ref{eq:surrogate_likelihood}) versus the amount of evidence for hypothesis $\theta_2$ when $\mu_0(\theta_1)=\mu_0(\theta_2)=1$ and the evidence for hypothesis $\theta_1$ is: (a)~$R_{\theta_1} = 45$, (b)~$R_{\theta_1} = 65$, and (c)~$R_{\theta_1} = 85$.} \vspace{-15pt} \label{fig:norm_Lambda}
\end{figure*}

We model the uncertainty in the parameters of the multinomial distribution as a second-order probability density function. Similar approaches to modeling uncertainty have been presented in \cite{J2001, GS1982} and \cite{C1996}. As stated in Section~\ref{sec:pf_ahu}, an agent is assumed to construct its statistical model of hypothesis $\theta$ based on the prior evidence $\mathbf{r}_{\theta}$. Particularly, we are interested in a modified likelihood function that captures the uncertainty about the parameters $\boldsymbol{\pi}_\theta$ for each hypothesis based on finite samples.

Before the prior evidence $\mathbf{r}_{\theta}$ is presented, the agent  is assumed to have uniform prior belief about $\{\pi_{k\theta}\}_{k=1}^K$, thus $\{\pi_{k\theta}\}_{k=1}^K$ could be any point in the $K$-dimensional simplex,
\begin{equation*}
\mathcal{S}_K = \left \{\boldsymbol{\pi}  \bigg | \sum_{k=0}^K \pi_k = 1 \hspace{.05in} \mbox{ and $\pi_k \ge 0$ for $k=1,\ldots,K$} \right \},
\end{equation*}
with equal probability. However, once $\mathbf{r}_{\theta}$ is available, the agent updates its beliefs and constructs a posterior belief about $\{\pi_{k\theta}\}_{k=1}^K$. Particularly, if we assume the prior belief follows the uniform distribution over $\mathcal{S}_k$, and we observe $\mathbf{r}_\theta$ drawn from the multinomial distribution for hypothesis $\theta$, then the posterior belief is
\begin{eqnarray} \label{eq:dirichlet}
f (\boldsymbol{\pi}_{\theta}|\mathbf{r}_{\theta}) = \frac{\prod_{k=1}^K \pi_{k\theta }^{r_{k \theta }}}{B(\mathbf{r}_{\theta}+1)} \hspace{.05in}\mbox{s.t. $\boldsymbol{\pi} \in \mathcal{S}_K$,}
\end{eqnarray}
where $B(\alpha_1,...,\alpha_K)={\prod_{k=1}^K \Gamma(\alpha_k)}/{\Gamma(\sum_{k=1}^K \alpha_k)}$ is the $K$-dimensional Beta function~\cite{W1996}. The Dirichlet distribution is the conjugate prior of the multinomial distribution, which provides an algebraic convenience, and allows us to model the uncertainty of each parameter in the set $\boldsymbol{\pi}_{\theta}$ as a second-order probability density function. Clearly, as the number of observations in $\mathbf{r}_{\theta}$ increases, the posterior belief concentrates around $\boldsymbol{\pi}_{\theta}$.

In the social learning process an agent has collected $t$ signals $\boldsymbol{\omega}_{1:t}$ and has constructed its histogram $\mathbf{n}_{t}$. If the probabilities $\boldsymbol{\pi}_\theta$ are know absolutely, the agent would compute $P_\theta(\mathbf{n}_{t}|\boldsymbol{\pi}_\theta)$ as its likelihood function for the signal $\mathbf{n}_{t}$ given hypothesis $\theta$. However, in the uncertain condition, we must incorporate the finite knowledge about $\theta$ as $\widetilde{P}_\theta(\mathbf{n}_{t}|\mathbf{r}_\theta)$.

We propose the use of the posterior predictive distribution as the likelihood in lieu of the imprecisely known likelihood $P_\theta$. The posterior predictive distribution accounts for the uncertainty on $\boldsymbol{\pi}_{\theta}$, and it is calculated by marginalizing the distribution of $\mathbf{n}_{t}$ over the possible distributions of $\boldsymbol{\pi}_{\theta}$ given $\mathbf{r}_{\theta}$, i.e.,
	\begin{align} \label{eq:exp_l}
	\widetilde{P}_\theta(\mathbf{n}_{t}|\mathbf{r}_{\theta}) & =  \int_{\mathcal{S}_K} P_\theta(\mathbf{n}_{t}|\boldsymbol{\pi}_\theta) f(\boldsymbol{\pi}_{\theta}| \mathbf{r}_{\theta}) d\boldsymbol{\pi}_\theta, \nonumber \\
& =  \int_{\mathcal{S}_K} \prod_{k=1}^K \pi_{k\theta}^{n_{kt}}  f(\boldsymbol{\pi}_{\theta}| \mathbf{r}_{\theta}) d\boldsymbol{\pi}_\theta, \nonumber \\
	& =  \frac{B(\mathbf{r}_{\theta}+\mathbf{n}_{t}+\mathbf{1})}{B(\mathbf{r}_{\theta}+\mathbf{1})}.
	\end{align}
The uncertain likelihood function $\widetilde{P}_\theta$ represents the probability of the number of counts $\mathbf{n}_{t}$ of each category realized by the measurement sequence $\boldsymbol{\omega}_{1:t}$ conditioned on the prior evidence $\mathbf{r}_{\theta}$ for hypothesis $\theta$.

\vspace{-6pt}

\subsection{The Effects of Normalization with Uncertain Hypotheses}

Typically in Bayesian inference, a normalization step is used to ensure that the values are between $[0,1]$. Next, we will show that an update rule generated by using the posterior predictive distribution, as the uncertainty likelihood function, i.e.,
\begin{align}\label{eq:surrogate_likelihood}
\mu_t(\theta) & = \frac{\widetilde{P}_\theta(\mathbf{n}_{t}|\mathbf{r}_{\theta})\mu_{0}(\theta)}{\sum_{\nu \in \boldsymbol{\Theta}}\widetilde{P}_\nu(\mathbf{n}_{t}|\mathbf{r}_{\nu})\mu_{0}(\nu)},
\end{align}
is not robust to having dissimilar amounts of evidence for the different hypotheses. Thus, the following proposition holds.

\begin{proposition} \label{prop:normal}
Consider the update rule (\ref{eq:surrogate_likelihood}), with $\mu_0(\theta)>0$ $\forall \theta \in \boldsymbol{\Theta}\in\{\theta^*, \bar{\theta}\}$. Then, there exists a finite $R_{\theta^*}$ and $R_{\bar{\theta}}$ such that $Prob(\lim_{t\rightarrow \infty}\mu_t(\bar{\theta})>\lim_{t\rightarrow \infty}\mu_t(\theta^*))>0$.
\end{proposition}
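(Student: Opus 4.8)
The plan is to exhibit an explicit pair of prior evidence counts $\mathbf{r}_{\theta^*}$, $\mathbf{r}_{\bar\theta}$ for which the limiting normalized beliefs favor the wrong hypothesis $\bar\theta$ on a set of observation sequences of positive probability. The starting point is that, by the strong law of large numbers, the empirical histogram satisfies $\mathbf{n}_t/t \to \boldsymbol{\pi}_{\theta^*}$ a.s., so the asymptotic behavior of the unnormalized quantity $\widetilde P_\theta(\mathbf{n}_t|\mathbf{r}_\theta)$ is governed by the large-$t$ expansion of the Beta-function ratio in \eqref{eq:exp_l}. Using $B(\mathbf{r}_\theta+\mathbf{n}_t+\mathbf{1})/B(\mathbf{r}_\theta+\mathbf{1})$ together with Stirling's formula, I would show (this is essentially the computation behind $\widetilde\Lambda_{j\theta}$ in Theorem~\ref{thm:ULR_Con}) that
\begin{align}\label{eq:prop_asympt}
\widetilde P_\theta(\mathbf{n}_t|\mathbf{r}_\theta) \;\sim\; P_{\theta^*}(\mathbf{n}_t|\boldsymbol{\pi}_{\theta^*}) \cdot \frac{B(\mathbf{1})}{B(\mathbf{r}_\theta+\mathbf{1})}\prod_{k=1}^K (\pi_{k\theta^*})^{r_{k\theta}} \cdot c_t,
\end{align}
where $c_t$ is a factor that does not depend on $\theta$ (coming from the ``vacuous'' normalizer $\widetilde P_\theta(\mathbf{n}_t|\mathbf{0})$, which is the same for every hypothesis). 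Consequently, in the ratio \eqref{eq:surrogate_likelihood} the common factors $P_{\theta^*}(\mathbf{n}_t|\boldsymbol{\pi}_{\theta^*})$, $c_t$, and $\mu_0$ cancel up to the bounded prior weights, and $\lim_{t\to\infty}\mu_t(\bar\theta) > \lim_{t\to\infty}\mu_t(\theta^*)$ reduces to the deterministic inequality $\mu_0(\bar\theta)\,\widetilde\Lambda_{\bar\theta} > \mu_0(\theta^*)\,\widetilde\Lambda_{\theta^*}$, i.e.
\begin{align}\label{eq:prop_ineq}
\mu_0(\bar\theta)\frac{B(\mathbf{1})}{B(\mathbf{r}_{\bar\theta}+\mathbf{1})}\prod_{k=1}^K (\pi_{k\theta^*})^{r_{k\bar\theta}} \;>\; \mu_0(\theta^*)\frac{B(\mathbf{1})}{B(\mathbf{r}_{\theta^*}+\mathbf{1})}\prod_{k=1}^K (\pi_{k\theta^*})^{r_{k\theta^*}}.
\end{align}

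Next I would construct the evidence counts making \eqref{eq:prop_ineq} hold with strict inequality and, crucially, with \emph{positive probability}. The key observation is that $\mathbf{r}_{\theta^*}$ is itself a multinomial draw with parameters $R_{\theta^*}$ and $\boldsymbol{\pi}_{\theta^*}$, and $\mathbf{r}_{\bar\theta}$ a multinomial draw with parameters $R_{\bar\theta}$ and $\boldsymbol{\pi}_{\bar\theta}$ (with $\boldsymbol{\pi}_{\bar\theta}\neq\boldsymbol{\pi}_{\theta^*}$ in general). The plan is: (i) choose $\boldsymbol{\pi}_{\theta^*}$ non-uniform, $R_{\theta^*}$ moderately large, and $R_{\bar\theta}$ small, so that with positive probability $\mathbf{r}_{\theta^*}$ lands in an ``atypical'' region (e.g. concentrated on a low-probability category) where $\widetilde\Lambda_{\theta^*}$ is small, while $\mathbf{r}_{\bar\theta}$ being small keeps $\widetilde\Lambda_{\bar\theta}$ from being penalized much; (ii) verify the resulting inequality \eqref{eq:prop_ineq} numerically/analytically for that choice. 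Since each of these finitely many count configurations has strictly positive multinomial probability, the event that both $\mathbf{r}_{\theta^*}$ and $\mathbf{r}_{\bar\theta}$ take the desired values has positive probability, and on that event \eqref{eq:prop_ineq} — hence the desired strict inequality of limits — holds; this gives $\mathrm{Prob}(\cdot)>0$ as claimed. A clean concrete instance ($K=2$, e.g. $\boldsymbol{\pi}_{\theta^*}=(0.9,0.1)$, $R_{\theta^*}=$ a value like $45$ with $\mathbf{r}_{\theta^*}$ skewed toward category $2$, and $R_{\bar\theta}=0$ or $1$, matching the regime illustrated in Figure~\ref{fig:norm_Lambda}) should suffice, and I would present exactly one such witness rather than a general family.

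The main obstacle is making the ``positive probability'' step fully rigorous: the limit $\lim_{t\to\infty}\mu_t(\bar\theta)$ is a random variable (a function of the \emph{entire} observation sequence $\boldsymbol{\omega}_{1:\infty}$ as well as of the prior draws $\mathbf{r}_{\theta^*},\mathbf{r}_{\bar\theta}$), so I must be careful to separate the randomness. The cleanest route is to condition on the (finitely-valued, positive-probability) event $\{\mathbf{r}_{\theta^*}=\mathbf{a},\ \mathbf{r}_{\bar\theta}=\mathbf{b}\}$ for the specific witness counts $\mathbf{a},\mathbf{b}$; given this event, the asymptotic analysis \eqref{eq:prop_asympt} shows $\mu_t(\bar\theta)/\mu_t(\theta^*)\to \mu_0(\bar\theta)\widetilde\Lambda_{\bar\theta}/(\mu_0(\theta^*)\widetilde\Lambda_{\theta^*})$ \emph{almost surely over the observation sequence}, a deterministic number exceeding $1$ by construction, so conditionally the target event has probability $1$; multiplying by the positive probability of the conditioning event yields the claim. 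A secondary technical point is justifying that the subleading Stirling terms in \eqref{eq:prop_asympt} are genuinely $\theta$-independent up to $o(1)$ so that they cancel in the ratio — this is the same computation underpinning Theorem~\ref{thm:ULR_Con} and can be cited rather than redone. I would also remark that $\mu_0(\theta^*)=\mu_0(\bar\theta)$ (the natural symmetric choice) is already enough, so the prior weights are not what drives the failure — it is genuinely the incommensurability of the uncertain likelihoods for differing $R_\theta$.
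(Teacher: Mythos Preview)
The paper does not give a formal proof of this proposition; after stating it, the authors offer only an informal discussion of two mechanisms --- (i) with positive probability the prior draws $\mathbf{r}_{\theta^*}$ are atypical for $\boldsymbol{\pi}_{\theta^*}$ while $\mathbf{r}_{\bar\theta}$ happens to match $\boldsymbol{\pi}_{\theta^*}$; (ii) even with ``typical'' evidence $\mathbf{r}_\theta=R_\theta\boldsymbol{\pi}_\theta$, taking $R_{\theta^*}$ small and $R_{\bar\theta}$ large can flip the ordering --- and they point to Figure~\ref{fig:norm_Lambda} as illustration. Your proposal is a correct and strictly more rigorous version of mechanism~(i): you condition on a positive-probability configuration of $(\mathbf{r}_{\theta^*},\mathbf{r}_{\bar\theta})$, reduce the limiting ordering to the deterministic inequality $\mu_0(\bar\theta)\widetilde\Lambda_{\bar\theta}>\mu_0(\theta^*)\widetilde\Lambda_{\theta^*}$, and exhibit a witness. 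That is essentially the paper's own reasoning, carried through.

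Two small comments. First, your asymptotic display is more elaborate than needed: since the vacuous likelihood $\widetilde P_\theta(\mathbf{n}_t|\mathbf{0})=B(\mathbf{n}_t+\mathbf{1})/B(\mathbf{1})$ is identical for every $\theta$, one has the \emph{exact} identity
\[
\frac{\widetilde P_{\bar\theta}(\mathbf{n}_t|\mathbf{r}_{\bar\theta})}{\widetilde P_{\theta^*}(\mathbf{n}_t|\mathbf{r}_{\theta^*})}=\frac{\Lambda_{\bar\theta}(t)}{\Lambda_{\theta^*}(t)}
\]
for every $t$, and then Lemma~\ref{lem:ULR_lim} gives the a.s.\ limit $\widetilde\Lambda_{\bar\theta}/\widetilde\Lambda_{\theta^*}$ directly --- no need to introduce $P_{\theta^*}(\mathbf{n}_t|\boldsymbol{\pi}_{\theta^*})$ or a separate $c_t$ factor, and no fresh Stirling computation. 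Second, your witness construction formalizes the paper's mechanism~(i); the paper's mechanism~(ii), which drives Figure~\ref{fig:norm_Lambda}, is the complementary regime (small $R_{\theta^*}$, large $R_{\bar\theta}$, typical draws) that you mention but do not need for the existential claim.
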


Proposition \ref{prop:normal} states that due to the finite amount of evidence collected by the agent, the ground truth hypothesis will be rejected with a probability greater than 0. This occurs due to the following properties. First, if an insufficient amount of prior evidence is collected for hypothesis $\theta=\theta^*$, there is a probability greater than $0$ that the histograms $\mathbf{r}_\theta$ generated mismatch the ground truth parameters $\boldsymbol{\pi}_{\theta^*}$. Additionally, there is a probability greater than $0$ that the histograms generated for a hypothesis $\hat{\theta}\ne \theta^*$ could match the ground truth parameters. Thus, the hypothesis $\hat{\theta}$ would appear to be a better fit and be selected over the ground truth $\theta$. 

The second issue relates to the amount of prior evidence collected. Consider that the prior evidence for each hypothesis is consistent with their respective probability distribution, i.e., $\mathbf{r}_{\theta}=R_\theta \boldsymbol{\pi}_{\theta}$. However, consider that the amount of prior evidence collected for the ground truth hypothesis, say $\theta_1$, is smaller than some hypothesis $\theta_2$. Then, there is a chance that the belief update rule (\ref{eq:surrogate_likelihood}) of $\theta_2$ will be greater than $\theta_1$, as illustrated in Figure~\ref{fig:norm_Lambda}. 

As seen in Figure~\ref{fig:norm_Lambda1}, when $R_{\theta_1}=45$ and $R_{\theta_2}\in[100,1250]$, $\lim_{t \to \infty}\mu_t(\theta_2)>\lim_{t \to \infty}\mu_t(\theta_1)$, the ground truth will be rejected. However, as the amount of prior evidence increases to $R_{\theta_1}=65$ in Figure~\ref{fig:norm_Lambda2} and $R_{\theta_1}=85$ in Figure~\ref{fig:norm_Lambda3}, the range of $R_{\theta_2}$ that allows $\theta_1$ to be rejected decreases. Thus, there are scenarios that allow the probability of rejecting the ground truth to be greater than $0$ when using the update rule (\ref{eq:surrogate_likelihood}). Therefore, we cannot normalize over the set of hypotheses.   

We propose that the agents compare the posterior predictive distribution $\widetilde{P}_\theta$ to the model of complete ignorance, i.e., the \emph{vacuous} probability model. The vacuous probability model assumes that the agent has collected zero prior evidence for each hypothesis and strictly evaluates \eqref{eq:exp_l} with parameters $\mathbf{r}_{\theta}= \mathbf{0}$. This model considers that each probability $\pi_{ik\theta}$ is uniformly distributed in the simplex and represents complete uncertainty. Note that it follows from~\eqref{eq:exp_l} that
\begin{equation}
\widetilde{P}_\theta(\mathbf{n}_{t}|\mathbf{r}_{\theta} =\mathbf{0})= \frac{B(\mathbf{n}_t+1)}{B(\mathbf{1})}.
\end{equation}

Thus, we define the \emph{Uncertain Likelihood Ratio} as follows.

\begin{definition}[Uncertain Likelihood Ratio] \label{def:ULR}
	The uncertain likelihood ratio is the posterior predictive distribution normalized by the vacuous probability model, i.e. $R_{\theta}=0$, as follows:
		\begin{align} \label{eq:L}
		\Lambda_{\theta}(t) = \frac{\widetilde{P}_\theta(\mathbf{n}_{t}|\mathbf{r}_{\theta})}{\widetilde{P}_\theta(\mathbf{n}_{t}|\mathbf{0} 
)}= \frac{B(\mathbf{r}_{\theta} + \mathbf{n}_{t}+\mathbf{1})B(\mathbf{1})}{B(\mathbf{r}_{\theta}+\mathbf{1})B(\mathbf{n}_{t}+\mathbf{1})}.
		\end{align}
\end{definition}

Since the agent has different amounts of prior evidence for each hypothesis, the uncertain likelihood ratio cannot be evaluated over the set of all hypothesis as in \eqref{eq:surrogate_likelihood}. Thus, we propose that the agent evaluates each hypothesis individually utilizing the \emph{Uncertain Likelihood Ratio Test}.

\begin{definition}[Uncertain Likelihood Ratio Test] \label{def:ULRT}
	The uncertain likelihood ratio test is a likelihood ratio test that utilizes the uncertain likelihood ratio to evaluate the consistency of the prior evidence of hypothesis $\theta$ with the ground truth $\theta^*$. This test results in the following conclusions: 
	\begin{enumerate}
		\item If $\Lambda_{\theta}(t)$ converges to a value above one, there is evidence to accept that $\theta$ is consistent with the ground truth $\theta^*$.  Higher values indicate more evidence to accept $\theta$ as being equivalent to the ground truth.
		\item If $\Lambda_{\theta}(t)$ converges to a value below one, there is evidence to reject that $\theta$ is the ground truth $\theta^*$. Lower values indicate more evidence to reject $\theta$ as $\theta^*$. 
		\item  If $\Lambda_{\theta}(t)$ converges to a value near one, there is not enough evidence to accept or reject $\theta$ as $\theta^*$. 
	\end{enumerate}
\end{definition}
As a practical matter, one can define a threshold $\upsilon>1$ so that the hypothesis is deemed accepted, rejected or unsure if $\Lambda_\theta(t) \geq \upsilon$, $\Lambda_\theta(t) < 1/\upsilon$ and $1/\upsilon \leq \Lambda_\theta(t) < \upsilon$, respectively.\footnote{This choice of thresholds induces a set of symmetric thresholds $\pm \log(\upsilon)$ for $\log\left(\Lambda_\theta(t)\right) \in (-\infty,\infty)$.} The exact choice of thresholds is application dependent to balance the number of false positives and false negatives. Furthermore, the choice of threshold may be chosen based on the amount of prior evidence the agent has for hypothesis $\theta$. The construction of this threshold and its effects on the overall inference is out of the scope of this paper and thus left for future work.

The uncertain likelihood ratio test incorporates a third conclusion into the traditional likelihood ratio test which is a direct result of the agents uncertainty in the hypothesis. The current literature assumes a closed world and that the agent must select the hypothesis that best matches the observed data. However, when uncertainty is incorporated, the agents should judge each hypothesis on its own merits, i.e., how well it matches the observations relative to the historical evidence about that hypothesis.  For some hypotheses, there may not be enough evidence to accept or reject it.  Furthermore, there may be evidence to accept multiple hypotheses, but the wrong hypothesis exhibits a larger uncertain likelihood ratio as evident in Figure~\ref{fig:norm_Lambda}. Therefore, the inference problem is reformulated to accept the following set of hypotheses:
\begin{eqnarray}
\hat{\boldsymbol{\Theta}} = \{\theta\in\boldsymbol{\Theta}|\Lambda_{\theta}(t) \ge \upsilon 
\}.
\end{eqnarray}

\subsection{Asymptotic Behavior of the Centralized Uncertain Likelihood Ratio}

The inference drawn from the uncertain likelihood ratio test depends on the amount of prior evidence collected by the agent. This subsection studies the asymptotic properties of the uncertain likelihood ratio as $t\rightarrow \infty$. Particularly, we will assume a centralized scenario where there is only one agent, and we will observe the asymptotic behavior of its beliefs.

\begin{lemma} \label{lem:ULR_lim}
	The uncertain likelihood ratio in \eqref{eq:L} of hypothesis $\theta$ has the following property
	\begin{eqnarray} \label{eq:ULR}
	\widetilde{\Lambda}_{\theta} = \lim_{t \to \infty} \Lambda_{\theta}(t) = \frac{B(\mathbf{1})}{B(\mathbf{r}_{\theta}+\mathbf{1})}\prod_{k=1}^K \pi_{k\theta^*}^{r_{k\theta}}, \ \ \text{a.s.},
	\end{eqnarray}
	where $\mathbf{r}_{\theta}$ is the prior evidence about hypothesis $\theta$ and $\boldsymbol{\pi}_{\theta^*}$ are the ground truth probabilities. 
\end{lemma}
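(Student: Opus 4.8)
\emph{Proof sketch.} The plan is to start from the closed form of $\Lambda_\theta(t)$ in Definition~\ref{def:ULR}, expand the $K$-dimensional Beta functions via $B(\boldsymbol{\alpha})=\prod_{k}\Gamma(\alpha_k)/\Gamma(\sum_{k}\alpha_k)$, and isolate the only $t$-dependent factor. Using $\sum_{k}n_{kt}=t$ and $\sum_{k}r_{k\theta}=R_\theta$, equation~\eqref{eq:L} rewrites as
\begin{equation*}
\Lambda_\theta(t)=\frac{B(\mathbf{1})}{B(\mathbf{r}_\theta+\mathbf{1})}\cdot\frac{\Gamma(t+K)}{\Gamma(R_\theta+t+K)}\prod_{k=1}^{K}\frac{\Gamma(r_{k\theta}+n_{kt}+1)}{\Gamma(n_{kt}+1)},
\end{equation*}
so the prefactor $B(\mathbf{1})/B(\mathbf{r}_\theta+\mathbf{1})$ is exactly the claimed constant, and it remains to show that the remaining factor converges a.s.\ to $\prod_{k}\pi_{k\theta^*}^{r_{k\theta}}$.

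Next I would invoke the strong law of large numbers: since $\mathbf{n}_t$ is a sum of $t$ i.i.d.\ multinomial indicator vectors with mean $\boldsymbol{\pi}_{\theta^*}$, we have $n_{kt}/t\to\pi_{k\theta^*}$ a.s.\ for every $k$. Working on the almost-sure event where this holds simultaneously for all $k\in\boldsymbol{\Omega}$, every category $k$ with $\pi_{k\theta^*}>0$ satisfies $n_{kt}\to\infty$, so the classical asymptotic $\Gamma(z+a)/\Gamma(z)\sim z^{a}$ as $z\to\infty$ (a consequence of Stirling's formula) gives
\begin{equation*}
\frac{\Gamma(r_{k\theta}+n_{kt}+1)}{\Gamma(n_{kt}+1)}\sim(n_{kt}+1)^{r_{k\theta}}\sim\big(t\,\pi_{k\theta^*}\big)^{r_{k\theta}},
\end{equation*}
and similarly $\Gamma(t+K)/\Gamma(R_\theta+t+K)\sim(t+K)^{-R_\theta}\sim t^{-R_\theta}$. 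Multiplying these equivalences, the powers of $t$ collect to $t^{\sum_{k}r_{k\theta}-R_\theta}=t^{0}=1$ precisely because $\sum_{k}r_{k\theta}=R_\theta$, leaving $\prod_{k}\pi_{k\theta^*}^{r_{k\theta}}$; reinstating the prefactor yields \eqref{eq:ULR}.

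Finally I would dispose of the degenerate categories. If $\pi_{k\theta^*}=0$ for some $k$, then outcome $k$ is never realized, so $n_{kt}=0$ for all $t$ a.s.\ and the corresponding factor equals the positive constant $\Gamma(r_{k\theta}+1)$, contributing no power of $t$. If $r_{k\theta}=0$ for all such $k$ the computation above is untouched (using the convention $0^{0}=1$); if $r_{k\theta}>0$ for some such $k$, the net exponent of $t$ becomes $-\sum_{k:\pi_{k\theta^*}=0}r_{k\theta}<0$, so $\Lambda_\theta(t)\to 0$, matching $\prod_{k}\pi_{k\theta^*}^{r_{k\theta}}=0$. I expect the main obstacle to be the middle step: rigorously combining the deterministic Gamma-ratio asymptotics with the random, unboundedly growing argument $n_{kt}$, and verifying that the exponents of $t$ cancel \emph{exactly} — which is where the constraint $\sum_{k}r_{k\theta}=R_\theta$ is essential; the bookkeeping for zero-probability categories is routine but must be included for the statement to hold verbatim.
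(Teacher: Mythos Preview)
Your proposal is correct and follows essentially the same route as the paper: expand the Beta functions into Gamma functions, apply the Stirling-type asymptotic $\Gamma(z+a)/\Gamma(z)\sim z^{a}$ to each ratio, let the powers of $t$ cancel via $\sum_k r_{k\theta}=R_\theta$, and invoke the strong law of large numbers for $n_{kt}/t\to\pi_{k\theta^*}$. The only notable difference is that you explicitly treat the degenerate categories with $\pi_{k\theta^*}=0$, which the paper's proof does not address; this addition is sound and makes your argument slightly more complete.
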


\begin{proof}
	First, the uncertain likelihood ratio can be expressed as
	\begin{eqnarray*}
	\Lambda_{\theta}(t) = \frac{B(\mathbf{1})\Gamma(t+K)\prod_{k=1}^K \Gamma(r_{k\theta }+{n_{kt}}+1)}{B(\mathbf{r}_{\theta}+1) \Gamma(R_{\theta} + t + K) \prod_{k=1}^K \Gamma({n_{kt}}+1)}.
	\end{eqnarray*}
	
	For a large $t$, we can approximate the ratio of gamma functions using Stirling's series \cite{Laforgia12}, where
	\begin{align*}
	\frac{\Gamma(x + \alpha)}{\Gamma(x + \beta)} = x^{\alpha -\beta} \left(1 +\frac{(\alpha -\beta)(\alpha - \beta-1)}{2x} + O(x^{-2})\right).
	\end{align*}
	
	Thus,
	\begin{align*}
	 \frac{\Gamma(r_{k\theta} + n_{kt} +1)}{\Gamma(n_{kt} +1)} & = n_{kt}^{r_{k\theta }} \left(1 +\frac{r_{k\theta }(r_{k\theta}-1)}{2n_{kt}} + O(n_{kt}^{-2})\right)
	\end{align*}
	$\forall k \in \boldsymbol{\Omega}$, and
	\begin{align*}
	\frac{\Gamma(t+K)}{\Gamma(t+K+R_\theta)} & =   t^{-R_\theta} \left(1 +\frac{-R_\theta(-R_\theta-1)}{2t} + O(t^{-2})\right) .
	\end{align*}
	
	Then, the limit of the uncertain likelihood ratio as $t\rightarrow \infty$ becomes
	\begin{multline*}
	\lim_{t\rightarrow \infty} \Lambda_{\theta}(t) =    \lim_{t\rightarrow \infty}   t^{-R_\theta} \left(1 +\frac{-R_\theta(-R_\theta-1)}{2t} + O(t^{-2})\right) \\ \cdot \prod_{k=1}^K n_{kt}^{r_{k\theta}} \left(1 +\frac{r_{k\theta}(r_{k\theta}-1)}{2n_{kt}} + O(n_{kt}^{-2})\right)\cdot \frac{B(\mathbf{1})}{B(\mathbf{r}_{\theta}+1)}.  \nonumber
	\end{multline*}
	
    Note that
	\begin{align*}
	\lim_{t\rightarrow \infty}\left(1 +\frac{r_{k\theta}(r_{k\theta}-1)}{2n_{kt}} + O(n_{kt}^{-2})\right) = 1,
	\end{align*}
	and
	\begin{align*}
	\lim_{t\rightarrow \infty} \left(1 +\frac{R_\theta(R_\theta+1)}{2t} + O(t^{-2})\right) =1.
	\end{align*}
	Then,
		\begin{align*}
	\lim_{t\rightarrow \infty} \Lambda_{\theta}(t) & =  \frac{B(\mathbf{1})}{B(\mathbf{r}_{\theta}+\mathbf{1})}\prod_{k=1}^K \pi_{k\theta^*}^{r_{k\theta}},
	\end{align*}
	with probability $1$ by the strong law of large numbers.
\end{proof}

\begin{figure}[t]
	\centering
	\includegraphics[width=0.95\columnwidth]{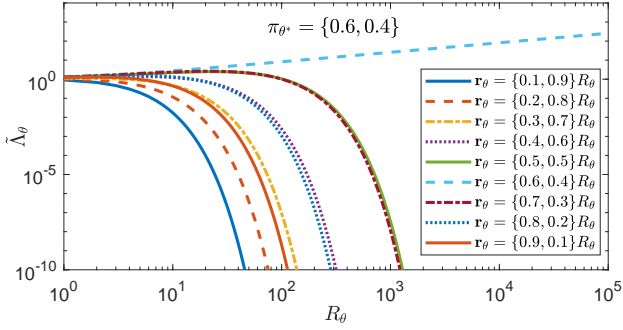}
	\caption{Asymptotic uncertain likelihood ratio vs. the amount of prior evidence $R_{\theta}$ for various hypothesis $\theta$ that differ from the ground truth $\theta^*$ with varying degrees of divergence. These curves assume that $\mathbf{r}_{\theta} = R_\theta \boldsymbol{\pi}_{\theta}$.} \vspace{-12pt}
	\label{fig:L_tilde}
\end{figure}

The effect of the prior evidence on $\widetilde{\Lambda}_\theta$ can be seen in Figure~\ref{fig:L_tilde}, where the asymptotic uncertain likelihood ratio vs. the amount of prior evidence $R_{\theta}$ is presented for various hypotheses. In this example, we consider $K=2$ and that the ground truth probabilities are $\boldsymbol{\pi}_{\theta^*}=\{0.6, 0.4\}$. Each curve in Figure \ref{fig:L_tilde} represents ideal conditions where the prior evidence is $\mathbf{r}_{\theta}=\{\pi, (1-\pi)\}R_{\theta}$, for $\pi\in\{0.1,0.2,...,0.9\}$. This result shows that for a finite amount of prior evidence, $\widetilde{\Lambda}_\theta$ converges to a finite value between $(0,\infty)$. Additionally, this shows that for small amounts of prior evidence, there are some hypotheses that produce an asymptotic uncertain likelihood ratio that is greater than $1$. Although, as the amount of prior evidence increases, the hypotheses with $\frac{\mathbf{r}_{\theta}}{R_{\theta}}\ne \boldsymbol{\pi}_{\theta^*}$ eventually decrease to $0$.

This result shows the effect of drawing conclusions using uncertain models. If the agent does not have enough prior evidence about a hypothesis, the asymptotic uncertain likelihood ratio will converge to a value around $1$, which falls into the third conclusion of the uncertain likelihood ratio test. As the amount of prior evidence increases, the asymptotic uncertain likelihood ratio for hypotheses with a small KL divergence, i.e., $D_{KL}(\frac{\mathbf{r}_{\theta}}{R_{\theta}}||\boldsymbol{\pi}_{\theta^*})\approx 0$, will converge to a value bigger than $1$, which results in the agent accepting the hypotheses. However, as the KL divergence and the amount of prior evidence increases, the asymptotic uncertain likelihood ratio converges to a value less than $1$, which is therefore rejected according to the uncertain likelihood ratio test.

Furthermore, Figure \ref{fig:L_tilde} provides an understanding of the asymptotic uncertain likelihood ratio as the amount of evidence increases to infinity, i.e., the agent becomes certain. This result is analytically characterized in the following Corollary.

\begin{corollary} \label{lem:dog_ULR}
    For an infinite amount of prior evidence, i.e., $R_{\theta}\rightarrow \infty$, the asymptotic uncertain likelihood ratio, $\widetilde{\Lambda}_{\theta}$, diverges to
	\begin{eqnarray}
	\lim_{t\rightarrow \infty} \widetilde{\Lambda}_{\theta}(t) = \infty & if \ \boldsymbol{\pi}_{\theta}=\boldsymbol{\pi}_{\theta^*} \ \text{a.s.}, \ \text{and} \end{eqnarray}
	converges to 
	\begin{eqnarray}
	\lim_{t\rightarrow \infty} \widetilde{\Lambda}_{\theta}(t) = 0 & if \ \boldsymbol{\pi}_{\theta}\ne\boldsymbol{\pi}_{\theta^*} \  \text{a.s.}
\label{eq:dog_ULR}
	\end{eqnarray}
\end{corollary}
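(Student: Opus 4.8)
The plan is to start from the closed form of $\widetilde{\Lambda}_\theta$ obtained in Lemma~\ref{lem:ULR_lim} and study its dependence on $R_\theta$, using that $\mathbf{r}_\theta$ is a realization of a multinomial vector with parameters $R_\theta$ and $\boldsymbol{\pi}_\theta$, so that (along a coupling in which the counts are accumulated from one underlying i.i.d.\ source) $\mathbf{r}_\theta/R_\theta \to \boldsymbol{\pi}_\theta$ a.s.\ as $R_\theta \to \infty$ by the strong law of large numbers. Rewriting the $K$-dimensional Beta functions through Gamma functions gives the clean identity
\[
\widetilde{\Lambda}_\theta \;=\; \frac{\Gamma(R_\theta+K)}{(K-1)!\,\prod_{k=1}^K \Gamma(r_{k\theta}+1)}\;\prod_{k=1}^K \pi_{k\theta^*}^{\,r_{k\theta}} \;=\; \binom{R_\theta+K-1}{K-1}\; P_{\mathrm{mult}}\!\left(\mathbf{r}_\theta \mid R_\theta, \boldsymbol{\pi}_{\theta^*}\right),
\]
where $P_{\mathrm{mult}}(\cdot\mid R_\theta,\boldsymbol{\pi}_{\theta^*})$ is the multinomial$(R_\theta,\boldsymbol{\pi}_{\theta^*})$ probability mass evaluated at the observed counts, and the prefactor is a polynomial in $R_\theta$ of degree $K-1$, hence of order $R_\theta^{\,K-1}/(K-1)!$.

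Next I would take logarithms and apply Stirling's formula to every factorial, exactly as in the proof of Lemma~\ref{lem:ULR_lim}. Collecting terms and using $\sum_k r_{k\theta}=R_\theta$, the $R_\theta\log R_\theta$ and $-\sum_k r_{k\theta}\log r_{k\theta}$ contributions combine into $-R_\theta\, D_{KL}\!\big(\tfrac{\mathbf{r}_\theta}{R_\theta}\,\big\|\,\boldsymbol{\pi}_{\theta^*}\big)$, the half-log Stirling corrections together with the prefactor contribute $\tfrac{K-1}{2}\log R_\theta$, and the rest stays bounded provided $\mathbf{r}_\theta/R_\theta$ is bounded away from the boundary of the simplex (which holds a.s.\ when $\boldsymbol{\pi}_\theta$ and $\boldsymbol{\pi}_{\theta^*}$ are fully supported; if $\pi_{k\theta^*}=0$ while $r_{k\theta}>0$ then $\widetilde{\Lambda}_\theta=0$ trivially, and zero-count categories are simply dropped). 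This yields
\[
\log\widetilde{\Lambda}_\theta \;=\; \tfrac{K-1}{2}\log R_\theta \;-\; R_\theta\, D_{KL}\!\big(\tfrac{\mathbf{r}_\theta}{R_\theta}\,\big\|\,\boldsymbol{\pi}_{\theta^*}\big) \;+\; O(1), \qquad \text{a.s.}
\]

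The two regimes then fall out of the behavior of the divergence term. If $\boldsymbol{\pi}_\theta\ne\boldsymbol{\pi}_{\theta^*}$, the strong law and continuity of the divergence give $D_{KL}(\tfrac{\mathbf{r}_\theta}{R_\theta}\|\boldsymbol{\pi}_{\theta^*})\to D_{KL}(\boldsymbol{\pi}_\theta\|\boldsymbol{\pi}_{\theta^*})=:c$, with $c>0$ strictly by Gibbs' inequality; hence $\log\widetilde{\Lambda}_\theta\le -\tfrac{c}{2}R_\theta + O(\log R_\theta)\to-\infty$, i.e.\ $\widetilde{\Lambda}_\theta\to 0$ a.s. If $\boldsymbol{\pi}_\theta=\boldsymbol{\pi}_{\theta^*}$, the histogram $\mathbf{r}_\theta/R_\theta$ concentrates on $\boldsymbol{\pi}_{\theta^*}$, the divergence term is $o(\log R_\theta)$, and therefore $\log\widetilde{\Lambda}_\theta=\tfrac{K-1}{2}\log R_\theta + o(\log R_\theta)\to+\infty$, i.e.\ $\widetilde{\Lambda}_\theta\to\infty$ a.s. In the idealized case $\mathbf{r}_\theta=R_\theta\boldsymbol{\pi}_\theta$ used in Figure~\ref{fig:L_tilde}, the divergence term is exactly $0$ and this last step is immediate.

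The main obstacle is the case $\boldsymbol{\pi}_\theta=\boldsymbol{\pi}_{\theta^*}$ in the genuinely random setting: one must certify that the random quantity $R_\theta\, D_{KL}(\tfrac{\mathbf{r}_\theta}{R_\theta}\|\boldsymbol{\pi}_{\theta^*})$ is negligible against $\tfrac{K-1}{2}\log R_\theta$ --- it is $O_P(1)$ by the local CLT / $\chi^2$ approximation, and $O(\log\log R_\theta)$ a.s.\ by a law-of-the-iterated-logarithm (or Borel--Cantelli) bound on $\|\mathbf{r}_\theta/R_\theta-\boldsymbol{\pi}_{\theta^*}\|$ --- rather than merely $O(1)$, since a crude leading-order Stirling expansion alone only shows that $\log\widetilde{\Lambda}_\theta$ grows like a constant multiple of $\log R_\theta$ with an a priori indeterminate constant. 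An alternative, slightly cleaner route is to invoke the local central limit theorem for the multinomial directly, giving $P_{\mathrm{mult}}(\mathbf{r}_\theta\mid R_\theta,\boldsymbol{\pi}_{\theta^*})$ of order $R_\theta^{-(K-1)/2}$ at typical realizations and hence $\widetilde{\Lambda}_\theta$ of order $R_\theta^{(K-1)/2}$; the trade-off is that this most naturally delivers convergence in probability and needs a little extra care for the almost-sure statement.
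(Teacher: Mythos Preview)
Your argument is correct and takes a genuinely different route from the paper's. The paper observes that $\widetilde{\Lambda}_\theta = B(\mathbf{1})\, f(\boldsymbol{\pi}_{\theta^*}\mid\mathbf{r}_\theta)$, where $f(\cdot\mid\mathbf{r}_\theta)$ is the Dirichlet density with parameters $\mathbf{r}_\theta+\mathbf{1}$, and then argues via the Dirichlet mean and variance formulas that, as $R_\theta\to\infty$ and $\mathbf{r}_\theta/R_\theta\to\boldsymbol{\pi}_\theta$ a.s., this density concentrates to $\delta(\cdot-\boldsymbol{\pi}_\theta)$; evaluating at $\boldsymbol{\pi}_{\theta^*}$ then gives $\infty$ or $0$ according to whether $\boldsymbol{\pi}_{\theta^*}=\boldsymbol{\pi}_\theta$. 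Your approach instead recasts $\widetilde{\Lambda}_\theta$ as a polynomial prefactor times a multinomial point mass and expands via Stirling to obtain the sharp asymptotic $\log\widetilde{\Lambda}_\theta=\tfrac{K-1}{2}\log R_\theta - R_\theta\, D_{KL}(\tfrac{\mathbf{r}_\theta}{R_\theta}\,\|\,\boldsymbol{\pi}_{\theta^*})+O(1)$. This buys you explicit rates in both regimes and a clear large-deviations interpretation; the paper's argument is shorter and conceptually clean but is essentially a heuristic at the ``density $\to\delta$'' step. You are also right to isolate the one nontrivial point in the matched case --- controlling $R_\theta\, D_{KL}(\tfrac{\mathbf{r}_\theta}{R_\theta}\,\|\,\boldsymbol{\pi}_{\theta^*})$ as $O(\log\log R_\theta)$ a.s.\ via the law of the iterated logarithm (or arguing through a local CLT for the multinomial) --- which is precisely the work the paper's delta-function heuristic does not make explicit.
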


\begin{proof}
First, by (\ref{eq:ULR}), the uncertain likelihood ratio converges to a function of a Dirichlet distribution evaluated at the ground truth probabilities $\boldsymbol{\pi}_{\theta^*}$, i.e.,
$\widetilde{\Lambda}_\theta = B(\mathbf{1}) f(\boldsymbol{\pi}_{\theta^*}|\mathbf{r}_\theta)$.
The expected value and variance of $f(\boldsymbol{\pi}_{\theta^*}|\mathbf{r}_\theta)$ are $E[\pi_{k\theta}]=\frac{r_{k\theta}+1}{R_\theta +K}$ and $Var[\pi_{k\theta}] = \frac{(r_{k\theta} + 1)(R_\theta - r_{k\theta} + K - 1)}{(R_\theta + K)^2(R_\theta +K+1)}$. Then, as $R_\theta \to \infty$, $E[\pi_{k\theta}]\to \pi_{k\theta}$ and $Var[\pi_{k\theta}]\to 0$ a.s. due to the strong law of large numbers. Therefore, $f(\boldsymbol{\pi}_{\theta^*}|\boldsymbol{r}_\theta)= \delta(\pi_{k\theta^*}-\pi_{k\theta})$ a.s., where $\delta(\cdot)$ is the Dirac delta function. This causes the asymptotic uncertain likelihood ratio to diverge to $\infty$ if $\boldsymbol{\pi}_{\theta^*}=\boldsymbol{\pi}_\theta$ and converge to $0$ if $\boldsymbol{\pi}_{\theta^*}\ne\boldsymbol{\pi}_\theta$.
\end{proof}

Corollary \ref{lem:dog_ULR} shows the relationship of the uncertain likelihood ratio with the assumption typically presented in non-Bayesian social learning literature. When the amount of prior evidence tends to infinity, the set of hypotheses with $\boldsymbol{\pi}_{\theta}=\boldsymbol{\pi}_{\theta^*}$ will be accepted, while the remaining hypotheses will be rejected since $\widetilde{\Lambda}_{\theta}=0$. This becomes the classical result, except that our definition of the uncertain likelihood ratio ranges from $[0,\infty)$ rather than $[0,1]$. Therefore, our uncertain model generalizes the certain likelihood assumption by forming an analytical expression of the likelihood as a function of the prior evidence.

Overall, one can view the amount of prior evidence $R_\theta$ as the amount of precision for knowledge about $\boldsymbol{\pi}_\theta$.  Larger $R_\theta$ provides the opportunity for a larger uncertain likelihood ratio as long as $\boldsymbol{\pi}_\theta = \boldsymbol{\pi}_{\theta^*}$. However, larger $R_\theta$ also means that the uncertain likelihood ratio is more likely to drop below one as the divergence between $\boldsymbol{\pi}_\theta$ and $\boldsymbol{\pi}_{\theta^*}$ increases.  As $R_\theta \to \infty$, any small divergence is enough for the uncertain likelihood ratio to go to zero. One could view the idea that traditional social learning actually selects the hypothesis that has the smallest KL divergence with the observations (e.g., see \cite{NOU2015}) as an admission that the underlying models $\boldsymbol{\pi}_\theta$ are not precise enough to match the ground truth precisely. The uncertain likelihood ratio developed in this section provides a formal method to evaluate the hypotheses based upon that lack of precision.

\section{Distributed Non-Bayesian Learning with Uncertain Models} \label{sec:DNBLWUL}

Thus far, we have derived the uncertain likelihood ratio for an agent $i$ that has received a set of measurements $\boldsymbol{\omega}_{i1:t}$ up to time $t\ge 1$. However, in non-Bayesian social learning theory, the agent's belief $\mu_{it}(\theta)$ is updated using the likelihood of the measurement $\omega_{it+1}$ given that hypothesis $\theta$ is the ground truth, not the uncertain likelihood ratio over the entire sequence of measurements. Therefore, in order to incorporate the uncertain likelihood ratio into non-Bayesian social learning, we must derive the uncertain likelihood ratio update function.
\begin{lemma} \label{lem:ULR_Up}
	Given that agent $i$ receives the measurement $\omega_{it}=k$ at time $t$, then the uncertain likelihood ratio update function $\ell_{i\theta}(\mathbf{n}_{it-1},k|\mathbf{r}_{i\theta})$ at time $t$ is defined as {
		\begin{eqnarray} \label{eq:ell_update}
		\ell_{i\theta}(\mathbf{n}_{it-1},k|\mathbf{r}_{i\theta}) &=& \frac{(r_{ik\theta} + n_{ikt-1}+1)}{(R_{i\theta}+t+K-1)} \frac{(K+t-1)}{(n_{ikt-1} +1)} \nonumber \\ &= & \frac{\hat{\pi}_{r_{ik\theta}}}{\hat{\pi}_{0}},
		\end{eqnarray}
		where $\hat{\pi}_{r_{ik\theta}}=\frac{r_{ik\theta} + n_{ikt-1}+1}{R_{i\theta}+t+K-1}$ and $\hat{\pi}_{0}=\frac{n_{ikt-1} +1}{K+t-1}$ are estimates of the private signal probabilities incorporating the prior evidence and not, respectively.
		This allows the uncertain likelihood ratio to be expressed in the following recursive form
		\begin{eqnarray} \label{eq:URL_Prod_Up}
		\Lambda_{i\theta}(t) =\ell_{i\theta}(\mathbf{n}_{it-1},k|\mathbf{r}_{i\theta})
\Lambda_{i\theta}(t-1). 
		\end{eqnarray} }
\end{lemma}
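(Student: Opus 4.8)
The plan is to obtain the update function directly from the closed form of the uncertain likelihood ratio in \eqref{eq:L}, written for a single agent $i$ and hypothesis $\theta$:
\[
\Lambda_{i\theta}(t) = \frac{B(\mathbf{1})}{B(\mathbf{r}_{i\theta}+\mathbf{1})}\cdot\frac{B(\mathbf{r}_{i\theta}+\mathbf{n}_{it}+\mathbf{1})}{B(\mathbf{n}_{it}+\mathbf{1})}.
\]
Since agent $i$ observes $\omega_{it}=k$ at time $t$, its histogram updates as $\mathbf{n}_{it}=\mathbf{n}_{it-1}+\mathbf{e}_k$, where $\mathbf{e}_k$ denotes the $k$-th standard basis vector, and $\sum_{k'=1}^K n_{ik't-1}=t-1$. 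First I would form the ratio $\Lambda_{i\theta}(t)/\Lambda_{i\theta}(t-1)$; the prefactor $B(\mathbf{1})/B(\mathbf{r}_{i\theta}+\mathbf{1})$ does not depend on $t$ and cancels, leaving
\[
\frac{\Lambda_{i\theta}(t)}{\Lambda_{i\theta}(t-1)}=\frac{B(\mathbf{r}_{i\theta}+\mathbf{n}_{it-1}+\mathbf{e}_k+\mathbf{1})}{B(\mathbf{r}_{i\theta}+\mathbf{n}_{it-1}+\mathbf{1})}\cdot\frac{B(\mathbf{n}_{it-1}+\mathbf{1})}{B(\mathbf{n}_{it-1}+\mathbf{e}_k+\mathbf{1})}.
\]

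The key step is the elementary identity, valid for any positive vector $\boldsymbol{\alpha}=(\alpha_1,\dots,\alpha_K)$,
\[
\frac{B(\boldsymbol{\alpha}+\mathbf{e}_k)}{B(\boldsymbol{\alpha})}=\frac{\Gamma(\alpha_k+1)}{\Gamma(\alpha_k)}\cdot\frac{\Gamma\!\bigl(\textstyle\sum_{k'}\alpha_{k'}\bigr)}{\Gamma\!\bigl(\textstyle\sum_{k'}\alpha_{k'}+1\bigr)}=\frac{\alpha_k}{\sum_{k'=1}^K\alpha_{k'}},
\]
which follows at once from the definition of the $K$-dimensional Beta function and $\Gamma(x+1)=x\Gamma(x)$. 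Applying it once with $\boldsymbol{\alpha}=\mathbf{r}_{i\theta}+\mathbf{n}_{it-1}+\mathbf{1}$ (whose entries sum to $R_{i\theta}+(t-1)+K$) and once, inverted, with $\boldsymbol{\alpha}=\mathbf{n}_{it-1}+\mathbf{1}$ (whose entries sum to $(t-1)+K$) gives
\[
\frac{\Lambda_{i\theta}(t)}{\Lambda_{i\theta}(t-1)}=\frac{r_{ik\theta}+n_{ikt-1}+1}{R_{i\theta}+t+K-1}\cdot\frac{t+K-1}{n_{ikt-1}+1},
\]
which is exactly $\ell_{i\theta}(\mathbf{n}_{it-1},k|\mathbf{r}_{i\theta})$ as defined in \eqref{eq:ell_update}; regrouping the two factors as $\hat{\pi}_{r_{ik\theta}}/\hat{\pi}_0$ is immediate, and rearranging the last display yields the recursion \eqref{eq:URL_Prod_Up}. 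Finally I would note that $\Lambda_{i\theta}(0)=1$, since $\mathbf{n}_{i0}=\mathbf{0}$ makes numerator and denominator of \eqref{eq:L} coincide, so iterating the recursion telescopes back to \eqref{eq:L}, confirming consistency.

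I do not expect a genuine obstacle: the argument is a direct Beta/Gamma-function manipulation. The only place requiring care is the bookkeeping of the simplex-sum denominators — in particular using $\sum_{k'}n_{ik't-1}=t-1$ rather than $t$ at time $t-1$, which is what produces the $R_{i\theta}+t+K-1$ and $t+K-1$ terms — and tracking that only the $k$-th coordinate of the histogram increments. Alternatively, the same computation can be carried out entirely in terms of Gamma functions starting from the expansion of $\Lambda_{i\theta}(t)$ already displayed in the proof of Lemma~\ref{lem:ULR_lim}, which makes the cancellations even more transparent.
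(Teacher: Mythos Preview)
Your proposal is correct and follows essentially the same approach as the paper: both compute the ratio $\Lambda_{i\theta}(t)/\Lambda_{i\theta}(t-1)$ and simplify via $\Gamma(x+1)=x\Gamma(x)$, with the paper expanding the Beta functions into Gamma functions explicitly while you package the same cancellation into the single-increment identity $B(\boldsymbol{\alpha}+\mathbf{e}_k)/B(\boldsymbol{\alpha})=\alpha_k/\sum_{k'}\alpha_{k'}$. The telescoping/recursion observation and the bookkeeping $\sum_{k'}n_{ik't-1}=t-1$ are handled identically in both.
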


\begin{proof}
The uncertain likelihood ratio update is derived by expressing $\Lambda_{i\theta}(t)$ as a series of telescoping products.
	\begin{eqnarray} \label{eq:URL_tell}
	\Lambda_{i\theta}(t) =  \prod_{\tau=1}^t \ell^{i\theta}(\mathbf{n}_{i\tau-1},k|\mathbf{r}_{i\theta}) = \prod_{\tau=1}^t \frac{\Lambda_{i\theta}(\tau)}{\Lambda_{i\theta}(\tau-1)}, \nonumber 
	\end{eqnarray}
	since $\Lambda_{i\theta}(0)=1$. Therefore,
	\begin{eqnarray} \label{eq:ell}
	& & \ell_{i\theta}(\mathbf{n}_{i\tau-1},k|\mathbf{r}_{i\theta}) = \frac{\Lambda_{i\theta}(\tau)}{\Lambda_{i\theta}(\tau-1)} \nonumber \\ &=& \frac{B(\mathbf{r}_{i\theta}+\mathbf{n}_{i\tau}+\mathbf{1}) B(\mathbf{n}_{i\tau-1}+\mathbf{1})}{B(\mathbf{n}_{i\tau}+\mathbf{1})B(\mathbf{r}_{i\theta}+\mathbf{n}_{i\tau-1}+\mathbf{1})} \nonumber \\
	&=& \frac{\Gamma(R_{i\theta}+K+\sum_{k=1}^K n_{ik\tau-1})\Gamma(K+\sum_{k=1}^K n_{ik\tau})}{\Gamma(R_{i\theta}+K+\sum_{k=1}^K n_{ik\tau})\Gamma(K+\sum_{k=1}^K n_{ik\tau-1})} \nonumber \\
	& & \cdot \prod_{k=1}^K \frac{\Gamma(r_{ik\theta} + n_{ik\tau}+1)\Gamma(n_{ik\tau-1}+1)}{\Gamma(r_{ik\theta} + n_{ik\tau-1}+1)\Gamma(n_{ik\tau}+1)}
	\end{eqnarray}
	Then, if $\omega_{i\tau}=k$ is received, $n_{ik\tau}=n_{ik\tau-1}+1$ and $n_{i\bar{k}\tau}=n_{i\bar{k}\tau-1}$ for all $\bar{k} \in \boldsymbol{\Omega}\setminus \{k\}$. Recall that $\sum_{k=1}^K n_{ikt} = t$.
Therefore because $\Gamma(x+1)=x\Gamma(x)$, (\ref{eq:ell}) simplifies to
	\begin{eqnarray} \label{eq:ell1}
	\ell_{i\theta}(\mathbf{n}_{i\tau-1},k|\mathbf{r}_{i\theta}) &=& \frac{(r_{ik\theta} + n_{ik\tau-1}+1)(K+\tau-1)}{(R_{i\theta}+\tau+K-1)(n_{ik\tau-1} +1)}. \nonumber
	\end{eqnarray}
\end{proof}

The likelihood of the measurement $\omega_{it+1}$ given that hypothesis $\theta$ is the ground truth provides the following intuition. The numerator $\hat{\pi}_{r_{ik\theta}}$ represents the estimate of $\pi_{ik\theta}$ given the prior evidence $r_{ik\theta}$ and accumulated counts $n_{ik\theta}$, while the denominator $\hat{\pi}_0$ represents the estimate of $\pi_{ik\theta^*}$ given $0$ prior evidence and the accumulated counts $n_{ik\theta}$. The estimate $\hat{\pi}_0 \to \pi_{ik\theta^*}$ as $t\rightarrow \infty$ a.s. due to the strong law of large number, whereas the estimate $\hat{\pi}_{r_{ik\theta}}$ will converge based on the amount of prior evidence. If the prior evidence is finite, $\hat{\pi}_{r_{ik\theta}}\to \pi_{ik\theta^*}$ as $t\rightarrow \infty$ a.s., while as $R_{i\theta}\to\infty$, $\hat{\pi}_{r_{ik\theta}}\to \pi_{ik\theta}$ $\forall t\ge 0$ a.s. due to the strong law of large numbers. These properties are captured in the following lemmas.

\begin{lemma} \label{lem:update_lim}
    The likelihood of the measurement $\omega_{it+1}=k$ given that hypothesis $\theta^*$ is the ground truth has the following properties:
    \begin{enumerate}
        \item For finite evidence $R_{i\theta}$, $\lim_{t\rightarrow \infty} \ell_{i\theta}(\mathbf{n}_{it},k|\mathbf{r}_{i\theta}) = 1$,  $\forall k\in\boldsymbol{\Omega}$ a.s., and
        \item For infinite evidence (i.e., $R_{i\theta} \to \infty$), $\lim_{t\rightarrow \infty} \ell_{i\theta}(\mathbf{n}_{it},k|\mathbf{r}_{i\theta}) = \frac{\pi_{ik\theta}}{\pi_{ik\theta^*}}$, $\forall k\in\boldsymbol{\Omega}$ a.s..
    \end{enumerate}
\end{lemma}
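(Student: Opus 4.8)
The plan is to work directly with the closed form of $\ell_{i\theta}$, written as the ratio $\hat{\pi}_{r_{ik\theta}}/\hat{\pi}_{0}$ in \eqref{eq:ell_update}, and to control numerator and denominator separately via the strong law of large numbers (SLLN) applied to two distinct sampling mechanisms: the private-signal stream $\{\omega_{i\tau}\}_{\tau\ge1}$, which governs the histogram $\mathbf{n}_{it}$, and (only in the certain case) the multinomial prior-evidence draw, which governs $\mathbf{r}_{i\theta}$.

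First I would establish the common building block for the denominator. Since the $\{\omega_{i\tau}\}$ are i.i.d.\ with $\Pr(\omega_{i\tau}=k)=\pi_{ik\theta^*}$, the SLLN gives $n_{ikt}/t\to\pi_{ik\theta^*}$ a.s.\ for every $k\in\boldsymbol{\Omega}$, hence $\hat{\pi}_{0}=(n_{ikt}+1)/(t+K-1)\to\pi_{ik\theta^*}$ a.s. (I will assume $\pi_{ik\theta^*}>0$ for all $k$ so that this limit is nonzero and the quotient rule for a.s.\ limits applies; otherwise the statement is read in the obvious degenerate sense.) For part (1), $R_{i\theta}$ and each $r_{ik\theta}$ are fixed finite constants, so dividing numerator and denominator of $\hat{\pi}_{r_{ik\theta}}=(r_{ik\theta}+n_{ikt}+1)/(R_{i\theta}+t+K-1)$ by $t$ and again using $n_{ikt}/t\to\pi_{ik\theta^*}$ shows $\hat{\pi}_{r_{ik\theta}}\to\pi_{ik\theta^*}$ a.s.; combining with the limit of $\hat{\pi}_{0}$ yields $\ell_{i\theta}(\mathbf{n}_{it},k\mid\mathbf{r}_{i\theta})\to1$ a.s.

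For part (2), I would follow the two-stage limit indicated in the paragraph preceding the lemma. First let $R_{i\theta}\to\infty$ with $t$ held fixed: by the SLLN applied to the prior-evidence draw, $r_{ik\theta}/R_{i\theta}\to\pi_{ik\theta}$ a.s., and since $n_{ikt}+1$ and $t+K-1$ are bounded for fixed $t$, $\hat{\pi}_{r_{ik\theta}}\to\pi_{ik\theta}$ a.s.\ for every fixed $t$ --- equivalently, the certain model of Definition~\ref{assum:dog_distribution} replaces $\hat{\pi}_{r_{ik\theta}}$ by the exact value $\pi_{ik\theta}$. With the numerator now equal to the constant $\pi_{ik\theta}$, letting $t\to\infty$ and using $\hat{\pi}_{0}\to\pi_{ik\theta^*}$ from the previous step gives $\ell_{i\theta}(\mathbf{n}_{it},k\mid\mathbf{r}_{i\theta})\to\pi_{ik\theta}/\pi_{ik\theta^*}$ a.s.

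The step I expect to require the most care is the order of limits in part (2): the claim is about first becoming certain ($R_{i\theta}\to\infty$) and only then running the learning process ($t\to\infty$), not a simultaneous limit, so I would state this explicitly rather than gloss over it. The interchange is harmless here because for each fixed $t$ the convergence in $R_{i\theta}$ is to a deterministic value, so no randomness is lost, and the resulting deterministic family (indexed by $t$) then converges as $t\to\infty$ by the denominator argument. Everything else is a routine combination of the SLLN with the continuous-mapping/quotient rule for almost-sure limits, so I would not belabor those computations.
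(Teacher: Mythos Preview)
Your proposal is correct and follows essentially the same route as the paper: apply the SLLN to $n_{ikt}/t$ to control $\hat\pi_0$, do the same (with the fixed finite $\mathbf r_{i\theta}$) for $\hat\pi_{r_{ik\theta}}$ in part (1), and for part (2) take the two-stage limit $R_{i\theta}\to\infty$ (SLLN on the prior-evidence draw) followed by $t\to\infty$, then combine via the quotient rule. The only substantive difference is that the paper explicitly addresses the degenerate case $\pi_{ik\theta^*}=0$ by noting that $\omega_{it}=k$ is then never observed (so the value of $\ell_{i\theta}$ at that $k$ is irrelevant to the learning dynamics), whereas you assume $\pi_{ik\theta^*}>0$ and defer the rest to an ``obvious degenerate sense''; your extra care about the order of limits in part (2) is, conversely, more explicit than the paper's treatment.
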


\begin{proof}
First, since each private signal $\omega_{i\tau} \in \boldsymbol{\omega}_{i1:t}$ is i.i.d. and drawn from the $K$-state multinomial distribution with probabilities $\boldsymbol{\pi}_{i\theta^*}$, the strong law of large numbers leads to $\frac{n_{ikt}}{t} \to \pi_{ik\theta^*}$ for all $k\in \Omega$ a.s. Then, since $\hat{\pi}_0=\frac{\frac{n_{ikt}}{t}t+1}{t+K-1}$ and is continuous at $\pi_{ik\theta^*}$, $\hat{\pi}_0(\frac{n_{ikt}}{t})\to \pi_{ik\theta^*}$ with probability $1$ as $t\to \infty$. Similarly, when the prior evidence is finite and since $\hat{\pi}_{r_{ik\theta}}=\frac{r_{ik\theta}+\frac{n_{ikt}}{t}t+1}{R_{i\theta}+t+K-1}$ is continuous at $\pi_{ik\theta^*}$, then $\hat{\pi}_{r_{ik\theta}}(\frac{n_{ikt}}{t})\to \pi_{ik\theta^*}$ with probability $1$ as $t\to\infty$. Thus, when the prior evidence is finite and $\pi_{ik\theta^*}>0$, condition 1 holds. Furthermore, if $\pi_{ik\theta^*}=0$, the private signal $\omega_{it}=k$ will never be received when $\theta=\theta^*$. Thus
    $\hat{\pi}_{r_{ik\theta}} = \hat{\pi}_0$ as they both go to zero as $t \to \infty$ and
    condition 1 still holds.

    When the amount of prior evidence for hypothesis $\theta$ tends to infinity and is drawn from the distribution $\boldsymbol{\pi}_{i\theta}$, the strong law of large numbers leads to $\frac{r_{ik\theta}}{R_{i\theta}}\to \pi_{ik\theta}$ with probability $1$. Then, since $\hat{\pi}_{r_{ik\theta}}=\frac{\frac{r_{ik\theta}}{R_{i\theta}}R_{i\theta} +\frac{n_{ikt}}{t}t+1}{R_{i\theta}+t+K-1}$ is continuous at $\frac{r_{ik\theta}}{R_{i\theta}}=\pi_{ik\theta}$, $\hat{\pi}_{r_{ik\theta}}(\frac{r_{ik\theta}}{R_{i\theta}})\to \pi_{ik\theta}$ with probability 1 as $R_{i\theta}\to \infty$. Then, as $t\to\infty$, $\hat{\pi}_0(\frac{n_{ikt}}{t})\to\pi_{ik\theta^*}$ with probability $1$ as stated above. Therefore condition 2 holds. When $\pi_{ik\theta^*}=0$, the likelihood ratio goes to infinity, but the private signal $\omega_{it}=k$ will never be received as $\theta^*$ is the ground truth. 
\end{proof}

This immediately results in the following corollary. 

\begin{corollary}\label{cor:ell_dog}
When the agent is certain, i.e., $R_{i\theta}\to\infty$, and $\mathbf{r}_{i\theta}$ is drawn from the distribution $\boldsymbol{\pi}_{i\theta} = \boldsymbol{\pi}_{i\theta^*}$, then the likelihood update of the measurement $\omega_{it+1}=k$ converges to 
\begin{eqnarray}
    \lim_{t\to\infty, R_{i\theta}\to\infty}  \ell_{i\theta}(\mathbf{n}_{it},\omega_{it+1}|\mathbf{r}_{i\theta}) = 1, \ \text{a.s.}
\end{eqnarray}
\end{corollary}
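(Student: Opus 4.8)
The plan is to obtain this directly from Lemma~\ref{lem:update_lim}, part~2, with essentially no additional work. That lemma already establishes that for an infinite amount of prior evidence drawn from $\boldsymbol{\pi}_{i\theta}$, the update function satisfies $\lim_{t\to\infty}\ell_{i\theta}(\mathbf{n}_{it},k|\mathbf{r}_{i\theta}) = \pi_{ik\theta}/\pi_{ik\theta^*}$ almost surely for every $k\in\boldsymbol{\Omega}$. The only new hypothesis in Corollary~\ref{cor:ell_dog} is that $\boldsymbol{\pi}_{i\theta}=\boldsymbol{\pi}_{i\theta^*}$, under which this limiting ratio is identically $1$.

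First I would fix an outcome $k$ with $\pi_{ik\theta^*}>0$ and substitute $\pi_{ik\theta}=\pi_{ik\theta^*}$ into the limit furnished by Lemma~\ref{lem:update_lim}(2); this yields $\pi_{ik\theta}/\pi_{ik\theta^*}=1$, hence $\ell_{i\theta}(\mathbf{n}_{it},k|\mathbf{r}_{i\theta})\to 1$ a.s. Since $\omega_{it+1}$ takes values in the finite alphabet $\boldsymbol{\Omega}$ and the convergence holds simultaneously for each of the finitely many $k$ (the intersection of finitely many probability-one events still has probability one), the convergence also holds when evaluated at the random index $\omega_{it+1}$. For any $k$ with $\pi_{ik\theta^*}=0$, the event $\{\omega_{it+1}=k\}$ has probability zero because the private signals are drawn i.i.d.\ from $\boldsymbol{\pi}_{i\theta^*}$; such outcomes therefore lie on a null set and do not affect the almost-sure claim. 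This is exactly the situation anticipated in the last sentence of the proof of Lemma~\ref{lem:update_lim}.

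One sentence should address the order of limits: $\lim_{t\to\infty,\,R_{i\theta}\to\infty}$ is read as in Lemma~\ref{lem:update_lim}(2), i.e., first $R_{i\theta}\to\infty$ so that $r_{ik\theta}/R_{i\theta}\to\pi_{ik\theta}$ by the strong law of large numbers, and then $t\to\infty$ so that $n_{ikt}/t\to\pi_{ik\theta^*}$; with $\boldsymbol{\pi}_{i\theta}=\boldsymbol{\pi}_{i\theta^*}$ the two estimates $\hat{\pi}_{r_{ik\theta}}$ and $\hat{\pi}_{0}$ converge to the common value $\pi_{ik\theta^*}$, so their ratio $\ell_{i\theta}$ converges to $1$. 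I expect no genuine obstacle here: the only care required is the bookkeeping of the finitely many null events corresponding to zero-probability outcomes, which is entirely routine.
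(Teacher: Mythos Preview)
Your proposal is correct and matches the paper's approach: the paper states that Corollary~\ref{cor:ell_dog} ``immediately results'' from Lemma~\ref{lem:update_lim} and gives no further argument, so your derivation---substituting $\boldsymbol{\pi}_{i\theta}=\boldsymbol{\pi}_{i\theta^*}$ into part~2 of that lemma---is exactly what is intended. Your extra bookkeeping on null-probability outcomes and the order of limits is more explicit than the paper but entirely in the same spirit.
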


The above lemma and corollary show that modeling with uncertainty results in a likelihood function that varies with time. Furthermore, Lemma \ref{lem:update_lim} condition 2 and Corollary \ref{cor:ell_dog} show that in the certain case, the numerator of the likelihood function is a constant and is modeled in the same manner as the traditional non-Bayesian social learning theory. Thus, the proposed uncertain likelihood ratio translates to a likelihood function that models uncertain and certain conditions based on the amount of prior evidence. 

Therefore, at time $t\ge1$, agent $i$ will combine their neighbors' beliefs of hypothesis $\theta$ at time $t$ and update their belief of $\theta$ using the likelihood update (\ref{eq:el;_def}) of the private signal at time $t+1$ according to (\ref{eq:main_algo}). Then, the agent can interpret hypothesis $\theta$ using the uncertain likelihood ratio test, except $\Lambda_{i\theta}(t)$ is now replaced with the agents belief $\mu_{it}(\theta)$.

\subsection{Asymptotic Behavior on Arbitrary Graphs}

Next, we present the proof of main results in Theorem~\ref{thm:ULR_Con}.  First, we begin by providing three auxiliary lemmas. The first lemma provides a result about the convergence of a product of doubly stochastic matrices provided in \cite{NOU2017}.

\begin{lemma}[Lemma 5 in~\cite{NOU2017}] \label{lem:doubly}
For a stationary doubly stochastic matrix, we have for all $t>0$
\begin{eqnarray}
\left \| \mathbf{A}^t - \frac{1}{m} \mathbf{11}' \right\| \le \sqrt{2}m \lambda^t
\end{eqnarray}
	where $\|\cdot\|$ is the spectral norm, $\lambda= 1-\frac{\eta}{4m^2}$, and $\eta$ is a positive constant s.t. if $A_{ij}>0$, then $A_{ij}\ge \eta$.
\end{lemma}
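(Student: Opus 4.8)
The plan is to reduce the $t$-th power of $\mathbf{A}$ to the $t$-th power of the ``centered'' matrix $\mathbf{A}-J$, where $J := \frac{1}{m}\mathbf{11}'$, and then to quantify the geometric decay of that power. Since $\mathbf{A}$ is doubly stochastic, $\mathbf{A}\mathbf{1}=\mathbf{1}$ and $\mathbf{1}'\mathbf{A}=\mathbf{1}'$, so $\mathbf{A}J = J\mathbf{A} = J$ and $J^2 = J$; a one-line induction then gives the identity $\mathbf{A}^t - J = (\mathbf{A}-J)^t$ for all $t\ge 1$, because every cross term in the expansion collapses back onto $J$. Hence $\|\mathbf{A}^t - J\| = \|(\mathbf{A}-J)^t\|$, and since $\mathbf{A}-J$ annihilates $\mathbf{1}$ and preserves $\mathbf{1}^\perp$, the task becomes: bound how fast $\mathbf{A}$ contracts on the $(m-1)$-dimensional ``disagreement'' subspace $\mathbf{1}^\perp$.

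For the rate, I would first treat the case where $\mathbf{A}$ is symmetric (the relevant one on an undirected $\mathcal{G}$, e.g. a Lazy Metropolis matrix): then $[\mathbf{A}^t - J]_{ij} = \sum_{\ell=2}^{m}\lambda_\ell^{\,t}\, v_\ell(i) v_\ell(j)$ for an orthonormal eigenbasis, and Cauchy--Schwarz over $\ell$ together with $\sum_{\ell} v_\ell(i)^2 = 1$ yields the entrywise estimate $|[\mathbf{A}^t-J]_{ij}| \le |\lambda_2(\mathbf{A})|^{t}$. Summing over the $m^2$ entries gives $\|\mathbf{A}^t - J\| \le \|\mathbf{A}^t-J\|_F \le m\,|\lambda_2(\mathbf{A})|^{t}$, so it remains to prove the spectral-gap bound $|\lambda_2(\mathbf{A})| \le 1 - \eta/(4m^2)$. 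This is where connectivity and aperiodicity enter: connectivity forces the graph diameter to be at most $m-1$, the hypothesis $A_{ij}\ge \eta$ lower-bounds the weight of every edge on a shortest path, and aperiodicity keeps the most negative eigenvalue away from $-1$; a standard conductance (Cheeger) or canonical-paths argument on the weighted graph then delivers the $1/m^2$-type gap. In the general, possibly non-symmetric, doubly stochastic case the clean eigen-identity is lost, so instead one shows that $\mathbf{A}^{T_0}$ for some $T_0 = O(m)$ is scrambling --- any two of its rows share mass at least $\beta = \Omega(\eta/m)$ --- so that its Dobrushin ergodicity coefficient is $\le 1-\beta$; submultiplicativity of that coefficient propagates the contraction to all $t$, an entrywise bound $|[\mathbf{A}^t-J]_{ij}| = O(\lambda^t)$ follows, and the $\sqrt{2}\,m$ prefactor of the claimed inequality comes from converting this entrywise bound to the spectral norm via the Frobenius norm.

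The main obstacle is this spectral-gap / ergodicity estimate: the point is to obtain a rate $\lambda = 1-\eta/(4m^2)$ that is only \emph{polynomially} close to $1$ in $m$, rather than the exponentially weak $1-\eta^{m}$ that the crude ``$\mathbf{A}^{m}$ is entrywise $\ge \eta^{m}$'' argument produces. Getting the polynomial rate requires genuinely exploiting the graph's connectivity --- bounding path lengths by the diameter and aggregating per-edge weight factors carefully --- while using double stochasticity to keep the estimate symmetric in the row and column indices, and it is also where aperiodicity must be invoked correctly. Since the statement is quoted verbatim as Lemma~5 of \cite{NOU2017}, for the precise constants and the complete verification I would defer to that reference.
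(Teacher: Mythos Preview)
Your proposal is a reasonable sketch of the standard argument, and you correctly conclude by deferring to the cited reference for the precise constants. Note, however, that the paper does not prove this lemma at all: it is stated as Lemma~5 of \cite{NOU2017} and invoked without proof, so there is no ``paper's own proof'' to compare against beyond the citation itself. Your final sentence --- deferring to \cite{NOU2017} --- is therefore exactly what the paper does, and the preceding outline you give (the identity $\mathbf{A}^t - J = (\mathbf{A}-J)^t$, the eigen-decomposition in the symmetric case, the Dobrushin/scrambling argument in the general case, and the need for a polynomial rather than exponential spectral gap) is additional content that the paper omits.
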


The above lemma shows that every element of a repeated product of a doubly stochastic matrices will converge to $1/m$. Next, we present the bounds of the likelihood update to show that $\ell_{i\theta}(\omega_{it})$ is bounded.
\begin{lemma} \label{lem:el_b}
	For an uncertain likelihood, i.e., $R_{i\theta}<\infty$, the likelihood update is bounded as follows.
	\begin{eqnarray} \label{eq:el_b}
	\frac{1}{R_{i\theta}+K} \le \ell_{i\theta}(\mathbf{n}_{it-1},\omega_{it}|\mathbf{r}_{i\theta})  \le \max_{k\in\Omega}(r_{ik\theta})+1.
	\end{eqnarray}
\end{lemma}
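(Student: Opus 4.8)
The plan is to use the closed form of the update function established in Lemma~\ref{lem:ULR_Up} and to bound it as a product of two elementary factors. Writing $\omega_{it}=k$, equation~\eqref{eq:ell_update} gives
\begin{equation*}
\ell_{i\theta}(\mathbf{n}_{it-1},k|\mathbf{r}_{i\theta}) = \frac{r_{ik\theta}+n_{ikt-1}+1}{n_{ikt-1}+1}\cdot\frac{K+t-1}{R_{i\theta}+t+K-1},
\end{equation*}
and I would call the two factors $A$ and $B$. The only ingredients needed are $r_{ik\theta}\ge 0$, $R_{i\theta}\ge 0$, $n_{ikt-1}\ge 0$, $K\ge 2$, and that the expression is evaluated at time indices $t\ge 1$.

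For the upper bound I would bound each factor separately. Since $n_{ikt-1}\ge 0$, $A = 1+\frac{r_{ik\theta}}{n_{ikt-1}+1}\le 1+r_{ik\theta}\le 1+\max_{k\in\Omega}r_{ik\theta}$; and since the denominator of $B$ exceeds its numerator by exactly $R_{i\theta}\ge 0$, we have $B\le 1$. Multiplying gives $\ell_{i\theta}\le \max_{k\in\Omega}(r_{ik\theta})+1$. For the lower bound, $A\ge 1$ because $r_{ik\theta}\ge 0$, while $B$ is nondecreasing in $t$ (its $t$-derivative is $R_{i\theta}/(R_{i\theta}+t+K-1)^2\ge 0$), so on $t\ge 1$ its minimum is attained at $t=1$, where $B = K/(R_{i\theta}+K)\ge 1/(R_{i\theta}+K)$ using $K\ge 1$. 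Combining the two lower bounds yields $\ell_{i\theta}\ge 1/(R_{i\theta}+K)$.

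There is no real obstacle here; the two points deserving a moment of care are (i) observing that $A$ is decreasing in $n_{ikt-1}$, so the extreme case for the upper bound is $n_{ikt-1}=0$ and no upper estimate on the count is required, and (ii) checking the monotonicity of $B$ in $t$ so that the relevant minimum sits at $t=1$ rather than at some interior or limiting value (note $B\to 1$ as $t\to\infty$). In fact this argument yields the slightly sharper lower bound $K/(R_{i\theta}+K)$, but the weaker $1/(R_{i\theta}+K)$ stated in the lemma is all that will be used in the proof of Theorem~\ref{thm:ULR_Con}.
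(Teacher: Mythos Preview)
Your argument is correct. It is built on the same monotonicity observations as the paper's proof, but it is organized differently: you factor the update as $A(n_{ikt-1})\cdot B(t)$ and bound the two factors independently, whereas the paper keeps the expression together, takes logarithms, and performs a sequential optimization (first over $n_{ikt-1}$ with $t$ fixed, then over $t$ after substituting the extremal value of $n_{ikt-1}$). Because your factorization separates the variables, you never need to worry about the implicit constraint $0\le n_{ikt-1}\le t-1$ that the paper's two-stage optimization carries along, and you avoid the calculus step for $A$ altogether; the trade-off is only that the paper's route makes the exact maximizing/minimizing configurations explicit, while yours bounds each factor on its full domain. Your remark that the argument actually yields the sharper lower bound $K/(R_{i\theta}+K)$ is correct and is implicit in the paper's proof as well.
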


\begin{proof}
	Consider that the agent $i$ has received $n_{ikt-1}\in[0, t-1]$ private signals for attribute $k$ and $n_{i\bar{k}t-1}=t-1-n_{ikt-1}$ for other signals up to time $t-1$ where $\bar{k} = \boldsymbol{\Omega}\setminus \{k\}$. Then, if agent $i$ receives $\omega_{it}=k$ at time $t$, the log of the likelihood update is
	\begin{eqnarray}\label{eq:lit_n1}
	\log(\ell_{i\theta}(\mathbf{n}_{it-1},k|\mathbf{r}_{i\theta}) = \log\left((r_{ik\theta}+n_{ikt-1}+1)(t+K-1)\right) \nonumber \\
	- \log\left((R_{i\theta}+t+K-1)(n_{ikt-1}+1)\right). \nonumber
	\end{eqnarray}
	The partial derivatives of the update with respect to $n_{ikt-1}$ is
\begin{eqnarray}
\frac{\partial \log(\ell_{i\theta}(\mathbf{n}_{it-1},k|\mathbf{r}_{i\theta}))}{\partial n_{ikt-1}} = \frac{1}{(r_{ik\theta}+n_{ikt-1}+1)} - \frac{1}{(n_{ikt-1}+1)} \nonumber \\
=\frac{-r_{ik\theta}}{(r_{ik\theta}+n_{ikt-1}+1)(n_{ikt-1}+1)} < 0. \nonumber
\end{eqnarray}
	
	Therefore, since the function $\log(\ell_{i\theta}(\mathbf{n}_{it-1},k|\mathbf{r}_{i\theta}))$ is monotonically decreasing with respect to $n_{ikt-1}$, the maximum and minimum occur at $n_{ikt-1}=0$ and $n_{ikt-1}=t-1$, respectively. To maximize the update, setting  $n_{ikt-1}=0$ leads to
\begin{eqnarray}
	\log(\ell_{i\theta}(\mathbf{n}_{it-1},k|\mathbf{r}_{i\theta})) = \log\left((r_{ik\theta}+1)(t+K-1)\right) \nonumber \\ 
	- \log\left((R_{i\theta}+t+K-1)\right) \nonumber
\end{eqnarray}
so that the derivative of the log-update with respect to $t$ is
\begin{equation*}
\frac{d \log(\ell_{i\theta}(\mathbf{n}_{it-1},k|\mathbf{r}_{i\theta}))}{d t} = \frac{R_{i\theta}}{(R_{i\theta}+t+K-1)(t+K-1)} > 0.
\end{equation*}
So the update is maximized by letting $t \to \infty$ so that $\log(\ell_{i\theta}(\mathbf{n}_{it-1},k|\mathbf{r}_{i\theta})) \le \log(r_{ik\theta}+1) \le \log\left(\max_{k\in\Omega}(r_{ik\theta})+1\right)$.
Now to minimize the update, setting $n_{ikt-1}=t-1$ leads to
\begin{eqnarray}
	\log(\ell_{i\theta}(\mathbf{n}_{it-1},k|\mathbf{r}_{i\theta})) = \log\left((r_{ik\theta}+t)(t+K-1)\right) \nonumber \\ 
	- \log\left((R_{i\theta}+t+K-1)t\right). \nonumber
\end{eqnarray}
Now $\log(t+K-1)-\log(t) \ge 0$ and $\log(r_{ik\theta}+t)-\log(R_{i\theta}+t+K-1)$ is minimized over $t \ge 1$ at $t=1$ so that $\log(r_{ik\theta}+t)-\log(R_{i\theta}+t+K-1) \ge \log(r_{ik\theta}+1)-\log(R_{i\theta}+K) \ge -\log(R_{i\theta}+K)$. Thus, $\log(\ell_{i\theta}(\mathbf{n}_{it-1},k|\mathbf{r}_{i\theta})) \ge -\log(R_{i\theta}+K)$ for all $k \in \Omega$.
\end{proof}
Finally, we recall Lemma 3.1 from \cite{ram10}, which provides a convergence property of scalar sequences. 
\begin{lemma}[Lemma $3.1$ in \cite{ram10}]\label{lemm:ram}
		Let $\{\gamma_k \}$ be a scalar sequence. If  $\lim_{k \to \infty} \gamma_k = \gamma$ and $0\leq \beta \leq 1$, then $\lim_{k\to \infty} \sum_{l=0}^{k} \beta^{k{-}l} \gamma_l = \frac{\gamma}{1{-}\beta}$.
\end{lemma}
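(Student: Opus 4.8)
The plan is to establish this as a standard Abel-summation (discrete-convolution) result, by comparing the weighted sum $S_k := \sum_{l=0}^{k}\beta^{k-l}\gamma_l$ against the same sum in which every $\gamma_l$ is replaced by its limit $\gamma$. One point I would flag at the outset: for the right-hand side $\gamma/(1-\beta)$ to be well-defined the hypothesis must read $0\le\beta<1$ (the case $\beta=1$ is degenerate), so I treat the statement under that reading. The boundary case $\beta=0$ is immediate, since then only the $l=k$ term survives and $S_k=\gamma_k\to\gamma=\gamma/(1-0)$, so I would assume $0<\beta<1$ in the main argument.

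First I would record the geometric-sum identity $\sum_{l=0}^{k}\beta^{k-l}=\sum_{j=0}^{k}\beta^{j}=\frac{1-\beta^{k+1}}{1-\beta}$, which tends to $\frac{1}{1-\beta}$ as $k\to\infty$ because $\beta<1$. Consequently $\gamma\sum_{l=0}^{k}\beta^{k-l}\to\gamma/(1-\beta)$, and it suffices to prove that the discrepancy $D_k := S_k-\gamma\sum_{l=0}^{k}\beta^{k-l}=\sum_{l=0}^{k}\beta^{k-l}(\gamma_l-\gamma)$ vanishes.

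Next, I would fix $\eps>0$ and use $\gamma_l\to\gamma$ to choose $N$ with $|\gamma_l-\gamma|<\eps$ for all $l\ge N$, then split $D_k$ at the index $N$ into a head $\sum_{l=0}^{N-1}\beta^{k-l}(\gamma_l-\gamma)$ and a tail $\sum_{l=N}^{k}\beta^{k-l}(\gamma_l-\gamma)$. The tail is bounded by $\eps\sum_{l=N}^{k}\beta^{k-l}\le\eps\sum_{j=0}^{\infty}\beta^{j}=\eps/(1-\beta)$, uniformly in $k$. For the head the summation range is the fixed finite set $\{0,\dots,N-1\}$, and each weight $\beta^{k-l}\to0$ as $k\to\infty$ since $\beta<1$ and $k-l\to\infty$; hence the head is a finite sum of vanishing terms, giving $\limsup_{k\to\infty}|D_k|\le\eps/(1-\beta)$.

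Finally, since $\eps>0$ is arbitrary I conclude $\limsup_{k\to\infty}|D_k|=0$, so $D_k\to0$ and therefore $S_k\to\gamma/(1-\beta)$. The only subtlety worth isolating is the uniform tail bound: because the weights $\beta^{k-l}$ concentrate near $l=k$, one cannot control the tail termwise and must instead dominate it by the full geometric series $\sum_{j=0}^{\infty}\beta^{j}$. This is precisely the step where the requirement $\beta<1$ is \emph{essential}, and it is what allows the head/tail split to close; the remainder of the argument is routine $\eps$-bookkeeping.
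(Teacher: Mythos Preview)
Your proof is correct. The head/tail split with the geometric-series bound is exactly the standard argument for this Abel-type summation result, and you have handled the edge cases ($\beta=0$ trivial; $\beta=1$ ill-posed, so the hypothesis must be read as $0\le\beta<1$) appropriately.

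The paper itself does not prove this lemma at all: it merely quotes it as Lemma~3.1 of \cite{ram10} and uses it as a black box in the proof of Theorem~\ref{thm:ULR_Con}. So there is no in-paper argument to compare against; your write-up supplies a self-contained proof where the paper relies on an external reference. Your observation about the necessity of $\beta<1$ is also a useful clarification, since the paper states the hypothesis as $0\le\beta\le1$, which is a minor imprecision inherited from the cited source.
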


\begin{proof}[Proof of Theorem \ref{thm:ULR_Con}]
With the above lemmas stated, we can now prove Theorem \ref{thm:ULR_Con}. First, we prove that the beliefs converge to the $m$th root of the product of uncertain likelihood ratios, i.e.,
\begin{eqnarray} \label{eq:con_diff}
\lim_{t\rightarrow \infty} \ \ \left\| \log(\boldsymbol{\mu}_{t}(\theta)) - \log\left( (\prod_{i=1}^m \Lambda_{i\theta}(t))^{\frac{1}{m}}\right) \mathbf{1}  \right\| = 0,
\end{eqnarray}
where for vectors $\|\cdot\|$ is the standard 2-norm. Thus, since
\begin{equation}
\begin{split}
&\log(\boldsymbol{\mu}_{t}(\theta)) = \sum_{\tau = 1}^{t} \mathbf{A}^{t-\tau} \log\left(\boldsymbol{\ell}_\theta(\boldsymbol{\omega}_\tau)\right), \ \text{and} \label{eq:log_belief} \\
&\log\left((\prod_{i=1}^m \Lambda_{i\theta}(t))^\frac{1}{m} \right)\mathbf{1} =   \frac{1}{m}\mathbf{11'} \sum_{\tau=1}^t \log(\boldsymbol{\ell}_\theta(\boldsymbol{\omega}_\tau)),
\end{split}
\end{equation}
where with a slight abuse of notation, $\boldsymbol{\ell}_\theta(\boldsymbol{\omega}_{\tau}) = [\ell_{1\theta}(\mathbf{n}_{1\tau-1},\omega_{1\tau}|\mathbf{r}_{1\theta}),...,\ell_{m\theta}(\mathbf{n}_{m\tau-1},\omega_{m\tau}|\mathbf{r}_{m\theta})]'$, (\ref{eq:con_diff}) can be rewritten as

\begin{eqnarray} \label{eq:norm_LL}
\left\|\log(\boldsymbol{\mu}_{t}(\theta))-\log\left((\prod_{i=1}^m \Lambda_{i\theta}(t))^\frac{1}{m} \right)\mathbf{1}  \right\|  \le \nonumber \\ \sum_{\tau=1}^{t}  \left\|\mathbf{A}^{t-\tau}-\frac{1}{m}\mathbf{11}'  \right\|\left\|\log\left(\boldsymbol{\ell}_\theta(\boldsymbol{\omega}_\tau)\right)\right\| \le \nonumber \\
\sqrt{2}m\left(\sum_{\tau=0}^{t} \lambda^{t-\tau}  \left\|\log\left(\boldsymbol{\ell}_\theta(\boldsymbol{\omega}_\tau)\right)\right\| - \lambda^t \left\|\log\left(\boldsymbol{\ell}_\theta(\boldsymbol{\omega}_0)\right)\right\| \right),
\end{eqnarray}
where (\ref{eq:norm_LL}) follows from Lemma~\ref{lem:doubly}. Furthermore, since $\lim_{t\to\infty} \left\|\log\left(\boldsymbol{\ell}_\theta(\boldsymbol{\omega}_0)\right)\right\|=0$ a.s. from Lemma~\ref{lem:update_lim}, then 

\begin{eqnarray}
\lim_{t\to\infty} \sum_{\tau=0}^{t} \lambda^{t-\tau}  \left\|\log\left(\boldsymbol{\ell}_\theta(\boldsymbol{\omega}_\tau)\right)\right\| = 0 \nonumber
\end{eqnarray}
a.s. from Lemma~\ref{lemm:ram}. Finally, since $\lambda<1$ and $\left\|\log\left(\boldsymbol{\ell}_\theta(\boldsymbol{\omega}_0)\right)\right\|$ is bounded according to Lemma~\ref{lem:el_b}
\begin{eqnarray}
    \lim_{t\to\infty} \lambda^t \left\|\log\left(\boldsymbol{\ell}_\theta(\boldsymbol{\omega}_0)\right)\right\| = 0 \ \ \ \text{a.s..} \nonumber
\end{eqnarray}
Then, by the continuity of the logarithmic function, this implies that $\lim_{t\to\infty} \boldsymbol{\mu}_t(\theta)/ \left( \prod_{j=1}^m \Lambda_{j\theta}(t)\right)^{1/m} = 1$ a.s. and the desired result is achieved. 
\end{proof}

\subsection{Learning with Certain Likelihoods} \label{sec:Con_Dog}

Next, we present the results for when the agents are certain, i.e., $R_{i\theta}\to\infty$. First, we will consider the scenario when hypothesis $\theta$ is the ground truth for all agents, i.e., $\boldsymbol{\pi}_{i\theta}=\boldsymbol{\pi}_{i\theta^*}$ $\forall i\in \mathbf{M}$. Then, we will present the condition when hypothesis $\theta$ is not the ground truth for at least one agent $i$, i.e., $\boldsymbol{\pi}_{i\theta}\ne\boldsymbol{\pi}_{i\theta^*}$.

\begin{corollary} \label{lem:LL_Dog_Inf}
Let Assumptions \ref{assum:graph} and \ref{assum:inital_beliefs} hold and $\boldsymbol{\pi}_{i\theta}=\boldsymbol{\pi}_{i\theta^*}$ $\forall i\in \mathbf{M}$. Then, the beliefs generated using the update rule (\ref{eq:main_algo}) with infinite evidence diverge to the following.
\begin{eqnarray}
\lim_{t\rightarrow \infty} \mu_{it}(\theta) = \infty, \ \text{a.s.}
\end{eqnarray}
\end{corollary}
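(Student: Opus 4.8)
The plan is to follow the architecture of the proof of Theorem~\ref{thm:ULR_Con}: reduce the distributed claim to the behaviour of the per-agent likelihood ratios, then analyse those ratios in the certain regime. Taking logarithms in~\eqref{eq:main_algo} and using Assumption~\ref{assum:inital_beliefs} yields, exactly as in~\eqref{eq:log_belief},
\begin{equation*}
\log\boldsymbol{\mu}_t(\theta)=\sum_{\tau=1}^{t}\mathbf{A}^{t-\tau}\log\boldsymbol{\ell}_\theta(\boldsymbol{\omega}_\tau),
\end{equation*}
where now $\boldsymbol{\ell}_\theta$ carries the certain ($R_{i\theta}\to\infty$) form of the update~\eqref{eq:ell_update}, i.e.\ $\ell_{i\theta}(\mathbf{n}_{i\tau-1},k|\mathbf{r}_{i\theta})=\pi_{ik\theta}(K+\tau-1)/(n_{ik\tau-1}+1)$ (by the strong law, $\mathbf{r}_{i\theta}/R_{i\theta}\to\boldsymbol{\pi}_{i\theta}$ a.s., cf.\ Lemma~\ref{lem:update_lim}). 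Repeating the bounding step leading to~\eqref{eq:norm_LL}, Lemma~\ref{lem:doubly} gives
\begin{equation*}
\Big|\log\mu_{it}(\theta)-\tfrac{1}{m}\sum_{j=1}^{m}\log\Lambda_{j\theta}(t)\Big|\le\sqrt{2}\,m\sum_{\tau=1}^{t}\lambda^{t-\tau}\big\|\log\boldsymbol{\ell}_\theta(\boldsymbol{\omega}_\tau)\big\|.
\end{equation*}
Because $\boldsymbol{\pi}_{i\theta}=\boldsymbol{\pi}_{i\theta^*}$ for every $i$, Corollary~\ref{cor:ell_dog} gives $\ell_{i\theta}(\mathbf{n}_{it},\omega_{it+1}|\mathbf{r}_{i\theta})\to1$ a.s., hence $\|\log\boldsymbol{\ell}_\theta(\boldsymbol{\omega}_\tau)\|\to0$ a.s.; Lemma~\ref{lemm:ram} with $\gamma=0$ then forces the right-hand side to $0$ a.s.\ (the uniform bound that Lemma~\ref{lem:el_b} supplied in the uncertain case is not needed here, a convergent sequence being automatically bounded). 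Thus $\log\mu_{it}(\theta)=\tfrac{1}{m}\sum_{j=1}^{m}\log\Lambda_{j\theta}(t)+o(1)$ a.s., and since there are finitely many agents with $\Lambda_{j\theta}(t)>0$, it suffices to prove $\Lambda_{j\theta}(t)\to\infty$ a.s.\ for every $j$.

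For a fixed $j$ I would telescope $\Lambda_{j\theta}(t)=\prod_{\tau=1}^{t}\ell_{j\theta}(\mathbf{n}_{j\tau-1},\omega_{j\tau}|\mathbf{r}_{j\theta})$ as in Lemma~\ref{lem:ULR_Up}, now with the certain update above: the factors $\pi_{j\omega_{j\tau}\theta}$ multiply to $\prod_k\pi_{jk\theta}^{n_{jkt}}$, the factors $(K+\tau-1)$ to $\Gamma(t+K)/\Gamma(K)$, and the factors $(n_{j\omega_{j\tau}\tau-1}+1)$ to $\prod_k n_{jkt}!$, so that, using $\boldsymbol{\pi}_{j\theta}=\boldsymbol{\pi}_{j\theta^*}$,
\begin{equation*}
\Lambda_{j\theta}(t)=\frac{B(\mathbf{1})\prod_{k=1}^{K}\pi_{jk\theta^*}^{\,n_{jkt}}}{B(\mathbf{n}_{jt}+\mathbf{1})}=B(\mathbf{1})\,f\big(\boldsymbol{\pi}_{j\theta^*}|\mathbf{n}_{jt}\big),
\end{equation*}
with $f$ the Dirichlet density~\eqref{eq:dirichlet}, but now driven by the \emph{observation} histogram $\mathbf{n}_{jt}$ rather than the prior evidence. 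This is exactly the quantity analysed in the proof of Corollary~\ref{lem:dog_ULR}, with the roles of $\mathbf{n}_{jt}$ and $\mathbf{r}_{j\theta}$ interchanged: by the strong law, $\mathbf{n}_{jt}/t\to\boldsymbol{\pi}_{j\theta^*}$ a.s., so the mean of $f(\cdot|\mathbf{n}_{jt})$ tends to $\boldsymbol{\pi}_{j\theta^*}$ while all of its variances tend to $0$ a.s.; hence $f(\cdot|\mathbf{n}_{jt})$ tends to a Dirac mass at $\boldsymbol{\pi}_{j\theta^*}$ and its value at that point diverges, i.e.\ $\Lambda_{j\theta}(t)\to\infty$ a.s. Combining, $\log\mu_{it}(\theta)=\tfrac{1}{m}\sum_{j}\log\Lambda_{j\theta}(t)+o(1)\to\infty$ a.s., and continuity of $\exp$ gives $\mu_{it}(\theta)\to\infty$ a.s.

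The one delicate point I expect is converting the qualitative statement that $f(\cdot|\mathbf{n}_{jt})$ tends to a Dirac mass into a genuine almost-sure divergence of the scalar $f(\boldsymbol{\pi}_{j\theta^*}|\mathbf{n}_{jt})$. The safe route is a Stirling expansion of the factorials in the displayed identity, which (when $\pi_{jk\theta^*}>0$ for all $k$; degenerate categories only help) gives $\log\Lambda_{j\theta}(t)=\tfrac{K-1}{2}\log t-t\,D_{KL}(\mathbf{n}_{jt}/t\,||\,\boldsymbol{\pi}_{j\theta^*})+O(\log\log t)$; the $\log t$ term dominates because, by the law of the iterated logarithm, $\mathbf{n}_{jt}/t-\boldsymbol{\pi}_{j\theta^*}=O\!\big(\sqrt{(\log\log t)/t}\big)$ a.s., so $t\,D_{KL}(\mathbf{n}_{jt}/t\,||\,\boldsymbol{\pi}_{j\theta^*})=O(\log\log t)$ a.s. Alternatively, one may reuse the mean/variance computation in the proof of Corollary~\ref{lem:dog_ULR} verbatim, reading $\mathbf{n}_{jt}$ in place of $\mathbf{r}_{j\theta}$ and $t\to\infty$ in place of $R_{j\theta}\to\infty$.
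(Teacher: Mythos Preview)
Your proof is correct and follows essentially the same route as the paper: invoke Corollary~\ref{cor:ell_dog} so that the proof of Theorem~\ref{thm:ULR_Con} carries over verbatim, reducing to $\mu_{it}(\theta)\sim(\prod_j\Lambda_{j\theta}(t))^{1/m}$, and then argue that each $\Lambda_{j\theta}(t)\to\infty$. The paper dispatches the last step in one line by citing Corollary~\ref{lem:dog_ULR}, whereas you re-derive the closed form $\Lambda_{j\theta}(t)=B(\mathbf{1})f(\boldsymbol{\pi}_{j\theta^*}|\mathbf{n}_{jt})$ and supply the Stirling/LIL estimate to make the divergence rigorous; this is more careful than the paper (which silently swaps the order of the $t\to\infty$ and $R_\theta\to\infty$ limits), but the underlying argument is the same Dirichlet-concentration idea.
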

\begin{proof}
By Corollary~\ref{cor:ell_dog}, $\lim_{t\to\infty} \ell_{i\theta}(\mathbf{n}_{it-1},\omega_{it}|\mathbf{r}_{i\theta}) = 1$ a.s..  As a result, the proof of Theorem~\ref{thm:ULR_Con} still applies and $\mu_{it}(\theta) = (\prod_{i=1}^m \Lambda_{i\theta}(t))^\frac{1}{m}$ as $t \to \infty$ with probability 1.  Now by Lemma~\ref{lem:dog_ULR}, $\Lambda_{i\theta}(t) = \infty$ for each $i$ as $t \to \infty$ and $R_{i\theta}\to\infty$. Thus, the geometric mean is also diverging to $\infty$ a.s..
\end{proof}

\begin{lemma} \label{lem:LL_dog}
    Let Assumption \ref{assum:graph} and \ref{assum:inital_beliefs} hold and at least one agent $i\in\mathbf{M}$ has a set of probabilities s.t. $\boldsymbol{\pi}_{i\theta}\ne\boldsymbol{\pi}_{i\theta^*}$. Then, the beliefs generated by the update rule (\ref{eq:main_algo}) allow for learning, i.e., they converge in probability to 
    \begin{eqnarray}
    \mu_{it}(\theta) \overset{P}{\to} 0. 
    \end{eqnarray}
\end{lemma}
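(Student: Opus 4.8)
The plan is to reuse the log-belief decomposition established in the proof of Theorem~\ref{thm:ULR_Con}, but now to track the \emph{rate} at which its consensus component diverges, not merely its limit. Under Assumption~\ref{assum:inital_beliefs} we have, purely algebraically from \eqref{eq:main_algo}, the identity \eqref{eq:log_belief}, i.e. $\log\boldsymbol{\mu}_{t}(\theta)=\sum_{\tau=1}^{t}\mathbf{A}^{t-\tau}\log\boldsymbol{\ell}_\theta(\boldsymbol{\omega}_\tau)$. Splitting $\mathbf{A}^{t-\tau}=\tfrac1m\mathbf{1}\mathbf{1}'+\big(\mathbf{A}^{t-\tau}-\tfrac1m\mathbf{1}\mathbf{1}'\big)$, using $\mathbf{1}'\mathbf{A}^{s}=\mathbf{1}'$, and applying the telescoping identity $\log\Lambda_{j\theta}(t)=\sum_{\tau=1}^{t}\log\ell_{j\theta}(\mathbf{n}_{j\tau-1},\omega_{j\tau}|\mathbf{r}_{j\theta})$ from Lemma~\ref{lem:ULR_Up}, I would obtain for each agent $i$
\begin{equation*}
\log\mu_{it}(\theta)=\frac1m\sum_{j=1}^{m}\log\Lambda_{j\theta}(t)+e_{it},\qquad e_{it}=\Big[\textstyle\sum_{\tau=1}^{t}\big(\mathbf{A}^{t-\tau}-\tfrac1m\mathbf{1}\mathbf{1}'\big)\log\boldsymbol{\ell}_\theta(\boldsymbol{\omega}_\tau)\Big]_i .
\end{equation*}

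Next I would bound the disagreement term $e_{it}$. Lemma~\ref{lem:el_b} is unavailable once $R_{i\theta}\to\infty$, so instead I use that in the certain regime the update is $\ell_{j\theta}(\mathbf{n}_{j\tau-1},k|\mathbf{r}_{j\theta})=\pi_{jk\theta}/\hat\pi_0$ with $\hat\pi_0=(n_{jk\tau-1}+1)/(K+\tau-1)\in[1/(K+\tau-1),\,1]$, whence $|\log\ell_{j\theta}(\mathbf{n}_{j\tau-1},k|\mathbf{r}_{j\theta})|\le|\log\pi_{jk\theta}|+\log(K+\tau-1)$; the updates are only $O(\log\tau)$-bounded, but that suffices. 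Combining with $\|\mathbf{A}^{s}-\tfrac1m\mathbf{1}\mathbf{1}'\|\le\sqrt2\,m\lambda^{s}$ from Lemma~\ref{lem:doubly} and the geometric-sum bookkeeping of \eqref{eq:norm_LL}, one gets $|e_{it}|\le \tfrac{\sqrt2\,m^2}{1-\lambda}\big(C+\log(t+K)\big)=O(\log t)=o(t)$ a.s. For coordinates with $\pi_{jk\theta}=0$ while $\pi_{jk\theta^*}>0$ the argument degenerates, but then $\mu_{it}(\theta)=0$ as soon as outcome $k$ is observed (which occurs a.s.), and the conclusion is trivial; so I assume henceforth $\pi_{jk\theta}>0$ whenever $\pi_{jk\theta^*}>0$.

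The crucial step is to pin down $\tfrac1t\log\Lambda_{j\theta}(t)$ for each agent. Set $a_{j\tau}=\log\ell_{j\theta}(\mathbf{n}_{j\tau-1},\omega_{j\tau}|\mathbf{r}_{j\theta})$. Lemma~\ref{lem:update_lim}(2) together with $|\boldsymbol{\Omega}|=K<\infty$ gives $a_{j\tau}=\log\!\big(\pi_{j\omega_{j\tau}\theta}/\pi_{j\omega_{j\tau}\theta^*}\big)+o(1)$ a.s. If $\boldsymbol{\pi}_{j\theta}=\boldsymbol{\pi}_{j\theta^*}$, the leading term vanishes so $a_{j\tau}\to0$ a.s., and the Cesàro mean yields $\tfrac1t\log\Lambda_{j\theta}(t)\to0$ a.s. If $\boldsymbol{\pi}_{j\theta}\ne\boldsymbol{\pi}_{j\theta^*}$, the $\omega_{j\tau}$ are i.i.d.\ with law $\boldsymbol{\pi}_{j\theta^*}$, so the strong law of large numbers (absorbing the $o(1)$) gives $\tfrac1t\log\Lambda_{j\theta}(t)\to-D_{KL}(\boldsymbol{\pi}_{j\theta^*}\|\boldsymbol{\pi}_{j\theta})<0$ a.s.\ by Gibbs' inequality. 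Since at least one agent $i_0$ satisfies $\boldsymbol{\pi}_{i_0\theta}\ne\boldsymbol{\pi}_{i_0\theta^*}$,
\begin{equation*}
\frac1{mt}\sum_{j=1}^{m}\log\Lambda_{j\theta}(t)\;\longrightarrow\;-\frac1m\!\!\sum_{j:\,\boldsymbol{\pi}_{j\theta}\ne\boldsymbol{\pi}_{j\theta^*}}\!\! D_{KL}(\boldsymbol{\pi}_{j\theta^*}\|\boldsymbol{\pi}_{j\theta})=:-c<0\quad\text{a.s.}
\end{equation*}
Therefore $\log\mu_{it}(\theta)=t\big(\tfrac1{mt}\sum_j\log\Lambda_{j\theta}(t)\big)+e_{it}\to-\infty$ a.s., so $\mu_{it}(\theta)\to0$ a.s., which in particular gives $\mu_{it}(\theta)\overset{P}{\to}0$.

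I expect the main obstacle to be the third step. The results already on record (Corollary~\ref{lem:dog_ULR}, Corollary~\ref{cor:ell_dog}) only say that $\Lambda_{j\theta}(t)\to\infty$ for the agents whose model matches $\theta^*$, which on its own leaves an indeterminate $0\cdot\infty$ inside the geometric mean; one must quantify that this divergence is only \emph{sublinear} in $t$ (obtained from $a_{j\tau}\to0$ plus the Cesàro mean), whereas a single mismatched agent contributes \emph{linear} decay, so the mismatched agent dominates. A secondary technical point is re-establishing the $O(\log t)$ control of the disagreement term $e_{it}$ in the certain regime, since the uniform bound of Lemma~\ref{lem:el_b} no longer holds as $R_{i\theta}\to\infty$; the saving grace is that the logarithmic growth of the updates is still $o(t)$ and therefore negligible against the linear consensus decay.
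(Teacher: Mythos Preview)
Your proposal is correct and follows the same consensus--plus--disagreement decomposition of $\log\boldsymbol{\mu}_t(\theta)$ that the paper uses, but the execution differs in a way worth noting. The paper proceeds by truncating time at a finite $T$: it bounds $\sum_{\tau\le T}\mathbf{A}^{t-\tau}\log\boldsymbol{\ell}_\theta(\boldsymbol{\omega}_\tau)$ via the coarse estimate $\ell_{i\theta}\le \tau+K-1$ of Lemma~\ref{lem:ell_dog_bound}, then for $\tau>T$ uses that $\log\ell_{i\theta}$ is within $\epsilon$ of $\log(\pi_{i\omega_{i\tau}\theta}/\pi_{i\omega_{i\tau}\theta^*})$ with high probability, and finally applies the law of large numbers to the consensus piece on $\{T+1,\dots,t\}$ to extract the $-\tfrac1m\sum_j D_{KL}(\boldsymbol{\pi}_{j\theta^*}\|\boldsymbol{\pi}_{j\theta})$ rate. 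This $\epsilon$--$\delta$ bookkeeping is why the paper only claims convergence in probability. Your route avoids the truncation entirely: you observe that in the certain regime $|\log\ell_{j\theta}(\cdot,k|\cdot)|\le|\log\pi_{jk\theta}|+\log(K+\tau-1)$ deterministically (on the event that only outcomes with $\pi_{jk\theta}>0$ occur), so the geometric weights $\lambda^{t-\tau}$ kill the disagreement down to $O(\log t)$; and you handle the consensus part by writing each $\log\Lambda_{j\theta}(t)$ as an i.i.d.\ sum plus a Ces\`aro-vanishing remainder, giving $\tfrac1t\log\Lambda_{j\theta}(t)\to -D_{KL}(\boldsymbol{\pi}_{j\theta^*}\|\boldsymbol{\pi}_{j\theta})$ a.s.\ for mismatched agents and $\to 0$ a.s.\ for matched ones. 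Your treatment of the degenerate case $\pi_{jk\theta}=0$ with $\pi_{jk\theta^*}>0$ is the right patch; the paper sweeps this under the rug by restricting $L$ to $k\in\Omega_i^*$. The upshot is that your argument is a bit tighter (no $T$ split, a.s.\ rather than in probability) and makes explicit the point you flag as the main obstacle: the sublinear growth of $\log\Lambda_{j\theta}(t)$ for agents with $\boldsymbol{\pi}_{j\theta}=\boldsymbol{\pi}_{j\theta^*}$, which the paper establishes only implicitly through its $\epsilon$ control after time $T$.
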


Before proving Lemma~\ref{lem:LL_dog}, we must first present the following lemma which provides an upper bound of the certain likelihood update.

\begin{lemma} \label{lem:ell_dog_bound}
For a finite time $t$, the certain likelihood update is bounded above by
\begin{eqnarray} \label{eq:ell_dog_bound}
\ell_{i\theta}(\mathbf{n}_{it-1},\omega_{it}|\mathbf{r}_{i\theta}) \le (t+K-1) < \infty
\end{eqnarray}
\end{lemma}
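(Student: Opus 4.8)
The plan is to identify the certain likelihood update as the $R_{i\theta}\to\infty$ limit of the closed form in~\eqref{eq:ell_update} and then bound the resulting rational function of $n_{ikt-1}$ crudely. Write $\omega_{it}=k$. First I would recall, exactly as in the proof of Lemma~\ref{lem:update_lim}, that when $\mathbf{r}_{i\theta}$ is a realization of the multinomial with parameters $(R_{i\theta},\boldsymbol{\pi}_{i\theta})$ the strong law of large numbers gives $r_{ik\theta}/R_{i\theta}\to\pi_{ik\theta}$ a.s. Holding $t$, $K$, and $n_{ikt-1}$ fixed and letting only $R_{i\theta}\to\infty$ in $\hat{\pi}_{r_{ik\theta}}=(r_{ik\theta}+n_{ikt-1}+1)/(R_{i\theta}+t+K-1)$ then yields $\hat{\pi}_{r_{ik\theta}}\to\pi_{ik\theta}$ a.s., so the certain likelihood update equals
\begin{equation*}
\ell_{i\theta}(\mathbf{n}_{it-1},k|\mathbf{r}_{i\theta}) = \frac{\pi_{ik\theta}}{\hat{\pi}_{0}} = \frac{\pi_{ik\theta}\,(t+K-1)}{n_{ikt-1}+1}.
\end{equation*}

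The bound is then immediate: since $\pi_{ik\theta}\le 1$ and $n_{ikt-1}\ge 0$ (hence $n_{ikt-1}+1\ge 1$), we get $\ell_{i\theta}(\mathbf{n}_{it-1},k|\mathbf{r}_{i\theta})\le t+K-1$, which is finite for every finite $t$. Alternatively, one can bypass the explicit limit and reuse the monotonicity computation from the proof of Lemma~\ref{lem:el_b}: there $\log\ell_{i\theta}$ was shown to be decreasing in $n_{ikt-1}$, so the supremum over $n_{ikt-1}\in[0,t-1]$ is attained at $n_{ikt-1}=0$, giving $(r_{ik\theta}+1)(t+K-1)/(R_{i\theta}+t+K-1)$; since $r_{ik\theta}\le R_{i\theta}$ and $t+K-1\ge 1$ this is already at most $t+K-1$ for finite $R_{i\theta}$, and the inequality survives passing to the limit.

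I do not expect any real obstacle in this lemma; the only point that needs attention is that the limit taken here is the single limit $R_{i\theta}\to\infty$ with the horizon $t$ fixed, in contrast with the joint limit $t,R_{i\theta}\to\infty$ of Lemma~\ref{lem:update_lim} and Corollary~\ref{cor:ell_dog}, so the numerator collapses to the constant $\pi_{ik\theta}$ while the denominator $\hat{\pi}_{0}$ remains the time- and data-dependent empirical estimate. The reason a separate statement is needed at all is that the uncertain-case bound $\max_{k}(r_{ik\theta})+1$ of Lemma~\ref{lem:el_b} diverges as $R_{i\theta}\to\infty$ (some coordinate of $\mathbf{r}_{i\theta}$ is at least $R_{i\theta}/K$), whereas $t+K-1$ is uniform in $R_{i\theta}$. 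The price is linear growth in $t$, and that growth is exactly what will force the subsequent Lemma~\ref{lem:LL_dog} to settle for convergence in probability rather than almost sure convergence — a difficulty that belongs to that proof, not to this one.
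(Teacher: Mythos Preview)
Your argument is correct and matches the paper's own proof: both identify the certain update as $\pi_{ik\theta}(t+K-1)/(n_{ikt-1}+1)$ and then bound it using $\pi_{ik\theta}\le 1$ and $n_{ikt-1}+1\ge 1$. Your write-up is in fact more careful than the paper's, which simply asserts the certain form ``by inspection''; your explicit SLLN justification for the $R_{i\theta}\to\infty$ limit, the alternative via the monotonicity of Lemma~\ref{lem:el_b}, and the remark on why the bound of Lemma~\ref{lem:el_b} is inadequate here are all accurate and useful additions.
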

\begin{proof}
First, by inspection of (\ref{eq:ell_update}) for the certain condition such that $\ell_{i\theta}(\mathbf{n}_{it-1},\omega_{it}|\mathbf{r}_{i\theta}) = \pi_{ik\theta} \frac{t+K-1}{n_{ikt-1}+1} $ , it is clear that the maximum occurs when an attribute $k\in\Omega$ has not been received up to time $t-1$. In other words, the term $\frac{t+K-1}{n_{ikt-1}+1}$ is maximized when $n_{ikt-1}=0$, resulting in the likelihood update being bounded by $(t+K-1)$ because $\pi_{ik\theta}\le 1$. For any finite value of $t$ this value is the highest possible value for the update.
\end{proof}

Now that the likelihood update is shown to be bounded by a finite value for finite $t$, we can now prove Lemma~\ref{lem:LL_dog}.

\begin{proof}[Proof of Lemma~\ref{lem:LL_dog}]
Starting with (\ref{eq:log_belief}) the log-beliefs $\mu_{it}(\theta)$ can be written as
\begin{eqnarray} \label{eq:dog-belief}
\log(\boldsymbol{\mu}_{t}(\theta)) &=&
\sum_{\tau = 1}^{T} \mathbf{A}^{t-\tau} \log\left(\boldsymbol{\ell}_\theta(\boldsymbol{\omega}_\tau)\right)  \nonumber \\
& &+ \sum_{\tau = T+1}^{t} \left (\mathbf{A}^{t-\tau}-\frac{1}{m}\mathbf{11}'\right) \log\left(\boldsymbol{\ell}_\theta(\boldsymbol{\omega}_\tau)\right) \nonumber \\ & &  + \frac{1}{m}\sum_{\tau = T+1}^{t}\mathbf{11}'\log\left(\boldsymbol{\ell}_\theta(\boldsymbol{\omega}_\tau)\right).
\end{eqnarray}
Now because $\mathbf{A}$ is doubly stochastic, $\|\mathbf{A}\|=1$ and the norm of the first term in the right hand side of (\ref{eq:dog-belief}) is bounded by
\begin{eqnarray*}
\left\|\sum_{\tau = 1}^{T} \mathbf{A}^{t-\tau} \log\left(\boldsymbol{\ell}_\theta(\boldsymbol{\omega}_\tau)\right) \right\| &\leq& \sum_{\tau = 1}^{T} \left \| \log\left(\boldsymbol{\ell}_\theta(\boldsymbol{\omega}_\tau)\right) \right\|\\
& \leq &\sum_{\tau = 1}^{T} \sqrt{m}(\tau+K-1) \nonumber \\  &=& \sqrt{m}T\left(\frac{1}{2}(T+1)+(K-1)\right),
\end{eqnarray*}
where the second line is the result of the upper bound for the possible update value given in Lemma~\ref{lem:ell_dog_bound}. As long as $T$ is finite this first term is finite.

By Lemma~\ref{lem:update_lim}, $\log\left(\ell_{i\theta}(\mathbf{n}_{i\tau-1},\omega_{\tau}|\mathbf{r}_{i\theta})\right) \to \log\left(\frac{\pi_{ik\theta}}{\pi_{ik\theta^*}}\right)$ a.s., and so for any $\epsilon>0$ and $\delta>0$ there exist a finite value $T$ such that $|\log\left(\ell_{i\theta}(\mathbf{n}_{i\tau-1},\omega_{\tau}|\mathbf{r}_{i\theta})\right)-\log\left(\frac{\pi_{ik\theta}}{\pi_{ik\theta^*}}\right)|<\epsilon$ with probability greater than $1-\delta$. Thus the second term on the right hand side of (\ref{eq:dog-belief}) with probability greater than $1-\delta$ is bounded by
\begin{eqnarray*}
& &\left\| \sum_{\tau = T+1}^{t} \left (\mathbf{A}^{t-\tau}-\frac{1}{m}\mathbf{11}'\right) \log\left(\boldsymbol{\ell}_\theta(\boldsymbol{\omega}_\tau)\right) \right\| \nonumber \\ &\leq& \sum_{\tau = T+1}^{t}  \left\|\mathbf{A}^{t-\tau}-\frac{1}{m}\mathbf{11}' \right\| \left\| \log\left(\boldsymbol{\ell}_\theta(\boldsymbol{\omega}_\tau)\right)\right\|\\
&\leq &\sqrt{2}m \left( \sum_{\tau=T+1}^t \lambda^{t-\tau} \right) (L+\epsilon) \leq  \frac{\sqrt{2}m}{(1-\lambda)} (L+\epsilon),
\end{eqnarray*}
where $L = \max_{i,k\in\Omega_i^*}  \left|\log\left(\frac{\pi_{ik\theta}}{\pi_{ik\theta^*}}\right) \right|$ is the largest converged value that is realizable, i.e, $\Omega_i^*$ is the set of all $k$ values such that $\pi_{ik\theta^*} >0$.  Because $L$ is finite, the second term in (\ref{eq:dog-belief}) is also finite.

Finally, each element for the third term on the right hand side of (\ref{eq:dog-belief}) can be reexpressed as
\begin{eqnarray}
& & \frac{1}{m} \sum_{\tau = T+1}^t \sum_{i=1}^m \log\left(\ell_{i\theta}(\mathbf{n}_{i\tau-1},\omega_{i\tau}|\mathbf{r}_{i\theta})\right) \nonumber \\ &=& \frac{1}{m} \sum_{\tau=T+1}^t \sum_{i=1}^m \left( \log \left(\frac{\pi_{i\omega_{i\tau}\theta}}{\pi_{i\omega_{i\tau}\theta^*}}\right) + e_{i\tau}\right), \nonumber\\
&\leq& (t-T) \left( \frac{1}{(t-T)}\left( \sum_{\tau=T+1}^t  \frac{1}{m} \sum_{i=1}^m \log \left(\frac{\pi_{i\omega_{i\tau}\theta}}{\pi_{i\omega_{i\tau}\theta^*}}\right)\right) + \epsilon \right), \nonumber \\
\label{eq:fullyconnectedlogupdatebound}
\end{eqnarray}
where $e_{i\tau} = \log\left(\ell_{i\theta}(\mathbf{n}_{i\tau-1},\omega_{i\tau}|\mathbf{r}_{i\theta})\right)-\log \left(\frac{\pi_{i\omega_{i\tau}\theta}}{\pi_{i\omega_{i\tau}\theta^*}}\right)$ is the error and $|e_{i\tau}| \le \epsilon$, which leads to the second line. Due the strong law of large numbers, the bound for the third term converges with probability one to
\begin{equation}
(t-T) \left(-\frac{1}{m} \sum_{i=1}^m D_{KL}(\boldsymbol{\pi}_{i\theta^*}||\boldsymbol{\pi}_{i\theta})+\epsilon\right).
\end{equation}
In other words, for $t$ sufficiently large with probability $1-\delta$
\begin{eqnarray}
\log(\mu_{it}(\theta)) &\leq& \sqrt{m}\frac{T}{2}(T+1)+T(K-1)+\frac{\sqrt{2}m}{(1-\lambda)}(L+\epsilon)\nonumber \\ & & +(t-T)\left(-\frac{1}{m}\sum_{i=1}^m D_{KL}(\boldsymbol{\pi}_{i\theta^*}||\boldsymbol{\pi}_{i\theta}) + 2\epsilon\right) \nonumber 
\end{eqnarray}
Since $\frac{1}{m} \sum_{i=1}^m D_{KL}(\boldsymbol{\pi}_{i\theta^*}||\boldsymbol{\pi}_{i\theta})>0$ as $\boldsymbol{\pi}_{\theta^*}\ne \boldsymbol{\pi}_\theta$, and $\epsilon$ can be made smaller by making $T$  larger, the upper bound is diverging to $-\infty$ as $t$ increases.  Thus, the log-belief is diverging to $-\infty$ as $t \to \infty$.  Because the exponential is continuous, the beliefs converge in probability to zero. 
\end{proof}

Corollary~\ref{lem:LL_Dog_Inf} and Lemma~\ref{lem:LL_dog} show that in order for the agents to learn the ground truth precisely, all of the agents must have certain probability distributions that match the ground truth exactly. While if a single agent disagrees, then the beliefs will converge to $0$. Therefore, this result is consistent with the traditional non-Bayesian social learning literature except that the hypothesis that matches the ground truth diverges to infinity instead of converging to $1$. Thus, the design of the uncertain likelihood ratio still preserves the consensus result while allowing the agents to consider uncertain scenarios.

After expanding the beta functions and applying Stirling's approximation, it can be shown that the certain likelihood ratio for large $t$ behaves as 
\begin{eqnarray} \label{eq:th_lambda}
\Lambda_{i\theta}(t) = C t^\alpha e^{-tD_{KL}(\boldsymbol{\pi}_{i\theta^*}||\boldsymbol{\pi}_{i\theta})},
\end{eqnarray}
where $C$ and $\alpha$ are constants. Note that in the centalized uncertain likelihood ratio is the product of the individual uncertain likelihood ratios. Without any divergence between $\boldsymbol{\pi}_{i\theta}$ and $\boldsymbol{\pi}_{i\theta^*}$ for all agents, the uncertain likelihood ratio goes to infinity sub-exponentially as $t^\alpha$. It only takes any divergence between $\boldsymbol{\pi}_{i\theta}$ and $\boldsymbol{\pi}_{i\theta^*}$ at a single agent to drive the centralized uncertain likelihood ratio to zero as the decay to zero is exponential. Essentially, a hypothesis $\theta$ that is consistent with the observations can never be declared as the absolute ground truth as any new certain agent whose model for that hypothesis is inconsistent with their observation would drive the uncertain likelihood ratio to zero. Rather, one can only state that the hypothesis is consistent with the ground truth as no counter example has been observed. On the other hand, once a counter example is found by any agent, one can state unequivocally that the hypothesis is not the ground truth. No finite number of agents such that $\boldsymbol{\pi}_{i\theta} = \boldsymbol{\pi}_{i\theta^*}$ can drive the belief to be non-zero.

For the more general uncertain case, the updates $\ell_{i\theta}(\omega_{it})$ as given by (\ref{eq:ell_update}) begin as ratios of the expected value of $\boldsymbol{\pi}_{i\theta}$ based upon the prior evidence $\mathbf{r}_{i\theta}$ over that based upon the observations $\mathbf{n}_{it}$ .  As time evolves, the numerator of the ratio transitions from an estimate of $\boldsymbol{\pi}_{i\theta}$ to that of $\boldsymbol{\pi}_{i\theta^*}$.  On the other hand, the denominator is going to an estimate of $\boldsymbol{\pi}_{i\theta^*}$.  The larger the amount of prior evidence $R_{i\theta}$, the longer it takes for the transition to occur.  Before the transition, the uncertain likelihood ratio behaves like the certain case.  After the transition, the updates converge to one, which cause the uncertain likelihood ratio to level out.  If $\theta \ne \theta^*$, whether or not the uncertain likelihood ratio converges to a value larger or less than one depends on whether or not the divergence between the $\pi$'s is able to overwhelm the $t^\alpha$ growth before the updates become close to one.  This in turn depends on the amount of prior evidence.  Less prior evidence means that $\theta$ may not be distinguished from $\theta^*$ given the precision of the evidence. The simulations in Section~\ref{sec:SIM} will demonstrate these properties.

\section{The Effects of DeGroot Aggregation for Uncertain Models} \label{sec:DG_Update}
Next, we will consider a DeGroot-style update rule and present the effects of the beliefs with uncertain likelihood models. The DeGroot-style update rule consists of taking the weighted arithmetic average of the agents prior beliefs instead of the geometric average. Thus, the DeGroot-style update rule with uncertain likelihood models is defined as,
\begin{eqnarray} \label{eq:DG_algo}
    \mu_{it+1}(\theta) = \ell_{i\theta}(\mathbf{n}_{it}, \omega_{it+1}|\mathbf{r}_{i\theta})\sum_{j\in\mathbf{M}^i} [\mathbf{A}]_{ij} \mu_{jt}(\theta).
\end{eqnarray}

First, let us consider the asymptotic properties of the beliefs generated using the update rule (\ref{eq:DG_algo}) with a finite amount of prior evidence.
\begin{lemma} \label{thm:DG_finite}
Let Assumptions \ref{assum:graph}, \ref{assum:uncertain_distributioin}, and \ref{assum:inital_beliefs} hold. Then, the beliefs generated using the update rule (\ref{eq:DG_algo}) have the following property with probability 1:
\begin{eqnarray} \label{eq:CES_LL}
\lim_{t\to \infty} \mu_{it}(\theta)\ge \left( \prod_{i=1}^m \tilde{\Lambda}_{i\theta} \right)^{\frac{1}{m}}.
\end{eqnarray}
\end{lemma}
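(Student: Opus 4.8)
The plan is to prove Lemma~\ref{thm:DG_finite} by coupling the DeGroot recursion~\eqref{eq:DG_algo} with the log-linear recursion~\eqref{eq:main_algo} run on the same realization of the private signals, and then using the weighted arithmetic--geometric mean (AM--GM) inequality to show that the DeGroot beliefs dominate the log-linear beliefs pathwise. Let $\mu_{it}^{\mathrm{DG}}(\theta)$ denote the iterates of~\eqref{eq:DG_algo} and $\mu_{it}^{\mathrm{LL}}(\theta)$ the iterates of~\eqref{eq:main_algo}, both initialized at $\mu_{i0}(\theta)=1$ per Assumption~\ref{assum:inital_beliefs} and both driven by the same signal sequences, so that the likelihood factors $\ell_{i\theta}(\mathbf{n}_{it},\omega_{it+1}|\mathbf{r}_{i\theta})$ coincide in the two recursions. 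Because $R_{i\theta}<\infty$, every factor in~\eqref{eq:el;_def} is strictly positive, so a trivial induction gives $\mu_{it}^{\mathrm{DG}}(\theta)>0$ and $\mu_{it}^{\mathrm{LL}}(\theta)>0$ for all $t$, and the means below are all well defined.

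The core step is the pathwise bound $\mu_{it}^{\mathrm{DG}}(\theta)\ge \mu_{it}^{\mathrm{LL}}(\theta)$ for all $i$ and all $t$, proved by induction on $t$. The base case $t=0$ holds since both equal $1$. For the step, fix $i$; by Assumption~\ref{assum:graph} the matrix $\mathbf{A}$ is doubly (hence row) stochastic, so $([\mathbf{A}]_{ij})_{j\in\mathbf{M}^i}$ is a probability vector, and using the inductive hypothesis, the nonnegativity of $\ell_{i\theta}$ and of the weights, and weighted AM--GM,
\begin{align*}
\mu_{it+1}^{\mathrm{DG}}(\theta)
&= \ell_{i\theta}(\mathbf{n}_{it},\omega_{it+1}|\mathbf{r}_{i\theta})\sum_{j\in\mathbf{M}^i}[\mathbf{A}]_{ij}\,\mu_{jt}^{\mathrm{DG}}(\theta)\\
&\ge \ell_{i\theta}(\mathbf{n}_{it},\omega_{it+1}|\mathbf{r}_{i\theta})\sum_{j\in\mathbf{M}^i}[\mathbf{A}]_{ij}\,\mu_{jt}^{\mathrm{LL}}(\theta)\\
&\ge \ell_{i\theta}(\mathbf{n}_{it},\omega_{it+1}|\mathbf{r}_{i\theta})\prod_{j\in\mathbf{M}^i}\big(\mu_{jt}^{\mathrm{LL}}(\theta)\big)^{[\mathbf{A}]_{ij}}
= \mu_{it+1}^{\mathrm{LL}}(\theta).
\end{align*}
This closes the induction. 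I would then pass to the limit using Theorem~\ref{thm:ULR_Con}, which gives $\mu_{it}^{\mathrm{LL}}(\theta)\to\big(\prod_{j=1}^m\widetilde{\Lambda}_{j\theta}\big)^{1/m}$ almost surely; taking $\liminf_{t\to\infty}$ in the pathwise bound yields $\liminf_{t\to\infty}\mu_{it}^{\mathrm{DG}}(\theta)\ge\big(\prod_{j=1}^m\widetilde{\Lambda}_{j\theta}\big)^{1/m}$ a.s., which is the inequality in~\eqref{eq:CES_LL}.

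The AM--GM comparison is elementary; the real difficulty, and the main obstacle, is upgrading the $\liminf$ just obtained to the genuine limit asserted in~\eqref{eq:CES_LL}, i.e.\ showing that $\mu_{it}^{\mathrm{DG}}(\theta)$ actually converges. The route I would take is to write the recursion in vector form, $\boldsymbol{\mu}_{t+1}(\theta)=\mathrm{diag}\big(\boldsymbol{\ell}_\theta(\boldsymbol{\omega}_{t+1})\big)\mathbf{A}\,\boldsymbol{\mu}_t(\theta)$, and exploit that for finite evidence Lemma~\ref{lem:update_lim} forces $\ell_{i\theta}(\mathbf{n}_{it},\omega_{it+1}|\mathbf{r}_{i\theta})\to 1$ a.s., so the diagonal matrix tends to the identity and the map becomes asymptotically the doubly-stochastic averaging step. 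The subtle part is that a naive bound on $\sup_t\|\boldsymbol{\mu}_t(\theta)\|_\infty$ via $\prod_{\tau\le t}\max_i\ell_{i\theta}$ can diverge even though each coordinatewise product $\prod_{\tau\le t}\ell_{i\theta}=\Lambda_{i\theta}(t)$ converges (the log-increments are only conditionally summable), so the boundedness of $\boldsymbol{\mu}_t(\theta)$ must be extracted by combining the consensus/mixing properties of $\mathbf{A}$ (Lemma~\ref{lem:doubly}) with the near-cancellation of the likelihood log-increments; once $\sup_t\|\boldsymbol{\mu}_t(\theta)\|_\infty<\infty$ a.s.\ is in hand, a standard perturbed-consensus argument gives convergence of $\boldsymbol{\mu}_t(\theta)$ to a common limit, completing the proof. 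This extra work is exactly why the DeGroot rule yields only the one-sided bound~\eqref{eq:CES_LL} rather than an exact limit.
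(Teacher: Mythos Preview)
Your proof is essentially the same as the paper's: both couple the DeGroot iterates with the log-linear iterates on the same signal realization, prove $\mu_{it}^{\mathrm{DG}}(\theta)\ge\mu_{it}^{\mathrm{LL}}(\theta)$ by induction using the weighted AM--GM/Jensen inequality (the paper phrases it via Jensen applied to $\log$, you via AM--GM directly), and then invoke Theorem~\ref{thm:ULR_Con} for the log-linear limit. Your observation that this argument only yields a $\liminf$ bound rather than a genuine limit is correct and is in fact sharper than the paper's own treatment: the paper simply writes $\lim_{t\to\infty}\mu_{it}^{\mathrm{DG}}(\theta)\ge\lim_{t\to\infty}\mu_{it}^{\mathrm{LL}}(\theta)$ without establishing that the DeGroot limit exists, so the extra convergence analysis you sketch goes beyond what the paper actually proves for this lemma.
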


\begin{proof}
    To prove this, we will first compare the beliefs generated from the update rule (\ref{eq:DG_algo}), denoted $\boldsymbol{\mu}_t^{[DG]}(\theta)$, with the beliefs generated for the update rule (\ref{eq:main_algo}),  denoted $\boldsymbol{\mu}_t^{[LL]}(\theta)$. Then, by induction, we have the following. At $t=0$, the agents beliefs are initialized to the same value, $\boldsymbol{\mu}_0^{[DG]}(\theta)=\boldsymbol{\mu}_0^{[LL]}(\theta)=\mathbf{1}$ and $\boldsymbol{\mu}_0^{[DG]}(\theta)\ge\boldsymbol{\mu}_0^{[LL]}(\theta)$ is true. Given that $\boldsymbol{\mu}_{t-1}^{[DG]}(\theta)\ge\boldsymbol{\mu}_{t-1}^{[LL]}(\theta)$ is true for time $t-1$, the log of the beliefs from the DeGroot and LL rules ag time $t$ respectively becomes
    \begin{eqnarray}
    \log(\mu_{it}^{[DG]}(\theta))  &=&  \log(\ell_{i\theta}(\mathbf{n}_{it}, \omega_{it+1}|\mathbf{r}_{i\theta}) ) \nonumber \\ & & + \log\left(\sum_{j=1}^m [\mathbf{A}]_{ij} \mu_{jt-1}^{[DG]}(\theta)\right), \nonumber\\
    \log(\mu_{it}^{[LL]}(\theta))  &=&  \log(\ell_{i\theta}(\mathbf{n}_{it}, \omega_{it+1}|\mathbf{r}_{i\theta}) ) \nonumber \\ & &+ \sum_{j=1}^m [\mathbf{A}]_{ij} \log\left(\mu_{jt-1}^{[LL]}(\theta)\right) \nonumber
    \end{eqnarray}
    Using Jensen's inequality, $\log(\sum_{j=1}^m [\mathbf{A}]_{ij}\mu_{jt-1}^{[DG]}(\theta)) \ge \sum_{j=1}^m [\mathbf{A}]_{ij} \log(\mu_{jt-1}^{[DG]}(\theta))$ since the logarithm is a concave function. Since $\sum_{j=1}^m [\mathbf{A}]_{ij}\log(\mu_{jt-1}^{[DG]}(\theta)) \ge \sum_{j=1}^m [\mathbf{A}]_{ij} \log(\mu_{jt-1}^{[LL]}(\theta))$, $\boldsymbol{\mu}_{t}^{[DG]}(\theta)\ge\boldsymbol{\mu}_{t}^{[LL]}(\theta)$.  By induction, $\boldsymbol{\mu}_{t}^{[DG]}(\theta)\ge\boldsymbol{\mu}_{t}^{[LL]}(\theta)$ is true $\forall t\ge 0$, and asymptotically we can say that
    \begin{eqnarray}
    \lim_{t\rightarrow \infty} \mu_{it}^{[DG]}(\theta) \ge \lim_{t\rightarrow \infty} \mu_{it}^{[LL]}(\theta) = \left( \prod_{i=1}^m \tilde{\Lambda}_{i\theta} \right)^{\frac{1}{m}} \nonumber
    \end{eqnarray}
    with probability 1.
\end{proof}

Lemma \ref{thm:DG_finite} shows that the beliefs generated from the DeGroot-style update rule will always be greater than or equal to the $m$th root of the centralized uncertain likelihood ratio. This means that the interpretation of the beliefs using the update rule (\ref{eq:main_algo}) and the DeGroot rule (\ref{eq:DG_algo}) are not the same.  Nevertheless the simulations in Section~\ref{sec:SIM} demonstrates that the DeGroot rule reaches consensus but is non-commutative because the order in which the private signals are received affects where the belief converges. Thus, a further understanding of the beliefs point of convergence is necessary to identify thresholds that allow for the use of the uncertain likelihood ratio test. This will be studied as a future work.

The certain likelihood conditions presented next indicate that the DeGroot rule still enables learning. Additionally, we derive the beliefs asymptotic convergence rate for a fully connected network and show that learning with the update rule (\ref{eq:DG_algo}) is slower than learning with (\ref{eq:main_algo}). First, noting the result of the uncertain DeGroot-style update rule, we can conclude the following corollary.

\begin{corollary}\label{cor:DG_dog}
    Let Assumptions \ref{assum:graph} and \ref{assum:inital_beliefs} hold and $\boldsymbol{\pi}_{i\theta}=\boldsymbol{\pi}_{i\theta^*}$ $\forall i\in \mathbf{M}$. Then, the beliefs generated using the update rule (\ref{eq:DG_algo}) and infinite evidence diverge to the following.
\begin{eqnarray}
\lim_{t\rightarrow \infty} \mu_{it}(\theta)= \infty, \ \text{a.s.}
\end{eqnarray}
\end{corollary}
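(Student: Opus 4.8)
The plan is to obtain the result as an immediate consequence of Lemma~\ref{thm:DG_finite} together with Corollary~\ref{lem:LL_Dog_Inf}, after checking that the comparison between the DeGroot and log-linear beliefs survives in the certain regime. First I would revisit the inductive argument in the proof of Lemma~\ref{thm:DG_finite} and note that it never invokes finiteness of $R_{i\theta}$: the induction only uses that both update rules share the identical multiplicative factor $\ell_{i\theta}(\mathbf{n}_{it},\omega_{it+1}|\mathbf{r}_{i\theta})$, that the two belief sequences start from the same vector $\mathbf{1}$ by Assumption~\ref{assum:inital_beliefs}, that all beliefs stay strictly positive (so the logarithms are well defined), and that Jensen's inequality turns the DeGroot arithmetic average into a lower bound on the geometric average used by the log-linear rule. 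By Lemma~\ref{lem:ell_dog_bound} the certain likelihood update is finite at every finite $t$, so each quantity appearing in the induction is finite and the chain $\log\big(\sum_{j}[\mathbf{A}]_{ij}\mu_{jt-1}^{[DG]}(\theta)\big) \ge \sum_{j}[\mathbf{A}]_{ij}\log\big(\mu_{jt-1}^{[DG]}(\theta)\big) \ge \sum_{j}[\mathbf{A}]_{ij}\log\big(\mu_{jt-1}^{[LL]}(\theta)\big)$ goes through verbatim. Hence $\mu_{it}^{[DG]}(\theta) \ge \mu_{it}^{[LL]}(\theta)$ for all $i$ and all $t\ge 0$, pointwise, also in the certain case $\boldsymbol{\pi}_{i\theta}=\boldsymbol{\pi}_{i\theta^*}$.

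Second, under the hypotheses of the corollary --- Assumptions~\ref{assum:graph} and \ref{assum:inital_beliefs}, $\boldsymbol{\pi}_{i\theta}=\boldsymbol{\pi}_{i\theta^*}$ for every $i\in\mathbf{M}$, and $R_{i\theta}\to\infty$ --- Corollary~\ref{lem:LL_Dog_Inf} already gives $\lim_{t\to\infty}\mu_{it}^{[LL]}(\theta)=\infty$ a.s. Combining this with the pointwise bound $\mu_{it}^{[DG]}(\theta)\ge\mu_{it}^{[LL]}(\theta)$ forces $\lim_{t\to\infty}\mu_{it}^{[DG]}(\theta)=\infty$ a.s., which is the assertion.

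I expect there is no substantial obstacle here; the only thing that needs care is making explicit that the comparison furnished by Lemma~\ref{thm:DG_finite} is genuinely a per-time-step inequality rather than merely an asymptotic one, so that the divergence of the log-linear beliefs transfers directly without any additional estimate. Since the induction establishes the inequality for each finite $t$ and the right-hand side is finite at every finite $t$, passing to the limit is immediate and the argument is complete.
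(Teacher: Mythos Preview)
Your proposal is correct and follows exactly the paper's own argument: the paper's proof simply cites Lemma~\ref{thm:DG_finite} (the pointwise DeGroot $\ge$ log-linear comparison) together with Corollary~\ref{lem:LL_Dog_Inf}. Your additional care in verifying that the inductive Jensen argument from Lemma~\ref{thm:DG_finite} never uses finiteness of $R_{i\theta}$, so that the per-time-step inequality persists in the certain regime, just makes explicit what the paper leaves implicit.
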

\begin{proof}
    This can be directly seen from Lemma \ref{thm:DG_finite} and Corollary \ref{lem:LL_Dog_Inf}.
\end{proof}

Next, we will derive the point of convergence when at least one agent $i$ has a certain set of probabilities s.t. $\boldsymbol{\pi}_{i\theta}\ne \boldsymbol{\pi}_{i\theta^*}$. First, we provide the following lemma that describes the properties of the beliefs updated using the DeGroot-style learning rule for a fully connected network.
\begin{lemma} \label{lem:DG_rate}
Let Assumption \ref{assum:inital_beliefs} hold, the network graph be fully connected, i.e. $\mathbf{A}=\frac{1}{m}\mathbf{11}'$, and there exists a $\theta$ s.t. $\boldsymbol{\pi}_{i\theta}\ne \boldsymbol{\pi}_{i\theta^*}$ for at least one agent~$i$. Then, the beliefs generated by the update rule (\ref{eq:DG_algo}) with infinite evidence asymptotically convergence to zero at a geometric rate determined by the Centralized Average (CA) divergence, i.e., for all $i \in \mathbf{M}$
\begin{eqnarray}
\lim_{t \to \infty} \frac{1}{t} \mu_{it}(\theta) = -D_{CA}(\boldsymbol{\Pi}_{\theta^*}||\boldsymbol{\Pi}_{\theta}),
	\end{eqnarray}
	where 
	\begin{eqnarray}
	D_{CA}(\boldsymbol{\Pi}_{\theta^*}||\boldsymbol{\Pi}_{\theta}) =  -\sum_{k_1=1}^K \cdots \sum_{k_m=1}^K \pi_{1k_1 \theta^*}\cdots \pi_{mk_m \theta^*} \nonumber \\ \cdot \log\left(\frac{1}{m}\left(  \frac{\pi_{1k_1\theta}}{\pi_{1k_1\theta^*}}+\cdots + \frac{\pi_{mk_m\theta}}{\pi_{mk_m\theta^*}}\right)\right)  
	\end{eqnarray}
	and $\boldsymbol{\Pi}_{\theta}=\{\boldsymbol{\pi}_{i\theta}\}_{\forall i\in \mathbf{M}}$ is the set of probabilities of all agents.
\end{lemma}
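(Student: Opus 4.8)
The plan is to exploit the special structure $\mathbf{A}=\tfrac1m\mathbf{11}'$ to collapse the vector recursion (\ref{eq:DG_algo}) into a scalar one, and then to apply the strong law of large numbers to the resulting time average. (I read the claimed limit as $\lim_{t\to\infty}\tfrac1t\log\mu_{it}(\theta)=-D_{CA}(\boldsymbol{\Pi}_{\theta^*}\|\boldsymbol{\Pi}_{\theta})$, since the right-hand side is a rate.) First I would set $\bar\mu_t:=\tfrac1m\sum_{j=1}^m\mu_{jt}(\theta)$ and observe that for a fully connected graph (\ref{eq:DG_algo}) becomes $\mu_{it+1}(\theta)=\ell_{i\theta}(\mathbf{n}_{it},\omega_{it+1}|\mathbf{r}_{i\theta})\,\bar\mu_t$; averaging over $i$ gives the one-dimensional recursion $\bar\mu_{t+1}=\bar\mu_t\cdot g_{t+1}$, where $g_\tau:=\tfrac1m\sum_{i=1}^m\ell_{i\theta}(\mathbf{n}_{i\tau-1},\omega_{i\tau}|\mathbf{r}_{i\theta})$. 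Since $\bar\mu_0=1$ by Assumption~\ref{assum:inital_beliefs}, telescoping yields $\log\bar\mu_t=\sum_{\tau=1}^{t}\log g_\tau$, and for an individual agent $\log\mu_{it}(\theta)=\log\ell_{i\theta}(\mathbf{n}_{it-1},\omega_{it}|\mathbf{r}_{i\theta})+\log\bar\mu_{t-1}$.

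Next I would replace the time-varying updates by their almost-sure limits. In the certain regime $R_{i\theta}\to\infty$, Lemma~\ref{lem:update_lim}(2) gives $\ell_{i\theta}(\mathbf{n}_{i\tau-1},k|\mathbf{r}_{i\theta})\to\pi_{ik\theta}/\pi_{ik\theta^*}$ a.s.\ for each of the finitely many $k\in\boldsymbol{\Omega}$ jointly, hence $\ell_{i\theta}(\mathbf{n}_{i\tau-1},\omega_{i\tau}|\mathbf{r}_{i\theta})-\pi_{i\omega_{i\tau}\theta}/\pi_{i\omega_{i\tau}\theta^*}\to0$ a.s.\ no matter which categories are observed. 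Writing $G_\tau:=\tfrac1m\sum_{i=1}^m\pi_{i\omega_{i\tau}\theta}/\pi_{i\omega_{i\tau}\theta^*}$, this means $\log g_\tau=\log G_\tau+\varepsilon_\tau$ with $\varepsilon_\tau\to0$ a.s. Since the signals $(\omega_{1\tau},\dots,\omega_{m\tau})$ are i.i.d.\ across $\tau$ with $\omega_{i\tau}$ drawn independently from $\boldsymbol{\pi}_{i\theta^*}$, the sequence $\{\log G_\tau\}$ is i.i.d.\ and bounded, so by the strong law of large numbers together with a Ces\`aro argument for the vanishing errors,
\[
\frac1t\log\bar\mu_t=\frac1t\sum_{\tau=1}^{t}\log G_\tau+\frac1t\sum_{\tau=1}^{t}\varepsilon_\tau\;\longrightarrow\;\mathbb{E}\!\left[\log\!\left(\tfrac1m\sum_{i=1}^m\frac{\pi_{iK_i\theta}}{\pi_{iK_i\theta^*}}\right)\right]\ \text{a.s.},
\]
with $K_i\sim\boldsymbol{\pi}_{i\theta^*}$ independent; expanding this expectation over the product measure $\prod_i\boldsymbol{\pi}_{i\theta^*}$ reproduces exactly $-D_{CA}(\boldsymbol{\Pi}_{\theta^*}\|\boldsymbol{\Pi}_{\theta})$. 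To pass back to each $\mu_{it}(\theta)$, note $\log\ell_{i\theta}(\mathbf{n}_{it-1},\omega_{it}|\mathbf{r}_{i\theta})$ converges to the bounded value $\log(\pi_{i\omega_{it}\theta}/\pi_{i\omega_{it}\theta^*})$, so $\tfrac1t\log\ell_{i\theta}(\cdot)\to0$ and therefore $\tfrac1t\log\mu_{it}(\theta)\to-D_{CA}(\boldsymbol{\Pi}_{\theta^*}\|\boldsymbol{\Pi}_{\theta})$ for every $i$. Strict concavity of $\log$ with $\mathbb{E}[\pi_{iK_i\theta}/\pi_{iK_i\theta^*}]\le1$ and the nondegeneracy of this ratio at the disagreeing agent give $D_{CA}>0$, so the beliefs decay geometrically to zero; the same Jensen inequality (applied inside versus outside the average) also gives for free that $D_{CA}\le\tfrac1m\sum_iD_{KL}(\boldsymbol{\pi}_{i\theta^*}\|\boldsymbol{\pi}_{i\theta})$, i.e.\ DeGroot learning is no faster than the log-linear rate.

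The main obstacle I anticipate is the bookkeeping that licenses replacing $g_\tau$ by $G_\tau$: one must argue that $\varepsilon_\tau\to0$ almost surely \emph{along the random signal trajectory} (which is fine because the finitely many per-category limits in Lemma~\ref{lem:update_lim} hold jointly a.s.) and that $g_\tau,G_\tau$ stay bounded away from $0$ and $\infty$ so that taking logarithms is legitimate — this is where one needs an absolute-continuity-type condition, $\pi_{ik\theta}>0$ whenever $\pi_{ik\theta^*}>0$ (equivalently $D_{CA}<\infty$), so that no observed category can send the averaged ratio to $0$. The remaining steps are a routine application of the strong law and Jensen's inequality.
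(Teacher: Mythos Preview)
Your proposal is correct and follows essentially the same route as the paper: collapse the fully connected DeGroot recursion to the scalar product $\bar\mu_t=\prod_\tau g_\tau$, replace each $g_\tau$ by its limiting value $G_\tau=\tfrac1m\sum_i\pi_{i\omega_{i\tau}\theta}/\pi_{i\omega_{i\tau}\theta^*}$ using Lemma~\ref{lem:update_lim}(2), and apply the law of large numbers to $\tfrac1t\sum_\tau\log G_\tau$. The only noteworthy difference is cosmetic: the paper handles the transient by fixing a cutoff time $T$ and bounding the post-$T$ updates within $\epsilon$ in probability (obtaining convergence in probability), whereas your Ces\`aro treatment of the a.s.-vanishing errors $\varepsilon_\tau$ is a bit cleaner and in fact delivers almost-sure convergence of the rate; your explicit absolute-continuity caveat ($\pi_{ik\theta}>0$ whenever $\pi_{ik\theta^*}>0$) is also a useful clarification that the paper leaves implicit.
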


\begin{proof}
    First, from Lemma \ref{lem:update_lim} condition 2, the likelihood updates converge to the ratio of the probabilities for $\theta$ and $\theta^*$. Since the logarithm and average operations are continuous, we know that for any $\epsilon>0$ and $\delta>0$, there exists a finite $T$ s.t. for $t>T$ the log average likelihood update is bounded as
     \begin{equation*}
     \left | \log\left(\frac{1}{m} \sum_{i=1}^m \ell_{i\theta}(\mathbf{n}_{it-1}\omega_{it}|\mathbf{r}_{i\theta})\right) - \log\left( \frac{1}{m} \sum_{i=1}^m \frac{\pi_{i\omega_{it}\theta}}{\pi_{i\omega_{it}\theta^*}} \right) \right | \leq \epsilon
     \end{equation*}
     with probability at least $1-\delta$. Also, we know that $\boldsymbol{\mu}_T(\theta)$ is bounded since $\ell_{i\theta}(\mathbf{n}_{it-1},\omega_{it}|\mathbf{r}_{i\theta})$ is bound by Lemma \ref{lem:ell_dog_bound} and converging to within $\ell_{i\theta}(\mathbf{n}_{it-1},k|\mathbf{r}_{i\theta})<\frac{\pi_{ik\theta}}{\pi_{ik\theta^*}}+\epsilon$ with probability at least $1-\delta$. Now, the beliefs generated using the update rule (\ref{eq:DG_algo}) at times times $t$  and $T$ are related as 
    \begin{eqnarray}\label{eq:dg_bel_con}
\boldsymbol{\mu}_{t}(\theta) &=& \mathbf{L}_\theta(\boldsymbol{\omega}_{t}) \mathbf{A} \mathbf{L}_\theta(\boldsymbol{\omega}_{t-1}) \cdots \mathbf{A}\mathbf{L}_\theta(\boldsymbol{\omega}_{T+1}) \mathbf{A} \boldsymbol{\mu}_T(\theta)  \nonumber \\
&=& \mathbf{L}_\theta(\boldsymbol{\omega}_{t}) \frac{1}{m}\mathbf{11'} \mathbf{L}_\theta(\boldsymbol{\omega}_{t-1}) \cdots \frac{1}{m}\mathbf{11'}\mathbf{L}_\theta(\boldsymbol{\omega}_{T+1}) \nonumber \\ & &\cdot \left( \frac{1}{m} \sum_{i=1}^m \mu_{iT}(\theta)\right) \mathbf{1}  \nonumber \\
&= & \prod_{\tau = T+1}^t \left( \frac{1}{m} \sum_{i=1}^m \ell_{i\theta}(\mathbf{n}_{i\tau-1}\omega_{i\tau}|\mathbf{r}_{i\theta}) \right) \left( \frac{1}{m} \sum_{i=1}^m \mu_{iT}(\theta)\right) \nonumber \\
\end{eqnarray}
where $\mathbf{L}_\theta(\boldsymbol{\omega}_\tau)=  diag(\ell_{1\theta}(\mathbf{n}_{1\tau-1},\omega_{1\tau}|\mathbf{r}_{1\theta}),...,$ $\ell_{m\theta}(\mathbf{n}_{m\tau-1},\omega_{m\tau}|\mathbf{r}_{m\theta}))$. We then take the logarithm of both sides of the above equation and use the knowledge that the log-updates are bounded in probability to determine that the bounds with probability at least $1-\delta$ for the log-beliefs are
\begin{equation*}
G(t;T)-(t-T)\epsilon \leq log\left(\boldsymbol{\mu}_t(\theta)\right) \leq G(t;T)+(t-T)\epsilon,
\end{equation*}
where
\begin{equation*}
G(t;T) = \log\left( \frac{1}{m} \sum_{i=1}^m \mu_{iT}(\theta)\right) + \sum_{\tau=T+1}^t \log\left(\frac{1}{m} \sum_{i=1}^m \frac{\pi_{i\omega_{i\tau}\theta}}{\pi_{i\omega_{i\tau}\theta^*}}\right).
\end{equation*}
Note that the first term $G(t,T)$ is finite and constant with respect to $t$.  Using the law of large numbers the
asymptotic convergence rate is bounded with probability at least $1-\delta$ as
\begin{equation*}
D_{CA}(\boldsymbol{\Pi}_{\theta^*}||\boldsymbol{\Pi}_{\theta})-\epsilon \leq -\lim_{t\rightarrow \infty} \frac{1}{t} \log\left(\boldsymbol{\mu}_{t}(\theta)\right) \leq D_{CA}(\boldsymbol{\Pi}_{\theta^*}||\boldsymbol{\Pi}_{\theta})+\epsilon.
\end{equation*}
Note that $\epsilon$ can be made arbitrarily small by setting $T$ larger.  Thus, the convergence rate converges in probability to 
\begin{equation*}
\lim_{t\rightarrow \infty} \frac{1}{t} \log\left(\boldsymbol{\mu}_{t}(\theta)\right) = -D_{CA}(\boldsymbol{\Pi}_{\theta^*}||\boldsymbol{\Pi}_{\theta}).
\end{equation*}
\end{proof}

This shows that even for the DeGroot-style rule, any divergence between $\boldsymbol{\pi}_{i\theta}$ and $\boldsymbol{\pi}_{i\theta^*}$ causes the beliefs to decrease at a rate larger than the sub-exponential growth rate. This is state formally in the following corollary.

\begin{corollary} \label{cor:DG_dog2}
    Let Assumption \ref{assum:inital_beliefs} hold and the network graph be fully connected, i.e., $\mathbf{A}=\frac{1}{m}\mathbf{11'}$, and at least one agent $i$ has a set of probabilities s.t. $\boldsymbol{\pi}_{i\theta}\ne \boldsymbol{\pi}_{i\theta^*}$. Then, the beliefs generated by the update rule (\ref{eq:DG_algo}) with infinite evidence allows for learning, i.e., they converge in probability to 
    \begin{eqnarray}
    \lim_{t\to\infty} \mu_{it}(\theta) = 0.
    \end{eqnarray}
\end{corollary}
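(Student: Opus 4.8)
The plan is to read off the corollary from Lemma~\ref{lem:DG_rate}, which under exactly these hypotheses already gives $\frac{1}{t}\log\mu_{it}(\theta)\overset{P}{\to}-D_{CA}(\boldsymbol{\Pi}_{\theta^*}||\boldsymbol{\Pi}_{\theta})$ for every $i\in\mathbf{M}$. The only real content is to verify that $D_{CA}(\boldsymbol{\Pi}_{\theta^*}||\boldsymbol{\Pi}_{\theta})>0$ whenever some agent $j$ has $\boldsymbol{\pi}_{j\theta}\neq\boldsymbol{\pi}_{j\theta^*}$; granting strict positivity of the rate, the passage to $\mu_{it}(\theta)\overset{P}{\to}0$ is routine.

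For the positivity, I would recognize the defining multiple sum of $D_{CA}(\boldsymbol{\Pi}_{\theta^*}||\boldsymbol{\Pi}_{\theta})$ as an expectation $\mathbb{E}[-\log Z]$, where $Z=\frac{1}{m}\sum_{i=1}^m Y_i$ and $Y_i=\pi_{ik_i\theta}/\pi_{ik_i\theta^*}$, with $k_1,\dots,k_m$ independent and $k_i\sim\boldsymbol{\pi}_{i\theta^*}$. Since each $k_i$ only hits categories with $\pi_{ik_i\theta^*}>0$, one has $\mathbb{E}[Y_i]=\sum_{k:\,\pi_{ik\theta^*}>0}\pi_{ik\theta}\le 1$, hence $\mathbb{E}[Z]\le 1$. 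Applying Jensen's inequality to the convex map $-\log$ gives $D_{CA}(\boldsymbol{\Pi}_{\theta^*}||\boldsymbol{\Pi}_{\theta})=\mathbb{E}[-\log Z]\ge-\log\mathbb{E}[Z]\ge 0$. For strictness, equality throughout would force $\mathbb{E}[Z]=1$ together with $Z$ almost surely constant (strict convexity of $-\log$), hence $Z\equiv 1$ a.s.; freezing every index but $k_j$ then forces $Y_j$ to be a constant $c_j$ on the support of $\boldsymbol{\pi}_{j\theta^*}$, so $\mathbb{E}[Y_j]=c_j$, and $\frac1m\sum_j c_j=1$ with each $c_j\le 1$ yields $c_j=1$ for all $j$; this means $\pi_{jk\theta}=\pi_{jk\theta^*}$ on the support of $\boldsymbol{\pi}_{j\theta^*}$ and $\sum_{k:\,\pi_{jk\theta^*}>0}\pi_{jk\theta}=1$, i.e. $\boldsymbol{\pi}_{j\theta}=\boldsymbol{\pi}_{j\theta^*}$ for every $j$, contradicting the hypothesis. (If $\pi_{jk\theta}=0$ on part of a support the only effect is to make $-\log Z$, and hence $D_{CA}(\boldsymbol{\Pi}_{\theta^*}||\boldsymbol{\Pi}_{\theta})$, larger; in all cases $D_{CA}(\boldsymbol{\Pi}_{\theta^*}||\boldsymbol{\Pi}_{\theta})\in(0,\infty]$.) Isolating this equality case of Jensen is the main obstacle.

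With $d:=D_{CA}(\boldsymbol{\Pi}_{\theta^*}||\boldsymbol{\Pi}_{\theta})>0$ in hand, fix $\eta\in(0,1)$ and $\epsilon\in(0,1)$. By Lemma~\ref{lem:DG_rate} there is $t_0$ such that for every $t\ge t_0$, with probability at least $1-\epsilon$, $\frac{1}{t}\log\mu_{it}(\theta)\le-\tfrac{d}{2}$, so on that event $\log\mu_{it}(\theta)\le-\tfrac{td}{2}<\log\eta$ once $t\ge\max\{t_0,\,\tfrac{2}{d}\log\tfrac1\eta\}$. Hence $P(\mu_{it}(\theta)\ge\eta)\le\epsilon$ for all such $t$, and since $\epsilon$ and $\eta$ were arbitrary, $\mu_{it}(\theta)\overset{P}{\to}0$, which is the assertion of the corollary.
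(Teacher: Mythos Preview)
Your proposal is correct and follows essentially the same approach as the paper. The paper does not supply a separate proof for this corollary, treating it as an immediate consequence of Lemma~\ref{lem:DG_rate}; the strict positivity of $D_{CA}(\boldsymbol{\Pi}_{\theta^*}\|\boldsymbol{\Pi}_{\theta})$ via Jensen's inequality that you spell out here is precisely the argument the paper gives just afterwards in Lemma~\ref{thm:comp_rate}, though your treatment of the equality case is more careful than the paper's one-line assertion.
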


Now, let us compare this result to a network updating their beliefs using the log-linear rule (\ref{eq:main_algo}) in the following lemma.

\begin{lemma} \label{thm:comp_rate}
Assuming a network with a doubly stochastic aperiodic matrix $\mathbf{A}$ and a certain set of probabilities such that there exists a $\theta$ s.t. $\boldsymbol{\pi}_{i\theta}\ne \boldsymbol{\pi}_{i\theta^*}$ for at least one agent $i$, the log-linear beliefs (\ref{eq:main_algo}) converge in probability to zero at a geometric rate determined by the average Kullback-Leibler divergence, i.e.,
\begin{equation}
\lim_{t \to \infty} \frac{1}{t} \log\left(\boldsymbol{\mu}_t(\theta)\right) = -\frac{1}{m} \sum_{i=1}^m D_{KL}(\boldsymbol{\pi}_{i\theta^*}||\boldsymbol{\pi}_{i\theta}).
\end{equation}
for $i\in \mathbf{M}$. Furthermore, this convergence rate is faster than that of the DeGroot rule (\ref{eq:DG_algo}) for a fully connected graph where $\mathbf{A}=\frac{1}{m}\mathbf{11}'$, i.e.,
\begin{eqnarray} \label{eq:cov_rates}
	\frac{1}{m}\sum_{i=1}^m D_{KL}(\boldsymbol{\pi}_{i\theta^*}||\boldsymbol{\pi}_{i\theta})\ge D_{CA}(\boldsymbol{\Pi}_{\theta^*}||\boldsymbol{\Pi}_{\theta})\ge 0.
	\end{eqnarray}
\end{lemma}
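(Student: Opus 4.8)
The statement has two independent halves: (i) pinning down the exact geometric decay rate $-\frac1m\sum_{i} D_{KL}(\boldsymbol{\pi}_{i\theta^*}\|\boldsymbol{\pi}_{i\theta})$ of the log-linear beliefs, and (ii) the chain of inequalities $\frac1m\sum_{i} D_{KL}(\boldsymbol{\pi}_{i\theta^*}\|\boldsymbol{\pi}_{i\theta}) \ge D_{CA}(\boldsymbol{\Pi}_{\theta^*}\|\boldsymbol{\Pi}_{\theta}) \ge 0$. I would treat them separately.

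For (i), the plan is to reuse, almost verbatim, the decomposition from the proof of Lemma~\ref{lem:LL_dog}, but now dividing by $t$. Starting from \eqref{eq:log_belief}, $\log(\boldsymbol{\mu}_t(\theta)) = \sum_{\tau=1}^t \mathbf{A}^{t-\tau}\log(\boldsymbol{\ell}_\theta(\boldsymbol{\omega}_\tau))$, I fix $\epsilon,\delta>0$ and, via Lemma~\ref{lem:update_lim}(2), choose a finite $T$ (depending only on $\epsilon,\delta$) so that $\big|\log\ell_{i\theta}(\mathbf{n}_{i\tau-1},\omega_{i\tau}|\mathbf{r}_{i\theta}) - \log(\pi_{i\omega_{i\tau}\theta}/\pi_{i\omega_{i\tau}\theta^*})\big|\le\epsilon$ for all $\tau>T$ with probability at least $1-\delta$. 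Splitting the sum at $T$: the prefix $\sum_{\tau\le T}\mathbf{A}^{t-\tau}\log(\boldsymbol{\ell}_\theta(\boldsymbol{\omega}_\tau))$ is $O(T^2)$ by Lemma~\ref{lem:ell_dog_bound} and hence $o(t)$; on the tail I write $\mathbf{A}^{t-\tau} = \frac1m\mathbf{11}' + (\mathbf{A}^{t-\tau}-\frac1m\mathbf{11}')$, and by Lemma~\ref{lem:doubly} the correction term contributes at most the constant $\frac{\sqrt2 m}{1-\lambda}(L+\epsilon)$ in norm, again $o(t)$. What remains is $\frac1m\mathbf{11}'\cdot\frac1t\sum_{\tau=T+1}^t\log(\boldsymbol{\ell}_\theta(\boldsymbol{\omega}_\tau))$, whose $i$th coordinate is $\frac1m\sum_{j=1}^m \frac1t\sum_{\tau=T+1}^t \log(\pi_{j\omega_{j\tau}\theta}/\pi_{j\omega_{j\tau}\theta^*})$; since the signals are i.i.d.\ from $\boldsymbol{\pi}_{j\theta^*}$, the strong law of large numbers sends this to $\frac1m\sum_{j=1}^m \mathbb{E}_{\boldsymbol{\pi}_{j\theta^*}}[\log(\pi_{jk\theta}/\pi_{jk\theta^*})] = -\frac1m\sum_{j=1}^m D_{KL}(\boldsymbol{\pi}_{j\theta^*}\|\boldsymbol{\pi}_{j\theta})$. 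Letting $t\to\infty$ first and then $\epsilon,\delta\downarrow 0$ yields the claimed rate in probability. (If $\pi_{jk\theta}=0$ while $\pi_{jk\theta^*}>0$ for some $j,k$, the KL term is $+\infty$, the belief goes to zero even faster, and the subsequent inequality is vacuous.)

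For (ii), the plan is a change of variables followed by two applications of Jensen's inequality. For each agent $j$, let $Z_j := \log(\pi_{j\omega_j\theta}/\pi_{j\omega_j\theta^*})$ with $\omega_j\sim\boldsymbol{\pi}_{j\theta^*}$, independent across $j$. Rewriting the nested sums over $(k_1,\dots,k_m)$ as expectations over the product law shows $\frac1m\sum_{j=1}^m D_{KL}(\boldsymbol{\pi}_{j\theta^*}\|\boldsymbol{\pi}_{j\theta}) = -\mathbb{E}\big[\frac1m\sum_{j=1}^m Z_j\big]$ and $D_{CA}(\boldsymbol{\Pi}_{\theta^*}\|\boldsymbol{\Pi}_{\theta}) = -\mathbb{E}\big[\log\big(\frac1m\sum_{j=1}^m e^{Z_j}\big)\big]$. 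The first inequality is then the same pointwise AM--GM/Jensen step used in the proof of Lemma~\ref{thm:DG_finite}: concavity of $\log$ gives $\log\big(\frac1m\sum_j e^{Z_j}\big)\ge\frac1m\sum_j\log e^{Z_j}=\frac1m\sum_j Z_j$ surely, so taking expectations and negating yields $\frac1m\sum_j D_{KL}\ge D_{CA}$. For the second inequality I apply Jensen in the opposite direction: $\mathbb{E}[\log(\frac1m\sum_j e^{Z_j})]\le\log\mathbb{E}[\frac1m\sum_j e^{Z_j}]=\log\big(\frac1m\sum_j\mathbb{E}[e^{Z_j}]\big)$, and $\mathbb{E}[e^{Z_j}] = \sum_{k:\pi_{jk\theta^*}>0}\pi_{jk\theta^*}\cdot\frac{\pi_{jk\theta}}{\pi_{jk\theta^*}} = \sum_{k:\pi_{jk\theta^*}>0}\pi_{jk\theta}\le 1$, so the bound is $\le\log 1=0$; hence $D_{CA}=-\mathbb{E}[\log(\cdots)]\ge 0$.

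I expect the genuinely delicate point to be the bookkeeping in part (i) rather than any conceptual difficulty: one must order the limits correctly (fix $T=T(\epsilon,\delta)$, send $t\to\infty$, only then shrink $\epsilon,\delta$) so that the $O(T^2)$ prefix and the $\frac{\sqrt2m}{1-\lambda}(L+\epsilon)$ constant are genuinely $o(t)$ and do not secretly reintroduce $t$-dependence, and so that the $1-\delta$ probability statement survives the double limit. Part (ii), by contrast, is essentially a two-line Jensen argument once the $Z_j$ substitution is in place.
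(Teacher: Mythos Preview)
Your proposal is correct and follows essentially the same route as the paper: part~(i) reuses the three-term decomposition \eqref{eq:dog-belief} from the proof of Lemma~\ref{lem:LL_dog}, bounds the first two terms as $o(t)$, and applies the strong law of large numbers to the $\frac{1}{m}\mathbf{11}'$ piece; part~(ii) is the same pair of Jensen applications (once pointwise for the upper bound, once in expectation for the lower bound). Your $Z_j$ substitution and the observation that $\mathbb{E}[e^{Z_j}]=\sum_{k:\pi_{jk\theta^*}>0}\pi_{jk\theta}\le 1$ rather than $=1$ are in fact slightly cleaner than the paper's presentation, but the argument is the same.
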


\begin{proof}
    The proof of Lemma~\ref{lem:LL_dog} provides the starting point to prove the first part of this theorem.
The log belief at time $t$ is expressed by (\ref{eq:dog-belief}).  For any $\epsilon>0$ and $\delta>0$ there exists a value of $T$ such that the first two terms on the right side of (\ref{eq:dog-belief}) are constant with respect to $t$ and finite with probability at least $1-\delta$.  The upper bound for third term is given by (\ref{eq:fullyconnectedlogupdatebound}).  By the same argument to get to this upper bound, it is clear that the lower bound can be given by replacing $\epsilon$ with $-\epsilon$, and thus with probability at least $1-\delta$,
\begin{eqnarray}
\left| \log\left(\mu_{it}(\theta)\right) - C + \left(  \frac{1}{m} \sum_{i=1}^m  \sum_{\tau=T+1}^t\log \left(\frac{\pi_{i\omega_{i\tau}\theta}}{\pi_{i\omega_{i\tau}\theta^*}}\right)\right) \right | \leq \epsilon
\end{eqnarray}
for any $i \in \mathbf{M}$ where $C$ represents the finite constant incorporating the first two terms in (\ref{eq:dog-belief}).
As $t \to \infty$, the law of large numbers leads to the bounds for convergence rate as
\begin{eqnarray}
\frac{1}{m} \sum_{i=1}^m D_{KL}(\boldsymbol{\pi}_{i\theta^*}||\boldsymbol{\pi}_{i\theta})-\epsilon \leq -\lim_{t\to \infty} \log\left(\mu_{it}(\theta) \right) \nonumber \\ \leq \frac{1}{m} \sum_{i=1}^m D_{KL}(\boldsymbol{\pi}_{i\theta^*}||\boldsymbol{\pi}_{i\theta})+\epsilon.
\end{eqnarray}
Note that $\epsilon$ can be made arbitrarily small by increasing the value of $T$ in (\ref{eq:dog-belief}); thus proving the first part of the theorem.

Next, we will prove (\ref{eq:cov_rates}). First, we prove that the CA divergence is non-negative using Jensen's inequality as follows:
\begin{eqnarray}
D_{CA}(\boldsymbol{\Pi}_{\theta^*}||\boldsymbol{\Pi}_{\theta})&=&-E^{\theta^*}\left[ \log \left( \frac{1}{m} \sum_{i=1}^m \frac{\boldsymbol{\pi}_{i\theta}}{\boldsymbol{\pi}_{i\theta^*}}\right)\right], \nonumber \\
& \ge & \log\left(\frac{1}{m}\sum_{i=1}^m E^{\theta^*} \left[\frac{\boldsymbol{\pi}_{i\theta}}{\boldsymbol{\pi}_{i\theta^*}}\right]\right), \nonumber \\
& = & \log(1) = 0,
\end{eqnarray}
with equality only when $\boldsymbol{\pi}_{i\theta}=\boldsymbol{\pi}_{i\theta^*}$, $\forall i\in N$. Then, we prove that the CA divergence is upper bounded by the average Kullback-Leibler divergence using Jensen's inequality, i.e.,
\begin{eqnarray}
\frac{1}{m}\sum_{i=1}^m D_{KL}(\boldsymbol{\pi}_{i\theta^*}||\boldsymbol{\pi}_{i\theta}) & = & -E^{\theta^*}\left[\frac{1}{m}\sum_{i=1}^m \log \left( \frac{\boldsymbol{\pi}_{i\theta}}{\boldsymbol{\pi}_{i\theta^*}} \right) \right] \nonumber \\
& \ge & -E^{\theta^*}\left[\log \left(\frac{1}{m}\sum_{i=1}^m \frac{\boldsymbol{\pi}_{i\theta}}{\boldsymbol{\pi}_{i\theta^*}} \right) \right] \nonumber \\
&=& D_{CA}(\boldsymbol{\Pi}_{\theta^*}||\boldsymbol{\Pi}_{\theta}).
\end{eqnarray}
with equality only when $\boldsymbol{\pi}_{i\theta}=\boldsymbol{\pi}_{i\theta^*}$, $\forall i\in \mathbf{M}$.
\end{proof}

These results indicate that the DeGroot-style update rule learns that a hypothesis is not the ground truth at a slower rate than the log-linear update rule (\ref{eq:main_algo}). Additionally, we found (through empirical evaluation) that the DeGroot belief for uncertain likelihood models reach a consensus and converge to finite value as the simulations in Section~\ref{sec:SIM} indicates.  This is because the uncertain likelihood ratio update functions $\ell_{i\theta}$ are converging to one. For the certain likelihood condition, the DeGroot rule allows for learning for a fully connected network, as shown in Corollaries~\ref{cor:DG_dog} and \ref{cor:DG_dog2}. Actually, the DeGroot-style rule is able to do this for any network satisfying Assumption~\ref{assum:graph} as indicated next. 

\begin{theorem} \label{thm:DG_dogmatic}
    Let Assumptions \ref{assum:graph} and \ref{assum:inital_beliefs} hold. Then, the beliefs generated by the update rule (\ref{eq:DG_algo}) with infinite evidence converge in probability to:
    \begin{equation}
    \lim_{t\rightarrow \infty} \mu_{it}(\theta) = 0 \ \text{if } \exists j\in\mathbf{M} \ \text{s.t.} \ \boldsymbol{\pi}_{j\theta}\ne\boldsymbol{\pi}_{j\theta^*}.
    \end{equation}
\end{theorem}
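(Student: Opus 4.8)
The plan is to reduce the general-graph case of Theorem~\ref{thm:DG_dogmatic} to a scalar supermartingale estimate for the total belief mass, the crucial structural fact being that the DeGroot iteration forces the beliefs into \emph{multiplicative consensus}. Throughout I would work in the certain regime $R_{i\theta}\to\infty$, where Lemma~\ref{lem:update_lim}, condition~2, gives $\ell_{i\theta}(\mathbf{n}_{it-1},\omega_{it}|\mathbf{r}_{i\theta})\to\pi_{i\omega_{it}\theta}/\pi_{i\omega_{it}\theta^*}$ a.s.; outside the degenerate case $\pi_{ik\theta}=0<\pi_{ik\theta^*}$ (which I would dispatch by a short separate argument, since there the belief is repeatedly multiplied by factors tending to $0$) this limit lies in a fixed compact subinterval of $(0,\infty)$ for all $t$ larger than some finite $T$, and by Lemma~\ref{lem:ell_dog_bound} the first $T$ steps only multiply $\boldsymbol{\mu}_t(\theta)$ by a finite constant, so it suffices to analyze the tail.

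First I would prove a \textbf{bounded-disagreement lemma}: $\Delta_t:=\max_i\log\mu_{it}(\theta)-\min_i\log\mu_{it}(\theta)$ stays bounded for all large $t$, a.s. Writing the iteration \eqref{eq:DG_algo} as $\boldsymbol{\mu}_{t+1}(\theta)=\mathbf{L}_{t+1}\mathbf{A}\,\boldsymbol{\mu}_t(\theta)$ with $\mathbf{L}_{t+1}=\mathrm{diag}(\boldsymbol{\ell}_\theta(\boldsymbol{\omega}_{t+1}))$ a positive diagonal matrix, and noting that a positive diagonal matrix is an isometry for Hilbert's projective metric $d_H(\mathbf{x},\mathbf{y})=\log\bigl(\max_i(x_i/y_i)/\min_i(x_i/y_i)\bigr)$ while $\mathbf{A}$ — primitive by Assumption~\ref{assum:graph}, hence $\mathbf{A}^{p}$ entrywise positive for some $p$ — contracts $d_H$ over $p$-step blocks by a factor $\kappa<1$ that is uniform once the $\ell$'s are bounded, the triangle inequality yields a recursion $d_H(\boldsymbol{\mu}_{t+p},\mathbf{1})\le\kappa\,d_H(\boldsymbol{\mu}_t,\mathbf{1})+C$ with $C<\infty$ (the $d_H$-spread produced by $p$ bounded-entry factors applied to $\mathbf{1}$); hence $\limsup_t\Delta_t<\infty$. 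In the fully connected case $\mathbf{A}=\tfrac1m\mathbf{11}'$ this is immediate and is implicit in the proof of Lemma~\ref{lem:DG_rate}; for a general primitive $\mathbf{A}$ this contraction step is the main obstacle.

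Then I would track $S_t:=\sum_{i=1}^m\mu_{it}(\theta)$. Since $\mathbf{A}$ is doubly stochastic, $\mathbf{1}'\mathbf{A}=\mathbf{1}'$, so $S_{t+1}=\boldsymbol{\ell}_\theta(\boldsymbol{\omega}_{t+1})'\mathbf{A}\,\boldsymbol{\mu}_t(\theta)=S_t\sum_{i}q_{it}\,\ell_{i\theta}(\mathbf{n}_{it},\omega_{it+1}|\mathbf{r}_{i\theta})$, where $\mathbf{q}_t=\mathbf{A}\boldsymbol{\mu}_t(\theta)/S_t$ is a probability vector whose coordinate on the disagreeing agent $j$ satisfies $q_{jt}\ge\delta>0$ for all large $t$ (by the bounded-disagreement lemma, which makes $p_{it}=\mu_{it}/S_t$ bounded below, together with primitivity of $\mathbf{A}$, passing to $p$-step blocks if $[\mathbf{A}]_{jj}=0$). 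Conditioning on $\mathcal{F}_t$, and using that $\boldsymbol{\omega}_{t+1}$ is independent of $\mathcal{F}_t$ with $E[\ell_{i\theta}(\mathbf{n}_{it},\omega_{it+1}|\mathbf{r}_{i\theta})\mid\mathcal{F}_t]\to1$, Jensen's inequality for the concave logarithm gives $E[\log(\sum_i q_{it}\ell_{i\theta})\mid\mathcal{F}_t]\le\log(\sum_i q_{it}E[\ell_{i\theta}\mid\mathcal{F}_t])\to0$; crucially the inequality is \emph{strict, and bounded away from $0$ uniformly over the compact set $\{\mathbf{q}:q_j\ge\delta\}$}, because agent $j$'s limiting update $\pi_{j\omega_j\theta}/\pi_{j\omega_j\theta^*}$ is a non-degenerate function of the fresh signal $\omega_j$ whenever $\boldsymbol{\pi}_{j\theta}\ne\boldsymbol{\pi}_{j\theta^*}$ — the same fact that makes $D_{KL}(\boldsymbol{\pi}_{j\theta^*}\|\boldsymbol{\pi}_{j\theta})>0$ and $D_{CA}(\boldsymbol{\Pi}_{\theta^*}\|\boldsymbol{\Pi}_\theta)>0$. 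Hence for $t\ge T$, $\log S_t$ is a drift of at most $-c<0$ per step plus a martingale with bounded increments, and the martingale strong law gives that $\tfrac1t\log S_t$ converges a.s.\ to a value $\le -c<0$; therefore $S_t\to0$, so $0\le\mu_{it}(\theta)\le S_t\to0$ for every $i$, which is the claim of Theorem~\ref{thm:DG_dogmatic} (in fact a.s., a fortiori in probability), and this argument simultaneously recovers and localizes to arbitrary connected graphs the fully connected statement of Corollary~\ref{cor:DG_dog2}.
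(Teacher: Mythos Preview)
Your argument is correct and takes a genuinely different route from the paper's own proof of Theorem~\ref{thm:DG_dogmatic}. The paper does not establish any consensus/ratio bound; instead it writes $\boldsymbol{\mu}_t(\theta)=\prod_{\tau}(\mathbf{L}_\theta(\boldsymbol{\omega}_\tau)\mathbf{A})\boldsymbol{\mu}_T(\theta)$, fixes a block length $\nu$, and takes the \emph{expectation over the intermediate signals} inside each block (using $E^{\theta^*}[\ell_{i\theta}]\le 1+\epsilon$) so that each block collapses to $(1+\epsilon)^{\nu-1}\mathbf{L}_\theta(\boldsymbol{\omega}_{T+\nu z})\mathbf{A}^{\nu}$; then it replaces $\mathbf{A}^{\nu}$ by $\tfrac{1}{m}\mathbf{11}'$ up to the mixing error $\sqrt{2}m\lambda^{\nu}$ (Lemma~\ref{lem:doubly}), which reduces the estimate to the fully connected computation already done in Lemma~\ref{lem:DG_rate}, and finally uses nonnegativity of the beliefs to pass from $E^{\theta^*}_{\chi_\nu}[\boldsymbol{\mu}_t]\to 0$ to $\boldsymbol{\mu}_t\overset{P}{\to}0$.

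The trade-offs are clear. Your Hilbert-metric step (Birkhoff contraction of the primitive product $\prod(\mathbf{L}\mathbf{A})$, with the positive diagonals acting as isometries) is a clean structural replacement for the paper's block-averaging trick, and once $\Delta_t$ is bounded your supermartingale analysis of $\log S_t$ yields almost sure convergence with an explicit negative drift---strictly stronger than the paper's convergence in probability. The paper's proof, on the other hand, is more elementary in that it avoids projective-metric machinery and reuses the $D_{CA}$ rate from Lemma~\ref{lem:DG_rate} directly; its partial-expectation device is essentially a way to manufacture $\mathbf{A}^{\nu}$ out of interleaved $\mathbf{L}\mathbf{A}$ factors without ever controlling the belief ratios. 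Two minor remarks on your write-up: the lower bound $q_{jt}\ge\delta$ already follows from the one-step $\mathbf{A}$ (each row has a positive entry $\ge\eta$) together with bounded $\Delta_t$, so the ``pass to $p$-step blocks'' aside is unnecessary there; and the uniform Birkhoff contraction factor for the $p$-step product $\prod_{s=1}^{p}\mathbf{L}_{t+s}\mathbf{A}$ relies on the $\ell$'s lying in a fixed compact subinterval of $(0,\infty)$, which is exactly why you (correctly) restrict to $t\ge T$ via Lemma~\ref{lem:update_lim}.
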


\begin{proof}
    The beliefs at time $t$ can be expressed in matrix-vector form as
\begin{eqnarray*}
\boldsymbol{\mu}_t(\theta) &=& \mathbf{L}_\theta(\boldsymbol{\omega}_{t})\mathbf{A}\cdots \mathbf{L}_\theta(\boldsymbol{\omega}_{2})\mathbf{A}\mathbf{L}_\theta(\boldsymbol{\omega}_{1})\mathbf{A}\boldsymbol{\mu}_0(\theta)\\
& =& \prod_{\tau=T+1}^t \left( \mathbf{L}_\theta(\boldsymbol{\omega}_{\tau})\mathbf{A} \right) \boldsymbol{\mu}_T(\theta),
\end{eqnarray*}
where $\mathbf{L}_\theta(\boldsymbol{\omega}_\tau)=  diag(\ell_{1\theta}(\mathbf{n}_{1\tau-1},\omega_{1\tau}|\mathbf{r}_{1\theta}),...,$ $\ell_{m\theta}(\mathbf{n}_{m\tau-1},\omega_{m\tau}|\mathbf{r}_{m\theta}))$, the initial belief $\boldsymbol{\mu}_0(\theta) = \mathbf{1}$ and
\begin{equation*}
\boldsymbol{\mu}_T(\theta) = \prod_{\tau=1}^T \left( \mathbf{L}_\theta(\boldsymbol{\omega}_{\tau})\mathbf{A} \right) \mathbf{1}.
\end{equation*}
For any finite value of $T$, it is clear that $\boldsymbol{\mu}_T(\theta)$ is finite because it can be bounded by a finite number since the norms $\|\mathbf{L}_\theta(\boldsymbol{\omega}_\tau)\|\leq (T+K-1)$ for $1 \le \tau \le T$ via Lemma~\ref{lem:ell_dog_bound} and $\|\mathbf{A}\|=1$. From Lemma~\ref{lem:update_lim} condition 2, it is known that for any $\epsilon>0$ and $\delta>0$ there exist a finite $T$ such that with probability at least $1-\delta$, $\ell_{i\theta}(\mathbf{n}_{i\tau},\omega_{i\tau}|\mathbf{r}_{i\theta}) \leq \frac{\pi_{i\omega_{i\tau}\theta}}{\pi_{i\omega_{i\tau}\theta^*}} + \epsilon$ and $E^{\theta^*}\left[ \ell_{i\theta}(\mathbf{n}_{i\tau},\omega_{i\tau}|\mathbf{r}_{i\theta}) \right] \leq 1+\epsilon$. Let $E^{\theta^*}_{\chi_\nu}[\cdot]$ represent the expectation over the private signals for specific segments in time so that $\chi_\nu = \{ \omega_{i\tau} |  i \in \mathbf{M}, \tau = T+Z_1+\nu Z_2$ for $Z_1 = 1,\ldots,\nu-1$ and $Z_2 = 0,1,\ldots$  $\}$. Now because all the elements of the $\mathbf{A}$ and $\mathbf{L}_\theta$ matrices are non-negative, for $Z>0$ with probability at least $1-\delta$
\begin{eqnarray*}
& & E^{\theta^*}_{\chi_\nu}[\boldsymbol{\mu}_{T+\nu Z}(\theta)] \nonumber \\ & = & \mathbf{L}_\theta(\boldsymbol{\omega}_{T+\nu Z})\mathbf{A} \prod_{\tau=T+\nu(Z-1)+1}^{T+\nu Z-1}\left( E^{\theta^*}\left[\mathbf{L}_\theta(\boldsymbol{\omega}_{\tau})\right]\mathbf{A}\right) \nonumber \\ & & \cdot \mathbf{L}_\theta(\boldsymbol{\omega}_{T+\nu(Z-1)})\mathbf{A} \prod_{\tau=T+\nu(Z-2)+1}^{T+\nu (Z-1)-1}\left( E^{\theta^*}\left[\mathbf{L}_\theta(\boldsymbol{\omega}_{\tau})\right]\mathbf{A}\right)\\
&& \mathbf{L}_\theta(\boldsymbol{\omega}_{T+\nu(Z-2)})\mathbf{A} \cdots \mathbf{L}_\theta(\boldsymbol{\omega}_{T+\nu})\mathbf{A} \nonumber \\ & & \cdot \prod_{\tau=T++1}^{T+\nu-1}\left( E^{\theta^*}\left[\mathbf{L}_\theta(\boldsymbol{\omega}_{\tau})\right]\mathbf{A}\right) \boldsymbol{\mu}_T(\theta)\\
&\leq& (1+\epsilon)^{Z(\nu-1)} \mathbf{L}_\theta(\boldsymbol{\omega}_{T+\nu Z})\mathbf{A}^\nu \mathbf{L}_\theta(\boldsymbol{\omega}_{T+\nu (Z-1)}) \mathbf{A}^\nu \nonumber \\ & & \cdots \mathbf{L}_\theta(\boldsymbol{\omega}_{T+\nu}) \mathbf{A}^\nu \boldsymbol{\mu}_T(\theta).
\end{eqnarray*}
By Lemma~5 in~\cite{NOU2017}, each element of $\mathbf{A^\nu}$ is bounded above by $[\mathbf{A^\nu}]_{ij} \leq \frac{1}{m} + \sqrt{2}m\lambda^\nu$. Thus,
\begin{eqnarray*}
& & E^{\theta^*}_{\chi_\nu}[\boldsymbol{\mu}_{T+\nu Z}(\theta)] \nonumber \\ & \leq &(1+\epsilon)^{Z(\nu-1)} \left(1+\sqrt{2}m\lambda^\nu \right)^Z \mathbf{L}_\theta(\boldsymbol{\omega}_{T+\nu Z})\nonumber \\ && \frac{1}{m}\mathbf{11}' \mathbf{L}_\theta(\boldsymbol{\omega}_{T+\nu (Z-1)})\frac{1}{m}\mathbf{11}' \cdots  \mathbf{L}_\theta(\boldsymbol{\omega}_{T+\nu})\frac{1}{m}\mathbf{11}' \boldsymbol{\mu}_T(\theta)\\
&=& (1+\epsilon)^{Z(\nu-1)} \left(1+\sqrt{2}m\lambda^\nu \right)^Z \nonumber \\ && \cdot \prod_{z=1}^Z \left(\frac{1}{m} \sum_{i=1}^m \ell_{i\theta}(\omega_{i(T+\nu z)}) \right)  \frac{1}{m} \sum_{i=1}^m \mu_{iT}(\theta) \mathbf{L}_\theta(\boldsymbol{\omega}_{T+\nu}) \mathbf{1}.
\end{eqnarray*}
Since $\ell_{i\theta}(\mathbf{n}_{it-1},\omega_{it}|\mathbf{r}_{i\theta}) = \frac{\pi_{i\omega_{it}\theta}}{\pi_{i\omega_{it}\theta*}}$ as $t\to \infty$, then $\log\left( \frac{1}{m} \sum_{i=1}^m \ell_{i\theta}(\mathbf{n}_{it-1},\omega_{it}|\mathbf{r}_{i\theta})\right) = \log\left( \frac{1}{m} \sum_{i=1}^m \frac{\pi_{i\omega_{it}\theta}}{\pi_{i\omega_{it}\theta*}}\right)$ a.s.. Using the fact that $\log(1+x)\le x$ for $x\ge 0$, it is easy to see that for $T$ sufficiently large, the log expected belief can be bounded with probability at least $1-\delta$ as
\begin{eqnarray*}
\log\left( E^{\theta^*}_{\chi_\nu}[\boldsymbol{\mu}_{T+\nu Z}(\theta)]  \right) \leq Z \left(\nu\epsilon+\sqrt{2}m\lambda^\nu \right) \nonumber \\ + \sum_{z=1}^Z \log\left(  \frac{1}{m} \sum_{i=1}^m \frac{\pi_{i\omega_{i(T+\nu z)}\theta}}{\pi_{i\omega_{i(T+\nu z)}\theta*}}\right)+C,
\end{eqnarray*}
where $C = \log\left(\frac{1}{m} \sum_{i=1}^m \mu_{iT}(\theta)\right)+\log\left(\boldsymbol{\ell}_\theta(\mathbf{n}_{iT+\nu  -1}\omega_{iT+\nu }|\mathbf{r}_{i\theta}) \right)$ is a finite constant.
By the law of large numbers for sufficiently large $Z$,
\begin{eqnarray*}
\log\left( E^{\theta^*}_{\chi_\nu}[\boldsymbol{\mu}_{T+\nu Z}(\theta)]  \right) \\ \leq Z \left((\nu-1)\epsilon+\sqrt{2}m\lambda^\nu - D_{CA}(\boldsymbol{\Pi}_{\theta^*}|| \boldsymbol{\Pi}_{\theta}) \right)+C.
\end{eqnarray*}
Since the centralized average divergence is positive as $\boldsymbol{\pi}_{i\theta} \ne \boldsymbol{\pi}_{i\theta^*}$ for at least one agent $i$, $\epsilon$ and $\nu$ can be chosen such that $\nu\epsilon+\sqrt{2}m\lambda^\nu < D_{CA}(\boldsymbol{\Pi}_{\theta^*}|| \boldsymbol{\Pi}_{\theta}) $ and the bounds diverges to $-\infty$ with probability at least $1-\delta$.  Thus, $\lim_{Z \to \infty} E^{\theta^*}_{\chi_\nu}[\boldsymbol{\mu}_{T+\nu Z}(\theta)] \overset{P}{\to} 0$. Finally, the beliefs are always bounded below by zero, and so convergence of the expectation to zero also implies that $\boldsymbol{\mu}_t(\theta) \overset{P}{\to} \infty$.
\end{proof}

\begin{figure}[t]
    \centering
    \includegraphics[width=0.5\columnwidth]{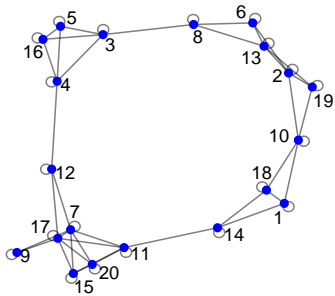} \vspace{-10pt}
    \caption{Example of the network structure considered in the numerical analysis.} \vspace{-12pt}
    \label{fig:graph}
\end{figure}

In summary, DeGroot-style social learning with finite prior evidence does not in general lead to the same beliefs as the centralized uncertain likelihood ratio, unlike the learning rule in~(\ref{eq:main_algo}).  Nevertheless for infinite evidence, learning is still achieved.  For the general case as the uncertain update $\ell_{i\theta}(\mathbf{n}_{it-1},\omega_{it}|\mathbf{r}_{i\theta})$ transitions from a certain-like update to a value of one more slowly as more prior evidence $R_{i\theta}$ is available, more prior evidence leads to a larger chance that beliefs using the DeGroot-style rule will converge to a value greater than one when $\theta=\theta^*$ and a value less than one when $\theta \ne \theta^*$. The experiments in Section~\ref{sec:SIM} empirically show that the interpretation of the beliefs as a uncertain likelihood ratio via Definition~\ref{def:ULRT} is still meaningful even though it is less so than for the social aggregation rule given by (\ref{eq:main_algo}).

\section{Numerical Analysis} \label{sec:SIM}
Next, we present a simulation study of a group of $m=20$ agents applying the proposed algorithms to empirically validate the results. In this study, we considered that the agents are socially connected according to an undirected random geometric graph shown in Figure~\ref{fig:graph}. The weights of the adjacency matrix were constructed using a lazy metropolis matrix \cite{NOU2017} to ensure that the network is doubly stochastic. 

Then, we considered three scenarios based on the amount of prior evidence randomly collected within the following categories: Low, i.e., $R_{i\theta}\in [0, 100]$, High, i.e., $R_{i\theta}\in [1000, 10000]$, and Infinite, i.e., $R_{i\theta}\to \infty$. Within each scenario, each agent randomly selects $R_{i\theta}$ and collects a set of prior evidence for each hypothesis $\theta\in\boldsymbol{\Theta}=\{\theta_1,\theta_2,\theta_3,\theta_4\}$, where the parameters of each hypothesis are shown in Table~\ref{table:theta}. Then, each learning algorithm is simulated for $N=50$ Monte Carlo runs, where the amount of prior evidence, the set of prior evidence, and the measurement sequence is randomly generated during each run. 

\begin{table}[t]
\centering
\caption{Set of hypotheses $\boldsymbol{\Theta}$}
\resizebox{\columnwidth}{!}{\begin{tabular}{c|cccc|}
 & $\theta_1$ & $\theta_2$ & $\theta_3$ & $\theta_4$  \\ \hline
$\boldsymbol{\pi}_{i\theta}$ & $\{0.6, 0.4\}$ & $\{0.55, 0.45\}$ & $\{0.5, 0.5\}$ & $\{0.4, 0.6\}$ \\
$D_{KL}(\boldsymbol{\pi}_{i\theta}||\boldsymbol{\pi}_{i\theta^*})$ & $0$ & $0.0051$ & $0.0204$ & $0.0811$
\end{tabular}} \label{table:theta}
\vspace{-14pt}
\end{table}

\begin{figure*}[t]
	\subfigure[Log-linear $\theta_1$]{
		\centering
		\includegraphics[width=0.24\textwidth]{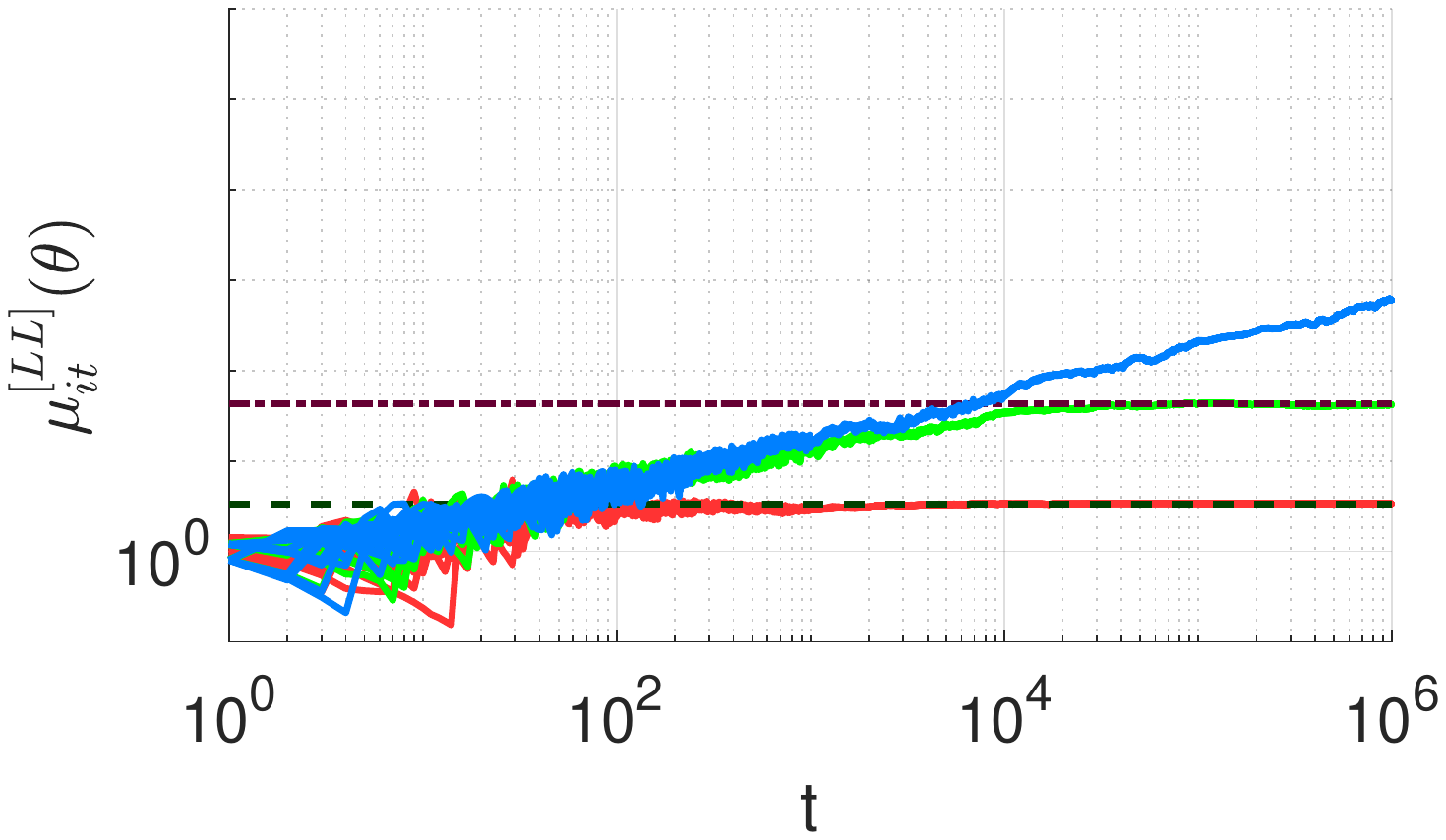}
		\label{fig:ll_t1}
	}%
	\subfigure[Log-linear $\theta_2$]{
		\centering
		\includegraphics[width=0.24\textwidth]{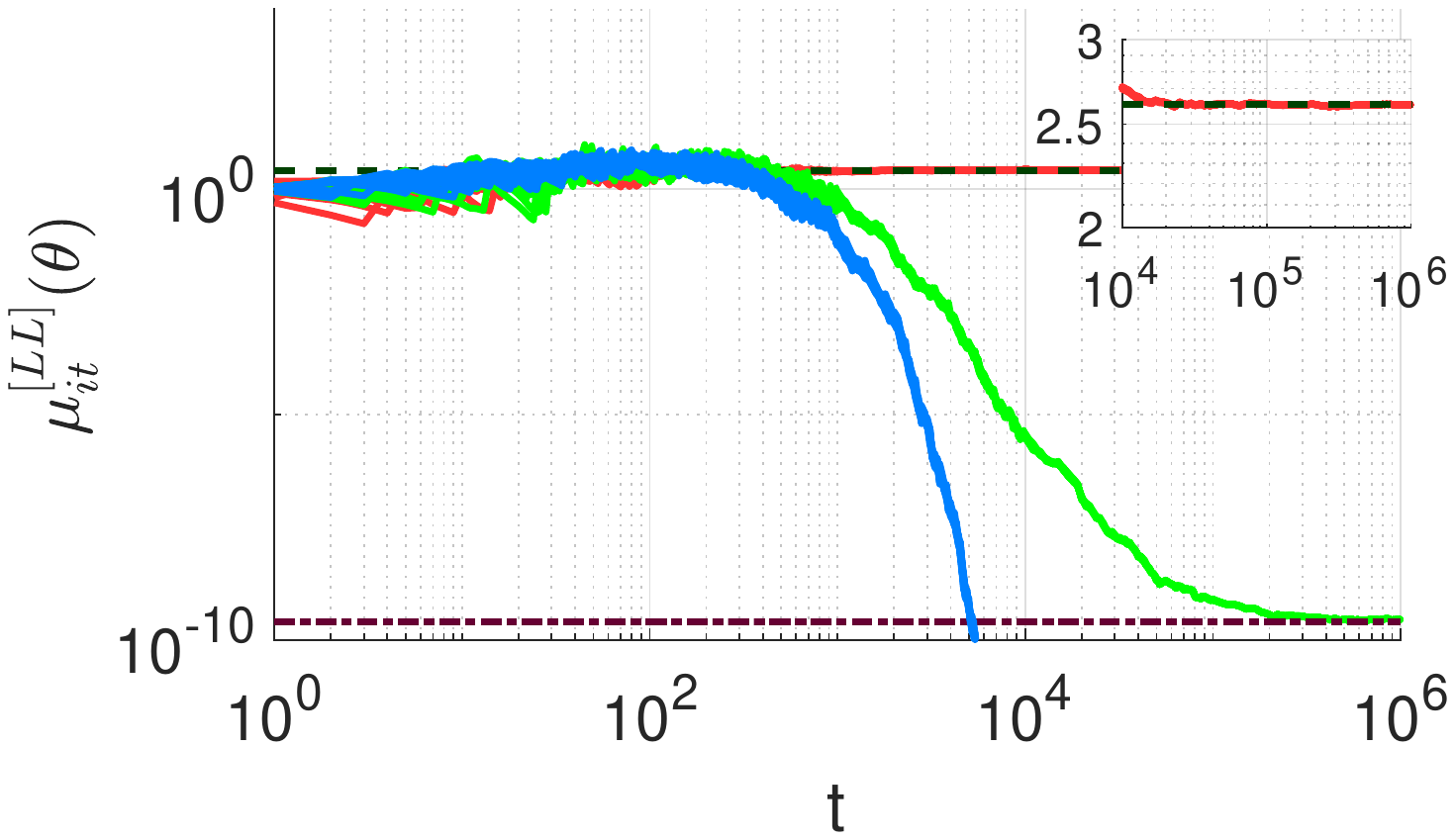}
		\label{fig:ll_t2}
	}%
	\subfigure[Log-linear $\theta_3$]{
	    \centering
		\includegraphics[width=0.24\textwidth]{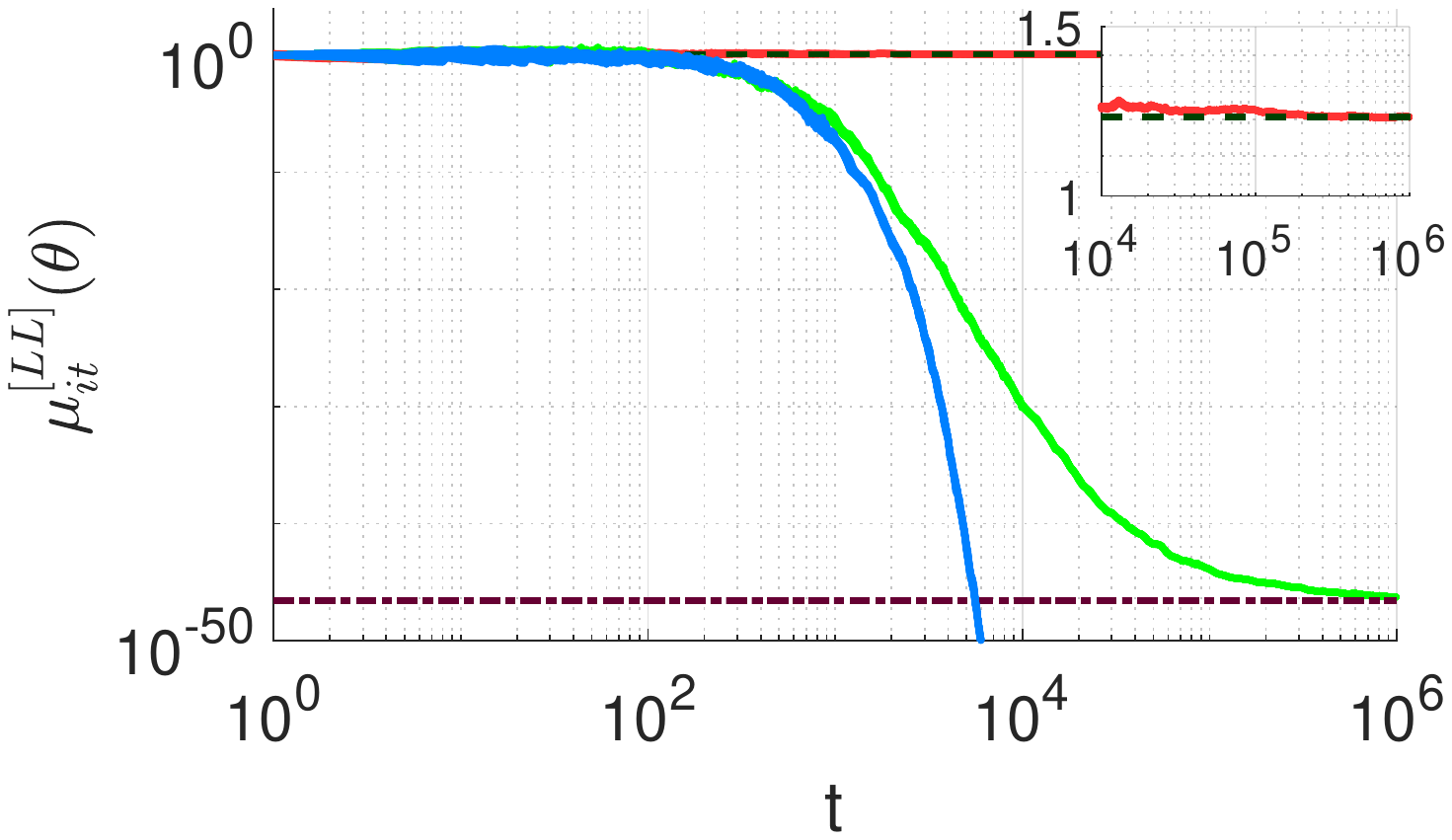}
		\label{fig:ll_t3}
	}%
	\subfigure[Log-linear $\theta_4$]{
	    \centering
		\includegraphics[width=0.24\textwidth]{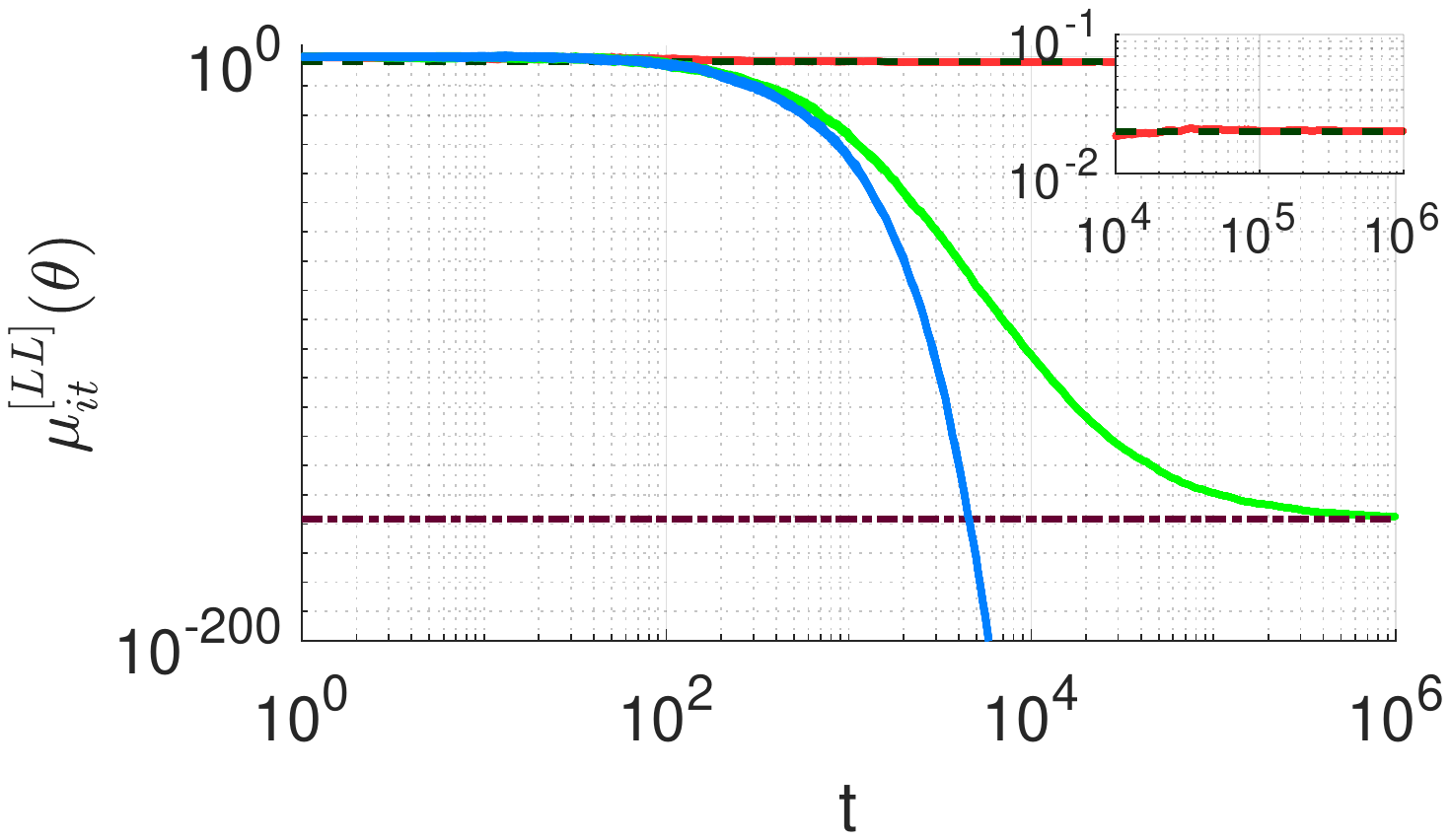}
		\label{fig:ll_t4}
	} \vspace{-6pt}
	
	\subfigure[DeGroot $\theta_1$]{
		\centering
		\includegraphics[width=0.24\textwidth]{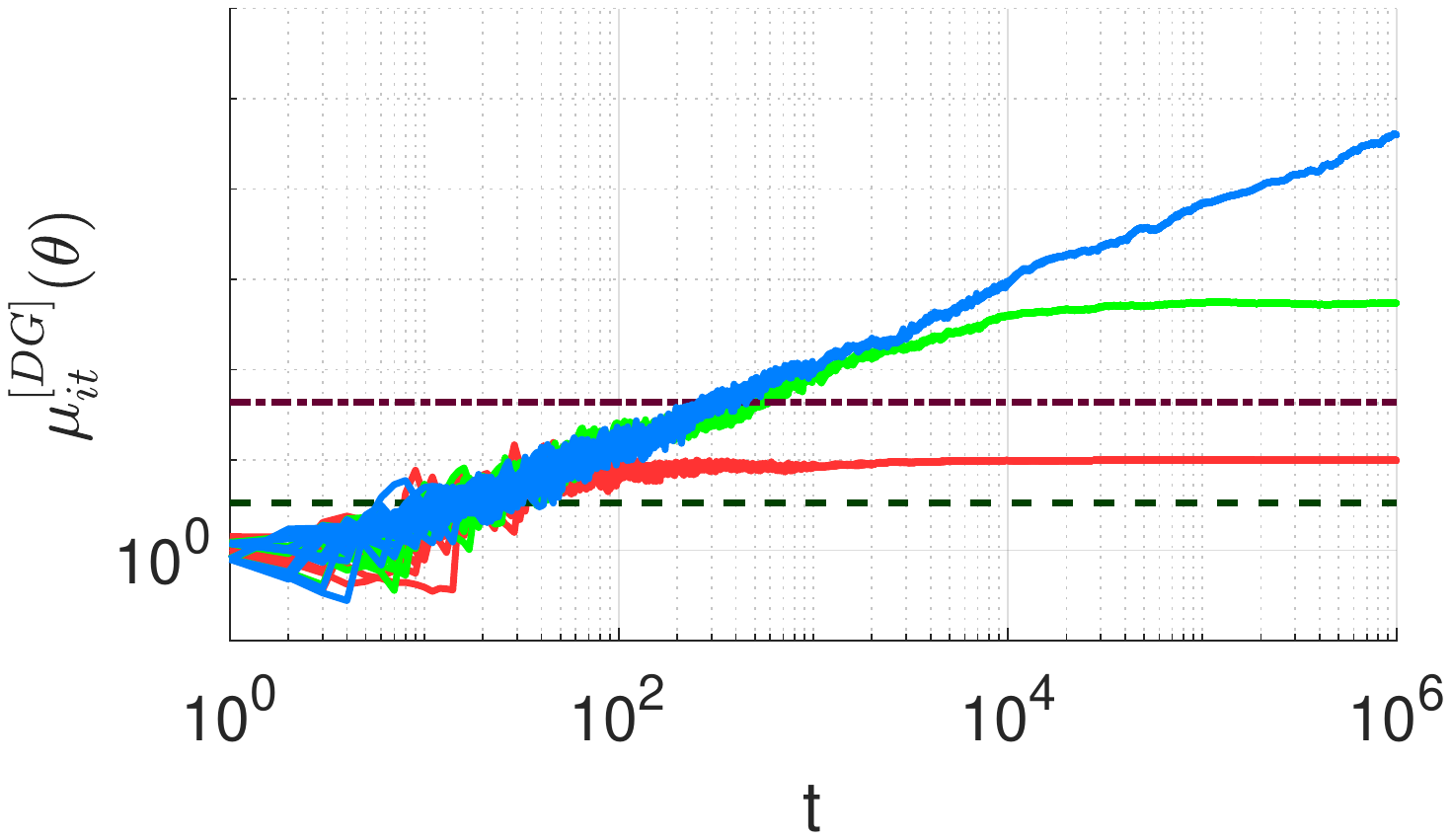}
		\label{fig:dg_t1}
	}%
	\subfigure[DeGroot $\theta_2$]{
		\centering
		\includegraphics[width=0.24\textwidth]{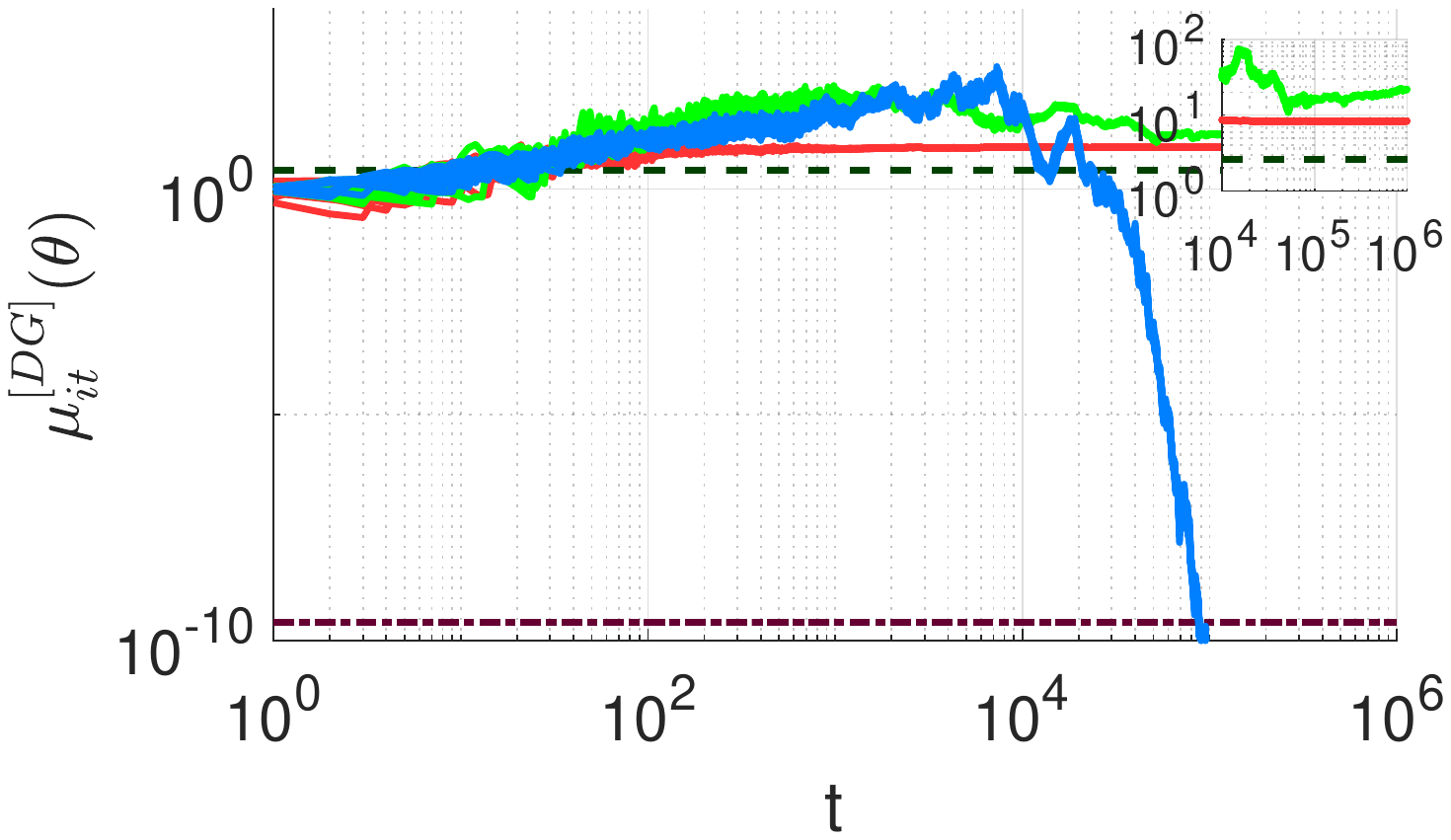}
		\label{fig:dg_t2}
	}%
	\subfigure[DeGroot $\theta_3$]{
	    \centering
		\includegraphics[width=0.24\textwidth]{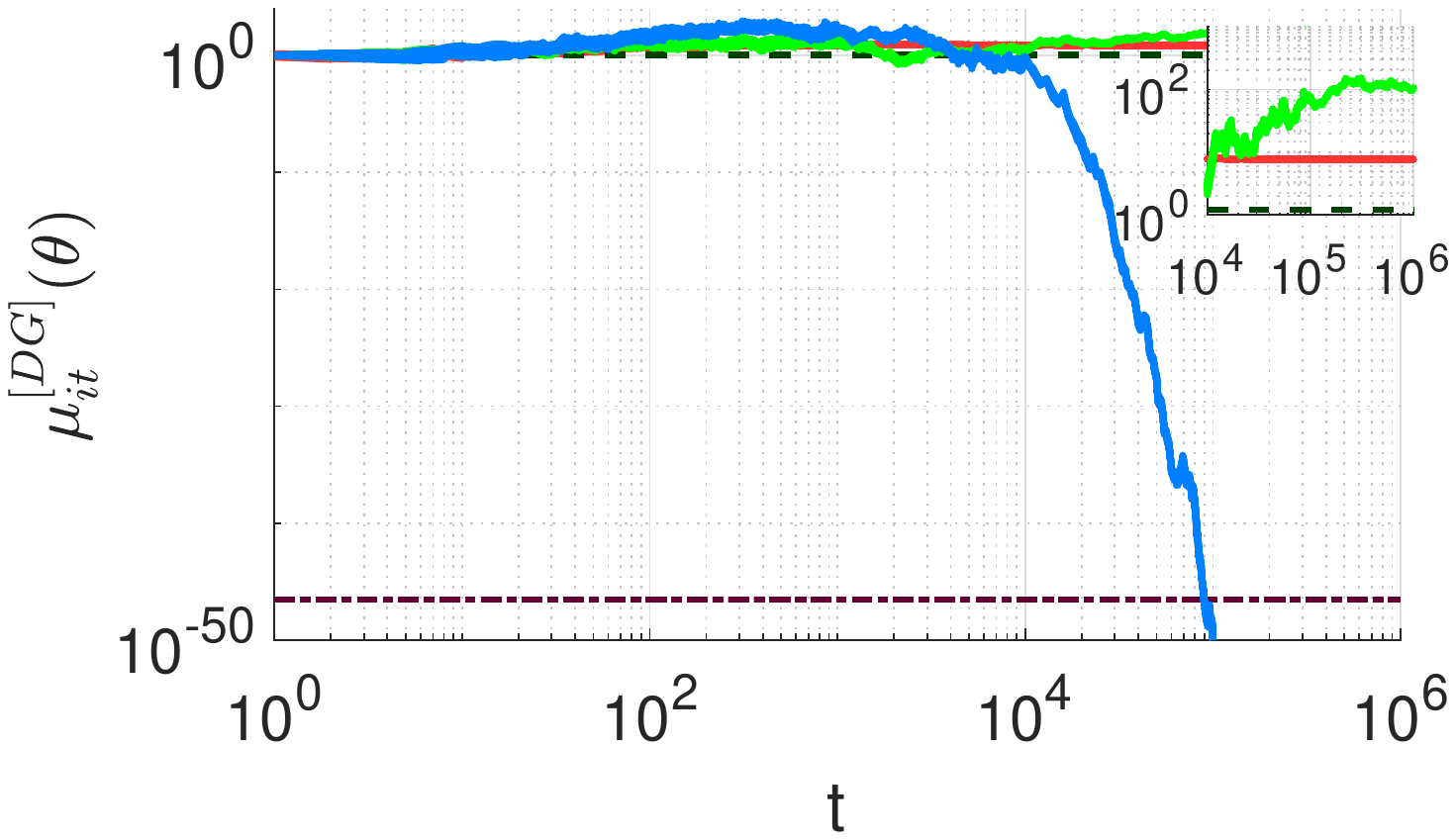}
		\label{fig:dg_t3}
	}%
	\subfigure[DeGroot $\theta_4$]{
	    \centering
		\includegraphics[width=0.24\textwidth]{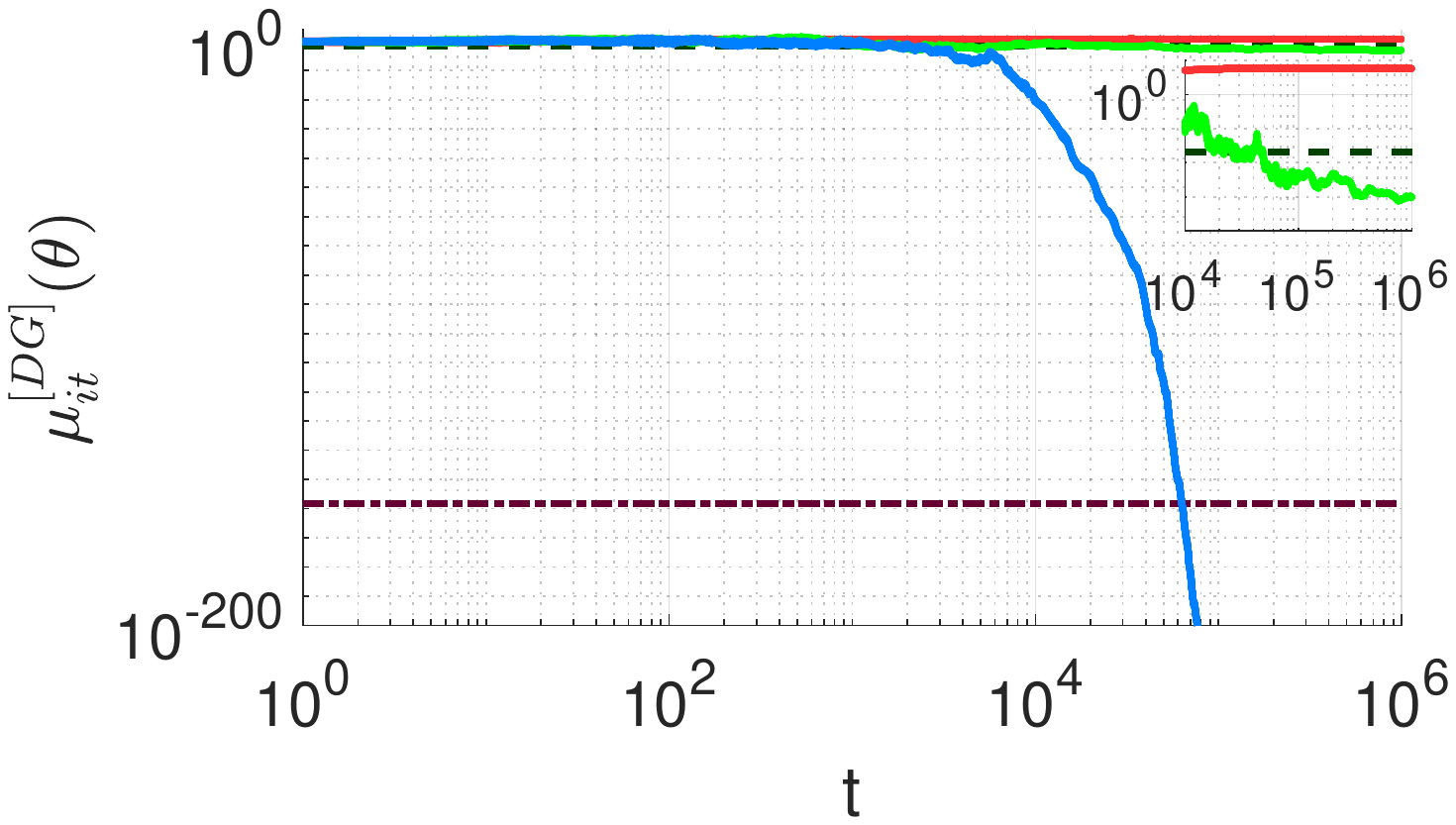}
		\label{fig:dg_t4}
	} \vspace{-6pt}
	
	\centering
	\subfigure{
	    \centering
		\includegraphics[width=0.7\textwidth]{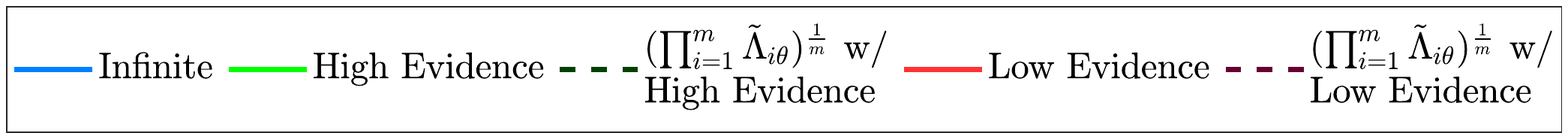}
	} \vspace{-6pt}
	\caption{Belief evolution of the Log-linear (\ref{eq:main_algo}) and DeGroot (\ref{eq:DG_algo}) update rules for hypotheses $\theta_1$, $\theta_2$, $\theta_3$, and $\theta_4$. }\label{fig:mu_graphs} \vspace{-15pt}
\end{figure*}

First, we present the agents' beliefs for both learning rules in Figure~\ref{fig:mu_graphs} for a single Monte Carlo run. These figures show that the amount of prior evidence directly effects the point of convergence of both learning rules. As the evidence increases, the point of convergence increases for $\theta_1=\theta^*$ and decreases for $\theta\ne\theta^*$. Additionally, the log-linear beliefs with finite evidence are converging to $(\prod_{j=1}^m \widetilde{\Lambda}_{j\theta})^\frac{1}{m}$, while the DeGroot beliefs are converging to something larger as stated in Theorem~\ref{thm:ULR_Con} and Lemma~\ref{thm:DG_finite} respectively. This indicates that we could select a threshold that allows for accurate inference with log-linear. However, this is not necessarily the case for the DeGroot model since the beliefs can converge to a value $>1$, as seen for $\theta_2$. The properties of the DeGroot learning rule requires further study as future work. 

Furthermore, these figures show that when the agents are certain, learning occurs as stated in Corollaries~\ref{lem:LL_Dog_Inf}, \ref{lem:LL_dog}, and \ref{cor:DG_dog} and Theorem \ref{thm:DG_dogmatic}. Additionally, we can see that the certain beliefs generated by the DeGroot rule decrease to $0$ at a slower rate than the log-linear beliefs as indicated in Lemma~\ref{thm:comp_rate}. 

\begin{table}
\centering
\caption{Maximum Error Statistics for the uncertain likelihood ratio. }
\begin{tabular}{cc|cc|}
 & & \multicolumn{2}{c|}{$e_{\Lambda_t}(\theta)$}   \\
 \multicolumn{2}{c|}{Time step} & $T=10^3$ & $T=10^6$  \\ \hline 
 \multirow{4}{*}{\rotatebox[origin=c]{90}{Low}} & $\theta_1$ & $2.27^{\diamond}$ & $0.045^{\diamond}$   \\
 & $\theta_2$ & $7.39^{\diamond}$ & $0.060^{\diamond}$     \\
 & $\theta_3$ & $11.15^{\diamond}$ & $0.086^{\diamond}$     \\
 & $\theta_4$ & $100.93^{\diamond}$ & $0.125^{\diamond}$     \\ \hline
  \multirow{4}{*}{\rotatebox[origin=c]{90}{High}} & $\theta_1$ & $267.91^{\diamond}$ & $0.557^{\diamond}$    \\
 & $\theta_2$ & 23.28 & 0.424     \\
 & $\theta_3$ & 7.64 & 1.2e-5    \\
 & $\theta_4$ & 2.2e-12 & 0.00*    \\ \hline
 \multirow{4}{*}{\rotatebox[origin=c]{90}{Infinite}} & $\theta_1$ & n/a$^{\diamond}$ & n/a$^{\diamond}$    \\
 & $\theta_2$ & 25.38 & 0.00*     \\
 & $\theta_3$ & 0.366 & 0.00*    \\
 & $\theta_4$ & 1.3e-16 & 0.00*    \\ 
 \multicolumn{4}{l}{\scriptsize $^{\diamond}$Values are normalized by $\widetilde{\Lambda}_{i\theta}$} \\
 \multicolumn{4}{l}{\scriptsize *Values are less than $10^{-16}$}
\end{tabular} \label{table:error_stats_ulr} \vspace{-15pt}
\end{table}

Next, we studied error statistics to validate the results presented in the previous sections, as seen in Tables~\ref{table:error_stats_ulr}, \ref{table:error_stats_ll}, and \ref{table:error_stats_dg}. First, we consider the maximum error between the uncertain likelihood ratio and the asymptotic uncertain likelihood ratio, i.e., $e_{\Lambda_t}(\theta)=\max_{i\in\mathcal{M},mc\in\{1,...,N\}} |\Lambda_{i\theta}(T,mc)-\widetilde{\Lambda}_{i\theta}(mc)|$, to empirically validate Lemma~\ref{lem:ULR_lim} as seen in Table~\ref{table:error_stats_ulr}. Note that we have normalized the values when the beliefs converge to a value greater than 1, while we do not normalize the values when the beliefs are converging to a value close to $0$ to avoid divide by 0 singularities. 

These results show that as time increases, the error decreases significantly, suggesting that the uncertain likelihood ratio is converging to $\widetilde{\Lambda}_{i\theta}$. Then, as the KL divergence and the amount of evidence increases, the error for hypotheses $\theta\ne\theta^*$ further decreases until the error is $<10^{-16}$, while the error slightly increases for hypotheses $\theta_1$. This is because $\widetilde{\Lambda}_{i\theta_1}$ increases which requires additional time steps for the uncertain likelihood ratio to reach the convergence point. Furthermore, we cannot compute the error for a certain likelihood ratio of $\theta_1$ since $\widetilde{\Lambda}_{i\theta_1}$ is diverging to infinity. However, the median ratio $\Lambda_{i\theta}(T=10^6)/\Lambda_{i\theta}(T=10^3) = 31.55$, indicating that the likelihood ratios are diverging to infinity. 

\begin{table}
\centering
\caption{Maximum Error Statistics for the Log-linear update rule. }
\begin{tabular}{cc|cc|cc|}
 & & \multicolumn{2}{c|}{$e_{\mu_t}^{con}(\theta)$} & \multicolumn{2}{c|}{$e_{\mu_t}^{cen}(\theta)$}  \\
 \multicolumn{2}{c|}{Time step} & $T=10^3$ & $T=10^6$ & $T=10^3$ & $T=10^6$ \\ \hline 
 \multirow{4}{*}{\rotatebox[origin=c]{90}{Low}} & $\theta_1$  & $0.072^{\diamond}$ & $6.1e-5^{\diamond}$ & $0.144^{\triangleright}$ & $3.9e-3^{\triangleright}$  \\
 & $\theta_2$  & $0.086^{\diamond}$ & $1.2e-4^{\diamond}$ & $0.267^{\triangleright}$ & $5.6e-3^{\triangleright}$    \\
 & $\theta_3$  & $0.132^{\diamond}$ & $1.1e-4^{\diamond}$ & $0.403^{\triangleright}$ & $8.8e-3^{\triangleright}$     \\
 & $\theta_4$  & $0.236^{\diamond}$ & $1.6e-4^{\diamond}$ & $1.001^{\triangleright}$ &  $1.7e-2^{\triangleright}$    \\ \hline
  \multirow{4}{*}{\rotatebox[origin=c]{90}{High}} & $\theta_1$ & $0.241^{\diamond}$ & $6.5e-4^{\diamond}$ & $0.802^{\triangleright}$ & $0.132^{\triangleright}$   \\
 & $\theta_2$  & 1.477 & 1.6e-11 & 0.239 & 2.1e-14    \\
 & $\theta_3$  & 6.3e-5 & 0.00* & 7.2e-7 & 0.00*    \\
 & $\theta_4$  & 0.00* & 0.00* & 0.00* & 0.00*   \\ \hline
 \multirow{4}{*}{\rotatebox[origin=c]{90}{Infinite}} & $\theta_1$  & $0.340^{\diamond}$ & $9.3e-3^{\diamond}$ & $n/a^{\triangleright}$ & $n/a^{\triangleright}$   \\
 & $\theta_2$  & 0.504 & 0 & 0.105 & 0    \\
 & $\theta_3$  & 8.7e-7 & 0 & 9.3e-8 & 0    \\
 & $\theta_4$  & 0.00* & 0 & 0.00* & 0    \\ 
 \multicolumn{6}{l}{\scriptsize{$^{\diamond}$Values are normalized by $\bar{\mu}_T(\theta)$}} \\
  \multicolumn{6}{l}{\scriptsize $^{\triangleright}$Values are normalized by $( \prod_{i=1}^m \widetilde{\Lambda}_{i\theta} )^{1/m}$} \\
 \multicolumn{6}{l}{\scriptsize *Values are less than $10^{-16}$}
\end{tabular} \label{table:error_stats_ll} \vspace{-15pt}
\end{table}

The second error statistic shows that the agents converge to a consensus belief, i.e., $e_{\mu_t}^{con}(\theta)=\max_{i\in\mathcal{M}, mc\in\{1,..,N\}} |\mu_{it}(\theta,mc)-\bar{\mu}_T(\theta,mc)|$, where $\bar{\mu}_T(\theta,mc)=\frac{1}{m}\sum_{j=1}^m \mu_{jt}(\theta,mc)$ is the average belief of the agents during the Monte Carlo run $mc$. These results are shown for the log-linear and DeGroot-style learning rules in Tables~\ref{table:error_stats_ll} and \ref{table:error_stats_dg} respectively. Similar to $e_{\Lambda_t}(\theta)$, we normalized the results where the beliefs converge to a value greater than $1$. These tables show that as the number of time steps increases, the error between the agents decreases significantly, thus suggesting that the agents are forming a consensus belief with both rules. Furthermore, it can be seen that the errors between the log-linear and DeGroot beliefs are similar, which suggests that the learning rules are correlated. 

\begin{table}
\centering
\caption{Maximum Error Statistics for the DeGroot-style update rule. }
\begin{tabular}{cc|cc|cc|}
  & & \multicolumn{2}{c|}{$e_{\mu_t}^{con}(\theta)$} & \multicolumn{2}{c|}{$e_{\mu_t}^{cen}(\theta)$}  \\
 \multicolumn{2}{c|}{Time step} & $T=10^3$ & $T=10^6$ & $T=10^3$ & $T=10^6$ \\ \hline 
 \multirow{4}{*}{\rotatebox[origin=c]{90}{Low}} & $\theta_1$  & $0.072^{\diamond}$ & $6.1e-5^{\diamond}$ & $5.497^{\triangleright}$ & $5.638^{\triangleright}$  \\
 & $\theta_2$  & $0.081^{\diamond}$ & $1.2e-4^{\diamond}$ & $5.492^{\triangleright}$ & $5.850^{\triangleright}$    \\
 & $\theta_3$  & $0.138^{\diamond}$ & $1.1e-4^{\diamond}$ & $27.69$ & $25.73$     \\
 & $\theta_4$  & $0.243^{\diamond}$ & $1.6e-4^{\diamond}$ & $25.73$ &  $25.68$    \\ \hline
  \multirow{4}{*}{\rotatebox[origin=c]{90}{High}} & $\theta_1$ & $0.266^{\diamond}$ & $6.5e-4^{\diamond}$ & $19.80^{\triangleright}$ & $111.52^{\triangleright}$   \\
 & $\theta_2$  & $0.751^{\diamond}$ & $4.7e-3^{\diamond}$ & 694.79 & 1.0e4    \\
 & $\theta_3$  & $1.761^{\diamond}$ & $9.4e-3^{\diamond}$ & 2.4e3 & 687.76    \\
 & $\theta_4$  & 132.43 & 1.8e-5 & 269.21 & 1.5e-3   \\ \hline
 \multirow{4}{*}{\rotatebox[origin=c]{90}{Infinite}} & $\theta_1$  & $0.371^{\diamond}$ & $9.3e-3^{\diamond}$ & $n/a^{\triangleright}$ & $n/a^{\triangleright}$   \\
 & $\theta_2$  & 765.33 & 0.00* & 2.0e3 & 0.00*    \\
 & $\theta_3$  & 1.75e3 & 0.00* & 3.8e3 & 0.00*    \\
 & $\theta_4$  & 36.91 & 0.00* & 74.41 & 0.00*    \\ 
 \multicolumn{6}{l}{\scriptsize{$^{\diamond}$Values are normalized by $\bar{\mu}_T(\theta)$}} \\
  \multicolumn{6}{l}{\scriptsize $^{\triangleright}$Values are normalized by $( \prod_{i=1}^m \widetilde{\Lambda}_{i\theta} )^{1/m}$} \\
 \multicolumn{6}{l}{\scriptsize *Values are less than $10^{-16}$}
\end{tabular} \label{table:error_stats_dg} \vspace{-15pt}
\end{table}

Finally, Tables~\ref{table:error_stats_ll} and \ref{table:error_stats_dg} show the error between the agents' beliefs and the centralized uncertain likelihood ratio, i.e.,  $e_{\mu_t}^{cen}(\theta) = \max_{i\in\mathcal{M}, mc\in\{1,...,N\}} |\mu_{it}(\theta,mc)-(\prod_{j=1}^m \widetilde{\Lambda}_{j\theta}(mc))^\frac{1}{m} |$, to empirically validate Theorem~\ref{thm:ULR_Con} and Lemma~\ref{thm:DG_finite}. Similar to the previous results, we have normalized the values where the beliefs converge to a value greater than $1$. The results for the log-linear rule indicate that the beliefs are converging to the centralized uncertain likelihood ratio, while the DeGroot beliefs are converging to a value much larger. When the agents are certain, both learning rules result in beliefs that are converging to $0$ for hypotheses $\theta\ne \theta^*$. Although we cannot evaluate this result for $\theta_1$, we can see that the median of the ratio of beliefs $\mu_{i10^6}(\theta_1)/\mu_{i10^3}(\theta_1)$ is $33.03$ and $880.90$ for log-linear and DeGroot respectively, indicating that the beliefs are diverging to infinity. 

\section{Conclusion} \label{sec:Conclusion}
This work presents the properties of uncertain models in non-Bayesian social learning theory where a group of agents are collaborating together to identify the unknown ground truth hypothesis. Uncertainty arises in many situations where an agent cannot acquire enough prior evidence about a hypothesis to develop precise statistical models. To accommodate for uncertainty, we derived an approximate statistical model for each hypothesis based on the partial information available to a single agent and studied the convergence properties of a group of agents that compute a belief for each hypothesis using a log-linear update rule. We found that when the agents are uncertain, the group forms a consensus belief, albeit different than traditional social beliefs. However, when the agents are certain, the beliefs generated using our uncertain models allow for learning and achieves results consistent with the literature.

We then found that agents can also learn in the certain condition with a DeGroot-style rule, but cannot quantify the convergence point in the uncertain condition. Furthermore, the beliefs generated using the DeGroot-style rule converge at a rate much slower than the log-linear rule. 

As a future work, we will study the effects of malicious agents where preliminary results are presented in \cite{HULJ2019}. Building on analysis of DeGroot-style rules, we will aim to quantify their convergence point as well as those of other aggregation rules. Additionally, we aim to understand how the uncertain likelihood ratio test trades off type I and II errors as a function of prior evidence. 

\ifCLASSOPTIONcaptionsoff
  \newpage
\fi

\bibliographystyle{IEEEtran}
\bibliography{SL_bib,all_refs}

\newpage

\appendices
\clearpage
\newpage
\end{document}